\newcommand{\rev}[1]{{#1}} 
\newcommand{\nop}[1]{}
\begin{document}
\begin{sloppy}

\copyrightyear{2018} 
\acmYear{2018} 
\setcopyright{acmcopyright}
\acmConference[KDD '18]{The 24th ACM SIGKDD International Conference on Knowledge Discovery \& Data Mining}{August 19--23, 2018}{London, United Kingdom}
\acmBooktitle{KDD '18: The 24th ACM SIGKDD International Conference on Knowledge Discovery \& Data Mining, August 19--23, 2018, London, United Kingdom}
\acmPrice{15.00}
\acmDOI{10.1145/3219819.3220063}
\acmISBN{978-1-4503-5552-0/18/08}

\fancyhead{}

\title{Exact and Consistent Interpretation for Piecewise Linear \\ Neural Networks: A Closed Form Solution}
\titlenote{This work was supported in part by the NSERC Discovery Grant program, the Canada Research Chair program, the NSERC Strategic Grant program. All opinions, findings, conclusions and recommendations in this paper are those of the authors and do not necessarily reflect the views of the funding agencies.}

\author{Lingyang Chu}
\affiliation{%
  \institution{Simon Fraser University}
  \city{Burnaby}
  \country{Canada}
}
\email{lca117@sfu.ca}

\author{Xia Hu}
\affiliation{%
  \institution{Simon Fraser University}
  \city{Burnaby}
  \country{Canada}
}
\email{huxiah@sfu.ca}

\author{Juhua Hu}
\affiliation{%
  \institution{Simon Fraser University}
  \city{Burnaby}
  \country{Canada}
}
\email{juhuah@sfu.ca}

\author{Lanjun Wang}
\affiliation{%
  \institution{Huawei Technology Co. Ltd}
  \city{Beijing}
  \country{China}
}
\email{lanjun.wang@huawei.com}

\author{Jian Pei	}
\affiliation{
  \institution{JD.com and Simon Fraser University}
  \city{Beijing/Burnaby} 
  \country{China/Canada}}
\email{jpei@cs.sfu.ca}

\nop{
Strong intelligent machines powered by deep neural networks have been widely used as a black box to make decisions in many real world applications.

have been widely used as black boxes to make important decisions in many real world applications.
The spiking impact of decision making machines gives rise to an urgent demand in clearly interpreting each machine-made decision.

}

\begin{abstract}
Strong intelligent machines powered by deep neural networks are increasingly deployed as black boxes to make decisions in risk-sensitive domains, such as finance and medical.
To reduce potential risk and build trust with users, it is critical to interpret how such machines make their decisions.
Existing works interpret a pre-trained neural network by analyzing hidden neurons, mimicking pre-trained models or approximating local predictions.
However, these methods do not provide a guarantee on the exactness and consistency of their interpretations.
In this paper, we propose an elegant closed form solution named $OpenBox$ to compute exact and consistent interpretations for the family of Piecewise Linear Neural Networks (PLNN).
The major idea is to first transform a PLNN into a mathematically equivalent set of linear classifiers, then interpret each linear classifier by the features that dominate its prediction.
We further apply $OpenBox$ to demonstrate the effectiveness of non-negative and sparse constraints on improving the interpretability of PLNNs.
The extensive experiments on both synthetic and real world data sets clearly demonstrate the exactness and consistency of our interpretation.
\end{abstract}

\keywords{Deep neural network; exact and consistent interpretation; closed form.}

\maketitle

\section{Introduction}
\label{sec:intro}

More and more machine learning systems are making significant decisions routinely in important domains, such as medical practice, autonomous driving, criminal justice, and military decision making~\cite{Goodfellow-et-al-2016}. As the impact of machine-made decisions increases,  the demand on clear interpretations of machine learning systems is growing ever stronger against the blind deployments of decision machines~\cite{goodman2016european}.
Accurately and reliably interpreting a machine learning model is the key to many significant tasks, such as identifying failure models~\cite{agrawal2016analyzing}, building trust with human users~\cite{ribeiro2016should}, discovering new knowledge~\cite{rather2017using}, and avoiding unfairness issues~\cite{zemel2013learning}.

The interpretation problem of machine learning models has been studied for decades.
Conventional models, such as Logistic Regression and Support Vector Machine, have all been well interpreted from both practical and theoretical perspectives~\cite{bishop2007pattern}.
Powerful non-negative and sparse constraints are also developed to enhance the interpretability of conventional models by sparse feature selection~\cite{lee2007efficient, hoyer2002non}.
However, due to the complex network structure of a deep neural network, the interpretation problem of modern deep models is yet a challenging field that awaits further exploration.

As to be reviewed in Section~\ref{sec:rw}, the existing studies interpret a deep neural network in three major ways.
The hidden neuron analysis methods~\cite{mahendran2015understanding, yosinski2015understanding, dosovitskiy2016inverting} analyze and visualize the features learned by the hidden neurons of a neural network;
the model mimicking methods~\cite{ba2014deep, che2015distilling, hinton2015distilling, bastani2017interpreting} build a transparent model to imitate the classification function of a deep neural network;
the local explanation methods~\cite{shrikumar2017learning, fong2017interpretable, sundararajan2017axiomatic, smilkov2017smoothgrad} study the predictions on local perturbations of an input instance, so as to provide decision features for interpretation.
All these methods gain useful insights into the mechanism of deep models. 
However, there is no guarantee that what they compute as an interpretation is truthfully the exact behavior of a deep neural network.
As demonstrated by Ghorbani~\cite{ghorbani2017interpretation}, most existing interpretation methods are inconsistent and fragile, because two perceptively indistinguishable instances with the same prediction result can be easily manipulated to have dramatically different interpretations.

\emph{
Can we compute an exact and consistent interpretation for a pre-trained deep neural network? 
} 
\nop{Explain the intuition of ``exact'' and ``consistent''.}
In this paper, we provide an affirmative answer, as well as an elegant closed form solution for the family of piecewise linear neural networks.
Here, a \textbf{piecewise linear neural network} (\textbf{PLNN})~\cite{harvey2017nearly} is a neural network that adopts a piecewise linear activation function, such as MaxOut~\cite{goodfellow2013maxout} and the family of ReLU~\cite{glorot2011deep, nair2010rectified,he2015delving}. The wide applications~\cite{lecun2015deep} and great practical successes~\cite{krizhevsky2012imagenet} of PLNNs call for exact and consistent interpretations on the overall behaviour of this type of neural networks.
We make the following technical contributions.

First, we prove that a PLNN is mathematically equivalent to a set of local linear classifiers, each of which being a linear classifier that classifies a group of instances within a convex polytope in the input space.
Second, we propose a method named $OpenBox$ to provide an exact interpretation of a PLNN by computing its equivalent set of local linear classifiers in closed form.
Third, we interpret the classification result of each instance by the decision features of its local linear classifier. Since all instances in the same convex polytope share the same local linear classifier, our interpretations are consistent per convex polytope.
Fourth, we also apply $OpenBox$ to study the effect of non-negative and sparse constraints on the interpretability of PLNNs. We find that a PLNN trained with these constraints selects meaningful features that dramatically improve the interpretability.
Last, we conduct extensive experiments on both synthetic and real-world data sets to verify the effectiveness of our method.

The rest of this paper is organized as follows. 
We review the related works in Section~\ref{sec:rw}.
We formulate the problem in Section~\ref{sec:prob} and present $OpenBox$ in Section~\ref{sec:obm}.
We report the experimental results in Section~\ref{sec:exp}, and conclude the paper in Section~\ref{sec:con}.

\nop{
propose $OpenBox$ to compute exact and consistent interpretations for piecewise linear neural networks in closed form.

we propose $OpenBox$ to compute exact and

\emph{Can we interpret the \textbf{exact} behavior of a PLNN?}
The answer is affirmative.
As proved later in Section xxx, the function $f^*$ of a PLNN is mathematically equivalent to a set of local linear classifiers. Here, a \textbf{local linear classifier} is a linear function $g: P \rightarrow R^{n_L}$ whose domain $P\subseteq R^d$ is a convex polytope.

More often than not, it is much easier for human minds to understand a set of local linear classifiers than the highly nonlinear function $f^*$.
Therefore, we can interpret the exact behavior of a PLNN by computing its equivalent set of local linear classifiers in closed form, which is exactly the goal of our interpretation task.

However, as proved later in Section xxx, the function $f^*$ of a piecewise linear neural network is mathematically equivalent to a set of local linear classifiers. Here, a \textbf{local linear classifier} is a linear function whose domain is a convex polytope in $R^d$.

The behavior of a set of local linear classifiers is much easier to understand than $f^*$.
Therefore, we can easily interpret the exact behavior of a piecewise linear neural network by computing its equivalent set of local linear classifiers in closed form.
}

\nop{
\section{Output Line}

\begin{itemize}
\item What are the motivating benefits of interpreting a deep model. Build trust, debug, discover knowledge, avoid ethic problems such as discrimination by data.

\item There is a trade off between interpretability and model complexity. Interpretation of machine learning classifiers has always exist since a very long time ago. Interpretation for classic models have achieved good performance, such as non-negative and sparse coding. Deep learning achieve high accuracy at the price of low interpretability, the need to interpret a model is never so urgent before.

\item There are existing works that do something 1,2,3. However, according to xxx, the current interpretation is fragile and can be manipulated.

\item What makes a good interpretation? Consistency and Exactness.

\item Mention ReLU or piecewise linear somewhere.

\item In this paper we did the following contributions.

\end{itemize}

\section{Buffer}

Nevertheless, there is always a trade-off between effectiveness and interpretability.
By building deeper network structures, model neural networks achieves a significant improvement in prediction performance, however, 

as the modern deep neural networks grow deeper and deeper, the task to interpret a deep neural network has become harder than ever before.

\textbf{
It has been a common belief, that simple models provide higher interpretability than complex ones. Linear models or basic decision trees still dominate in many applications for this reason. This belief is however challenged by recent work, in which carefully designed interpretation techniques have shed light on some of the most complex and deepest machine learning models [44, 55, 5, 37, 40].
}

A key question often asked of machine learning systems is ?Why did the system make this prediction?? We want models that are not just high-performing but also explain- able.

By understanding why a model does what it does, we can hope to improve the model (Amershi et al., 2015), dis- cover new science (Shrikumar et al., 2016), and provide end-users with explanations of actions that impact them (Goodman  Flaxman, 2016).
However, the best-performing models in many domains ? e.g., deep neural networks for image and speech recogni- tion (Krizhevsky et al., 2012) ? are complicated, black- box models whose predictions seem hard to explain. 

Work on interpreting these black-box models has focused on un- derstanding how a given model leads to particular predic- tions, e.g., by locally fitting a simpler model around the test point (Ribeiro et al., 2016) or by perturbing the test point to see how the prediction changes (Simonyan et al., 2013; Li
While these deep neural networks enable superior performance, their lack of decomposability into intuitive and understandable components makes them hard to interpret [25]. Consequently, when today?s intelligent systems fail, they fail spectacularly disgracefully, without warning or explanation, leaving a user staring at incoherent output, wondering why the system did what it did.

Deep neural networks have achieved near-human accuracy levels in various types of classification and prediction tasks including images, text, speech, and video data. However, the networks continue to be treated mostly as black-box function approximators, mapping a given input to a classification output. The next step in this human-machine evolutionary process ? incorporating these networks into mission critical processes such as medical diagnosis, planning and control ? requires a level of trust association with the machine output.

However, the notion of trust also depends on the visibility that a human has into the working of the machine. In other words, the neural network should provide human- understandable justifications for its output leading to insights about the inner workings. We call such models as interpretable deep networks.

In addition, the interpretation itself can be provided either in terms of the low- level network parameters, or in terms of input features used by the model.

As the algorithms become increasingly complex, explanations for why an algo- rithm makes certain decisions are ever more crucial.

Therefore having interpretations for why certain predictions are made is critical for establishing trust and transparency between the users and the algorithm (Lipton, 2016).

Research to address this tension is urgently needed; reliable explanations build trust with users, help identify points of model failure and remove barriers to entry for the deployment of deep neural networks in domains like health care, security and transportation.

In deep neural networks, data representation is delegated to the model and subsequently we cannot generally say in an informative way what led to a model prediction.

saliency methods aim to infer insights about the f (x) learnt by the model by ranking the explanatory power of constituent inputs. While unified in purpose, these methods are surprisingly divergent and non-overlapping in outcome

These difficulties are further compounded by the fact that deep neural nets can make reliable decisions by modeling a very large number of weak statistical regularities in the relationship between the inputs and outputs of the training data and there is nothing in the neural network to distinguish the weak regularities that are true properties of the data from the spurious regularities that are created by the sampling peculiarities of the training set. Faced with all these difficulties, it seems wise to abandon the idea of trying to understand how a deep neural network makes a classification decision by understanding what the individual hidden units do.
\todo{However, for a piecewise linear neural network, the small outputs of all neurons sums up to an overall linear classifier in a convex polytope, which is easy to interpret.}

Interpretability is a promising approach to address these challenges [38, 21]?in particular, we can help human users diagnose issues and verify correctness of machine learning models by providing insight into the model?s reasoning [43, 8, 37, 31, 29]. 

To this end, various techniques have been undertaken by re- searchers over the last few years to overcome the dilemma of blindly using the deep learning models. One technique is to rationalize/justify the decision of a model by training another deep model which comes up with explanations as to why the model behaved the way it did. Another emerg- ing perspective to such explainable methods is to probe the black-box models by trying to change the input intellectu- ally and analyzing the model?s response to it.
While there have been some promising early efforts in this area, these are cursory and the field of explainable deep learning has a long way to go, considering the difficulty and variety in the problem scope. Zhou et al. [20] showed that various layers of the CNN (Convolutional Neural Net- work) behave as unsupervised object detectors by a new technique called CAM (Class Activation Mapping). By us- ing a global average pooling [8] layer, and visualizing the weighted combination of the resulting feature maps at the penultimate (pre-softmax) layer, they were able to obtain heat maps that explain which parts of an input image were looked at by the CNN for assigning a label. However, this technique is constrained to only visualizing the last convo- lutional layer and also involves retraining a linear classifier for each class. Similar methods were examined with dif- ferent pooling layers such as global max pooling [10] and log-sum-exp pooling [11].

\textbf{what makes a good interpretation?}
}

\section{Related Works}
\label{sec:rw}
How to interpret the overall mechanism of deep neural networks is an emergent and challenging problem.

\subsection{Hidden Neuron Analysis Methods}
The hidden neuron analysis methods~\cite{mahendran2015understanding, yosinski2015understanding, dosovitskiy2016inverting} interpret a pre-trained deep neural network by visualizing, revert-mapping or labeling the features that are learned by the hidden neurons.

Yosinski~\textit{et al.}~\cite{yosinski2015understanding} visualized the live activations of the hidden neurons of a ConvNet, and proposed a regularized optimization to produce a qualitatively better visualization\nop{ for the features at each layer of a DNN}.
Erhan~\textit{et al.}~\cite{erhan2009visualizing} proposed an activation maximization method and a unit sampling method to visualize the features learned by hidden neurons.
\nop{Based on the goal of a neural network, }Cao~\textit{et~al.}~\cite{cao2015look} visualized a neural network's attention on its target objects by a feedback loop that infers the activation status of the hidden neurons.
Li~\textit{et al.}~\cite{li2015visualizing} visualized the compositionality of clauses by analyzing the outputs of hidden neurons in a neural model for Natural Language Processing.

To understand the features learned by the hidden neurons, Mahendran~\textit{et al.}~\cite{mahendran2015understanding} proposed a general framework that revert-maps the features learned from an image to reconstruct the image.
Dosovitskiy~\textit{et al.}~\cite{dosovitskiy2016inverting} performed the same task as Mahendran~\textit{et al.}~\cite{mahendran2015understanding} did by training an up-convolutional neural network.

Zhou~\textit{et al.}~\cite{zhou2017interpreting} interpreted a CNN by labeling each hidden neuron with a best aligned human-understandable semantic concept.
However, it is hard to get a golden dataset with accurate and complete labels of all human semantic concepts.

The hidden neuron analysis methods provide useful qualitative insights into the properties of each hidden neuron.
However, qualitatively analyzing every neuron does not provide much actionable and quantitative interpretation about the overall mechanism of the entire neural network~\cite{frosst2017distilling}.

\subsection{Model Mimicking Methods}
By imitating the classification function of a neural network, the model mimicking methods~\cite{ba2014deep, che2015distilling, hinton2015distilling, bastani2017interpreting} build a transparent model that is easy to interpret and achieves a high classification accuracy.

Ba~\textit{et al.}~\cite{ba2014deep} proposed a model compression method to train a shallow mimic network using the training instances labeled by one or more deep neural networks.
Hinton~\textit{et al.}~\cite{hinton2015distilling} proposed a distillation method that distills the knowledge of a large neural network by training a relatively smaller network to mimic the prediction probabilities of the original large network. 
To improve the interpretability of distilled knowledge, Frosst and Hinton~\cite{frosst2017distilling} extended the distillation method~\cite{hinton2015distilling} by training a soft decision tree to mimic the prediction probabilities of a deep neural network.

Che~\textit{et al.}~\cite{che2015distilling} proposed a mimic learning method to learn interpretable phenotype features.
Wu~\textit{et al.}~\cite{wu2017beyond} proposed a tree regularization method that uses a binary decision tree to mimic and regularize the classification function of a deep time-series model.
\rev{
Zhu~\textit{et al.}~\cite{zhu2017deep} built a transparent forest model on top of a deep feature embedding network, however it is still difficult to interpret the deep feature embedding network.
}

The mimic models built by model mimicking methods are much simpler to interpret than deep neural networks. 
However, due to the reduced model complexity of a mimic model, there is no guarantee that a deep neural network with a large VC-dimension~\cite{sontag1998vc, koiran1996neural, harvey2017nearly} can be successfully imitated by a simpler shallow model.
Thus, there is always a gap between the interpretation of a mimic model and the actual overall mechanism of the target deep neural network.

\subsection{Local Interpretation Methods}
The local interpretation methods~\cite{shrikumar2017learning, fong2017interpretable, sundararajan2017axiomatic, smilkov2017smoothgrad} compute and visualize the important features for an input instance by analyzing the predictions of its local perturbations.

Simonyan~\textit{et al.}~\cite{simonyan2013deep} generated a class-representative image and a class-saliency map for each class of images by computing the gradient of the class score with respect to an input image.
Ribeiro~\textit{et al.}~\cite{ribeiro2016should} proposed LIME to interpret the predictions of any classifier by learning an interpretable model in the local region around the input instance.

Zhou~\textit{et al.}~\cite{zhou2016learning} proposed CAM to identify discriminative image regions for each class of images using the global average pooling in CNNs.
Selvaraju~\textit{et al.}~\cite{selvaraju2016grad} generalized CAM~\cite{zhou2016learning} by Grad-CAM, which identifies important regions of an image by flowing class-specific gradients into the final convolutional layer of a CNN.

Koh~\textit{et al.}~\cite{koh2017understanding} used influence functions to trace a model's prediction and identify the training instances that are the most responsible for the prediction.

The local interpretation methods generate an insightful individual interpretation for each input instance. 
However, the interpretations for perspectively indistinguishable instances may not be consistent~\cite{ghorbani2017interpretation}, and can be manipulated by a simple transformation of the input instance without affecting the prediction result~\cite{kindermans2017reliability}.

\nop{Kindermans~\textit{et al.}~\cite{kindermans2017reliability} also proposed a simple method to purposefully manipulate and mislead the interpretations of a neural network.}

\nop{, adding a constant shift that does not affect the prediction very much can dramatically alter the interpretation of an input instance.}
\nop{to the input instances dramatically altered the interpretations, however, does not affect the prediction very much.}

\nop{These interpretations can also be dramatically altered by simply adding a constant shift to the input instances~\cite{kindermans2017reliability}.}

\nop{ 

identified the training data instances that are the most responsible for a given prediction by tracing the model's prediction through

Che~\textit{et al.}~\cite{che2015distilling} proposed an interpretable mimic learning method to learn interpretable phenotype features by mimicking the classification performance of a deep model.

Cao~\textit{et al.}~\cite{cao2015look} attempted to capture the visual attention of a neural network by adding a feedback loop to inter the activation status of hidden layer neurons according to the goal of the network.

trained an up-convolutional neural network to reconstruct an image from the features that are learned by a deep neural network, and provided several insights into the properties of the learned features.

To interpret the neural models for Natural Language Processing (NLP), Li~\textit{et al.}~\cite{li2015visualizing} interpreted the compositionality of clauses  by analyzing the 

 Li~\textit{et al.}~\cite{li2015visualizing} first visualized the compositionality of various types of clauses by plotting the output values of neurons

first visualized the compositionality of negation, intensification and concessive clausses.

 labeled each hidden neuron with a human understandable semantic concept by finding the best alignments between hidden neurons and semantic concepts.

used a binary decision tree to regularize the training of a deep time-series model, such that the 

To train a deep time-series model with high prediction accuracy, 

Bastani~\textit{et al.}~\cite{bastani2017interpreting} interprets a black box model by training a decision tree to approximate the model, they avoided overfitting by sampling new training points using the black box model.

} 

\section{Problem Definition}
\label{sec:prob}


\nop{Examples of PLNN? Why is PLNN important and why do we use PLNN as the subject in this paper?}

For a PLNN $\mathcal{N}$ that contains $L$ layers of neurons, we write the $l$-th layer of $\mathcal{N}$ as $\mathcal{L}_l$. 
Hence, $\mathcal{L}_1$ is the \textbf{input layer}, $\mathcal{L}_L$ is the \textbf{output layer}, and the other layers $\mathcal{L}_l$, $l\in\{2, \ldots, L-1\}$ are \textbf{hidden layers}.
A neuron in a hidden layer is called a \textbf{hidden neuron}.
Let $n_l$ represent the number of neurons in $\mathcal{L}_l$, the total number of hidden neurons in $\mathcal{N}$ is computed by $N=\sum_{l=2}^{L-1} n_l$.

Denote by $\mathbf{u}_i^{(l)}$ the $i$-th neuron in $\mathcal{L}_l$, by $\mathbf{b}^{(l-1)}_i$ its bias, by $\mathbf{a}^{(l)}_i$ its output, and by $\mathbf{z}^{(l)}_i$ the total weighted sum of its inputs. 
For all the $n_l$ neurons in $\mathcal{L}_l$, we write their biases as a vector $\mathbf{b}^{(l-1)} = [\mathbf{b}^{(l-1)}_1, \ldots, \mathbf{b}^{(l-1)}_{n_l}]^\top$, their outputs as a vector $\mathbf{a}^{(l)} = [\mathbf{a}^{(l)}_1, \ldots, \mathbf{a}^{(l)}_{n_l}]^\top$, and their inputs as a vector $\mathbf{z}^{(l)} = [\mathbf{z}^{(l)}_1, \ldots, \mathbf{z}^{(l)}_{n_l}]^\top$.

Neurons in successive layers are connected by weighted edges.
Denote by $W^{(l)}_{ij}$ the weight of the edge between the $i$-th neuron in $\mathcal{L}_{l+1}$ and the $j$-th neuron in $\mathcal{L}_{l}$, that is, $W^{(l)}$ is an $n_{l+1}$-by-$n_l$ matrix.
For $l\in\{1, \ldots, L-1\}$, we compute $\mathbf{z}^{(l+1)}$ by
\begin{equation}
\label{eqn:nextinput}
	\mathbf{z}^{(l+1)} = W^{(l)}\mathbf{a}^{(l)} + \mathbf{b}^{(l)}
\end{equation}

Denote by $f: \mathbb{R} \rightarrow \mathbb{R}$ the piecewise linear activation function for each neuron in the hidden layers of $\mathcal{N}$. We have $\mathbf{a}^{(l)}_i = f(\mathbf{z}^{(l)}_i)$ for all $l\in\{2, \ldots, L-1\}$.
We extend $f$ to apply to vectors in an element-wise fashion, such that $f(\mathbf{z}^{(l)}) = [f(\mathbf{z}^{(l)}_1), \ldots, f(\mathbf{z}^{(l)}_{n_l})]^\top$. 
Then, we compute $\mathbf{a}^{(l)}$ for all $l\in\{2, \ldots, L-1\}$ by 
\begin{equation}
\label{eqn:nextoutput}
	\mathbf{a}^{(l)} = f(\mathbf{z}^{(l)})
\end{equation}

An \textbf{input instance} of $\mathcal{N}$ is denoted by $\mathbf{x}\in \mathcal{X}$, where $\mathcal{X} \subseteq \mathbb{R}^d$ is a $d$-dimensional input space. $\mathbf{x}$ is also called an \textbf{instance} for short. 

Denote by $\mathbf{x}_i$ the $i$-th dimension of $\mathbf{x}$.
The input layer $\mathcal{L}_1$ contains $n_1 = d$ neurons, where $\mathbf{a}^{(1)}_i = \mathbf{x}_i$ for all $i\in\{1, \ldots, d\}$. 

\nop{That is, $\mathbf{a}^{(1)} = \mathbf{x}$.}

The \textbf{output} of $\mathcal{N}$ is $\mathbf{a}^{(L)}\in \mathcal{Y}$, where $\mathcal{Y}\subseteq \mathbb{R}^{n_L}$ is an $n_L$-dimensional output space.
The output layer $\mathcal{L}_L$ adopts the \emph{softmax} function to compute the output by $\mathbf{a}^{(L)} = \text{\emph{softmax}}(\mathbf{z}^{(L)})$.

A PLNN works as a \textbf{classification function} $F: \mathcal{X} \rightarrow \mathcal{Y}$ that maps an input $\mathbf{x}\in \mathcal{X}$ to an output $\mathbf{a}^{(L)}\in \mathcal{Y}$.
It is widely known that $F(\cdot)$ is a piecewise linear function~\cite{pascanu2013number, montufar2014number}.
However, due to the complex network of a PLNN, the overall behaviour of $F(\cdot)$ is hard to understand. Thus, a PLNN is usually regarded as a black box.

How to interpret the overall behavior of a PLNN in a human-understandable manner is an interesting problem that has attracted much attention in recent years.

\nop{Define what an interpretation is.}

Following a principled approach of interpreting a machine learning model~\cite{bishop2007pattern}, we regard an \textbf{interpretation} of a PLNN $\mathcal{N}$ as the decision features that define the decision boundary of $\mathcal{N}$.
We call a model \textbf{interpretable} if it explicitly provides its interpretation (i.e., decision features) in closed form.

\begin{definition}
\label{prob:interpret}
Given a fixed PLNN $\mathcal{N}$ with constant structure and parameters, our task is to interpret the overall behaviour of $\mathcal{N}$ by computing an interpretable model $\mathcal{M}$ that satisfies the following requirements.
\begin{itemize}
\item \textbf{Exactness:} $\mathcal{M}$ is mathematically equivalent to $\mathcal{N}$ such that the interpretations provided by $\mathcal{M}$ truthfully describe the exact behaviour of $\mathcal{N}$.
\item \textbf{Consistency:} $\mathcal{M}$ provides similar interpretations for classification of similar instances.
\end{itemize}
\end{definition}

\nop{
compute an interpretation model $\mathcal{M}$ to interpret the overall behaviour of $\mathcal{N}$, such that the interpretation computed by $
interpret the overall behaviour of $\mathcal{N}$ by computing an interpretation model $\mathcal{M}$ that satisfies the following properties.
 $F(\cdot)$ in a closed form, such that the overall behavior of $\mathcal{N}$ is exactly and consistently interpreted.
\todo{Mathematically, what is ``exactly and consistently interpreted''?}
}

\nop{
\begin{definition}
\label{prob:interpret}
Given a fixed PLNN $\mathcal{N}$ with constant structure and parameters, our task is to obtain the exact interpretation of $F$ by computing an equivalent piecewise linear function $\hat{F}: \mathcal{X} \rightarrow \mathcal{Y}$ that satisfies the following requirements.
\begin{enumerate}
	\item \textbf{Exactness}: $\forall \mathbf{x}\in\mathcal{X}, F(\mathbf{x}) = \hat{F}(\mathbf{x})$.
	\item \textbf{Consistent Interpretability}: $\mathcal{X}$ can be partitioned into a collection of convex polytopes, such that $\hat{F}(\mathbf{x})$ is a linear function in closed form on each of the convex polytopes.
	\nop{\item \textbf{Consistency}: $\mathcal{X}$ can be partitioned into a collection of disjoint subsets, such that the instances in the same subset share the same linear function.}
\end{enumerate}
\end{definition}

The requirement of exactness ensures that the interpretation computed by $\hat{F}$ is exactly the behavior of $\mathcal{N}$.

The consistent interpretability implies two properties. 
First, since there are existing methods to interpret the linear classifiers that are trained in closed form by conventional methods, such as Logistic Regression and Linear SVM, requiring $\hat{F}$ to be a set of linear function in closed form makes it easy to extract human understandable interpretation. 
Second, since $\hat{F}$ applies the same linear function on the same convex polytope, the interpretation extracted from $\hat{F}$ is group-wise consistent for all instances in the same convex polytope.
}

Table~\ref{Table:notations} summarizes a list of frequently used notations.

\newcommand{\NotationTWidth}{63mm}
\begin{table}[t]
\centering\small
\caption{Frequently used notations.}
\label{Table:notations}
\begin{tabular}{|c|p{\NotationTWidth}|}
\hline
\makecell[c]{Notation}       &      \makecell[c]{Description} \\ \hline

\nop{
$\mathcal{N}$
&   The pre-trained fixed PLNN. \\ \hline

$\mathcal{L}_l$
&   The $l$-th layer of the PLNN $\mathcal{N}$. \\ \hline
}

$\mathbf{u}_i^{(l)}$
&   The $i$-th neuron in layer $\mathcal{L}_l$. \\ \hline

$n_l$
&   The number of neurons in layer $\mathcal{L}_l$. \\ \hline

$N$
&   The total number of hidden neurons in $\mathcal{N}$. \\ \hline

\nop{
$\mathbf{b}_i^{(l-1)}$
&   The bias associated with the $i$-th neuron in layer $\mathcal{L}_l$. \\ \hline

$\mathbf{a}_i^{(l)}$
&   The output of the $i$-th neuron in layer $\mathcal{L}_l$. \\ \hline
}

$\mathbf{z}_i^{(l)}$
&   The input of the $i$-th neuron in layer $\mathcal{L}_l$. \\ \hline

$\mathbf{c}^{(l)}_i$
&   The configuration of the $i$-th neuron in layer $\mathcal{L}_l$. \\ \hline

$\mathbf{C}_h$
&   The $h$-th configuration of the PLNN $\mathcal{N}$. \\ \hline

$P_h$
&   The $h$-th convex polytope determined by $\mathbf{C}_h$. \\ \hline

$F_h(\cdot)$
&   The $h$-th linear classifier that is determined by $\mathbf{C}_h$. \\ \hline

$Q_h$
&   The set of linear inequalities that define $P_h$. \\ \hline

\nop{
$\mathbb{R}^d$
&   The $d$-dimensional Euclidean space. \\ \hline
}

\end{tabular}
\end{table}

\section{The OpenBox Method}
\label{sec:obm}
In this section, we describe the  $OpenBox$ method, which produces an exact and consistent interpretation of a PLNN by computing an interpretation model $\mathcal{M}$ in a piecewise linear closed form.

\nop{its $F(\cdot)$ in an understandable closed form.}

We first define the configuration of a PLNN $\mathcal{N}$, which specifies the activation status of each hidden neuron in $\mathcal{N}$. 
Then, we illustrate how to interpret the classification result of a fixed instance. 
Last, we illustrate how to interpret the overall behavior of $\mathcal{N}$ by computing an interpretation model $\mathcal{M}$ that is mathematically equivalent to $\mathcal{N}$.

\nop{At last, we introduce a non-negative and sparse PLNN that achieves a better interpretation by adding non-negative and sparse constraints on the edge weights of $\mathcal{N}$.}

\subsection{The Configuration of a PLNN}
For a hidden neuron $\mathbf{u}_i^{(l)}$, the piecewise linear activation function $f(\mathbf{z}_i^{(l)})$ is in the following form.
\begin{equation}
\label{eqn:plf}
f(\mathbf{z}_i^{(l)}) = 
    \left \{
        \begin{aligned}
            & r_1 \mathbf{z}_i^{(l)} + t_1, \quad \text{if } \mathbf{z}_i^{(l)} \in I_1 \cr
            & r_2 \mathbf{z}_i^{(l)} + t_2, \quad \text{if } \mathbf{z}_i^{(l)} \in I_2 \cr
            & \quad \quad \quad \vdots \cr
            & r_k \mathbf{z}_i^{(l)} + t_k, \quad \text{if } \mathbf{z}_i^{(l)} \in I_k \cr
        \end{aligned}
    \right.
\end{equation}
where $k\geq 1$ is a constant integer, $f(\mathbf{z}_i^{(l)})$ consists of $k$ linear functions, $\{r_1, \ldots, r_k\}$ are constant \textbf{slopes}, $\{t_1, \ldots, t_k\}$ are constant \textbf{intercepts}, and $\{I_1, \ldots, I_k\}$ is a collection of constant \textbf{real intervals} that partition $\mathbb{R}$.

Given a fixed PLNN $\mathcal{N}$, an instance $\mathbf{x}\in\mathcal{X}$ determines the value of $\mathbf{z}_i^{(l)}$, and further determines a linear function in $f(\mathbf{z}_i^{(l)})$ to apply.
According to which linear function in $f(\mathbf{z}_i^{(l)})$ is applied, we encode the activation status of each hidden neuron by $k$ \textbf{states}, each of which uniquely corresponds to one of the $k$ linear functions of $f(\mathbf{z}_i^{(l)})$.
Denote by $\mathbf{c}^{(l)}_i \in \{1, \ldots, k\}$ the state of $\mathbf{u}_i^{(l)}$, we have $\mathbf{z}_i^{(l)}\in I_q$ if and only if $\mathbf{c}_i^{(l)} = q$ ($q\in\{1,\ldots,k\}$). 
Since the inputs $\mathbf{z}_i^{(l)}$'s are different from neuron to neuron, the states of different hidden neurons may differ from each other.

Denote by a vector $\mathbf{c}^{(l)}=[\mathbf{c}^{(l)}_1, \ldots, \mathbf{c}^{(l)}_{n_l}]$ the states of all hidden neurons in $\mathcal{L}_l$.
The \textbf{configuration} of $\mathcal{N}$ is an $N$-dimensional vector, denoted by $\mathbf{C} = [\mathbf{c}^{(2)}, \ldots, \mathbf{c}^{(L-1)}]$, which specifies the states of all hidden neurons in $\mathcal{N}$.

The configuration $\mathbf{C}$ of a fixed PLNN is uniquely determined by the instance $\mathbf{x}$. We write the function that maps an instance $\mathbf{x}\in\mathcal{X}$ to a configuration $\mathbf{C}\in\{1, \ldots, k\}^N$ as $\text{\emph{conf}}: \mathcal{X} \rightarrow \{1,\ldots,k\}^N$.

For a neuron $\mathbf{u}_i^{(l)}$, denote by variables $\mathbf{r}^{(l)}_i$ and $\mathbf{t}^{(l)}_i$ the slope and intercept, respectively, of the linear function that corresponds to the state $\mathbf{c}_i^{(l)}$.
$\mathbf{r}^{(l)}_i$ and $\mathbf{t}^{(l)}_i$ are uniquely determined by $\mathbf{c}_i^{(l)}$, such that $\mathbf{r}_i^{(l)} = r_q$ and $\mathbf{t}_i^{(l)} = t_q$, if and only if $\mathbf{c}_i^{(l)}=q$ ($q\in \{1,\ldots,k\}$).

For all hidden neurons in $\mathcal{L}_l$, we write the variables of slopes and intercepts as $\mathbf{r}^{(l)} = [\mathbf{r}_1^{(l)}, \ldots, \mathbf{r}_{n_l}^{(l)}]^\top$ and $\mathbf{t}^{(l)} = [\mathbf{t}_1^{(l)}, \ldots, \mathbf{t}_{n_l}^{(l)}]^\top$, respectively.
Then, we rewrite the activation function for all neurons in a hidden layer $\mathcal{L}_l$ as 
\begin{equation}
\label{eqn:plfsimple}
	f(\mathbf{z}^{(l)}) = \mathbf{r}^{(l)} \circ \mathbf{z}^{(l)} + \mathbf{t}^{(l)}
\end{equation}
where $\mathbf{r}^{(l)} \circ \mathbf{z}^{(l)}$ is the Hadamard product between $\mathbf{r}^{(l)}$ and $\mathbf{z}^{(l)}$.

Next, we interpret the classification result of a fixed instance.

\nop{

Represent the slope and intercept used by $f(\mathbf{z}_i^{(l)})$ by the variables $\mathbf{r}^{(l)}_i$ and $\mathbf{t}^{(l)}_i$, respectively.
Then, $\mathbf{r}_i^{(l)} = r_j$ and $\mathbf{t}_i^{(l)} = t_j$, if and only if $\mathbf{c}_i^{(l)}=j$.

Clearly, $f$ consists of $k$ linear functions on $k$ subdomains.
For the $j$-th linear function, the slope $r_j \in R$ and the intercept $t_j\in R$ are constant parameters, and $C_j \subseteq R$ is the $j$-th \textbf{subdomain} of $f$.

$\mathbf{c}^{(l)}\in\{1, \ldots, k\}^{n_l}, l \in \{2, \ldots, L-1\}$ the configuration of hidden layer $\mathcal{L}_l$.

$\mathbf{c} = [\mathbf{c}^{(2)}, \ldots, \mathbf{c}^{(L-1)}]$ is the overall configuration of all neurons in the hidden layers of $\mathcal{N}$. That is, $\mathbf{c}\in\{1,\ldots,k\}^{N}$, where $N=\sum_{l=2}^{L-1} n_{l}$ is the total number of neurons in the hidden layers of $\mathcal{N}$.

 is the overall configuration of all neurons in the hidden layers of $\mathcal{N}$.
 
 $\mathbf{c}_i^{(l)}$ is the configuration of the $i$-th neuron in hidden layer $\mathcal{L}_l$.
 
 Denote by $\mathbf{r}^{(l)}_i$ and $\mathbf{t}^{(l)}_i$ the slope and intercept that are used on the $i$-th neuron in layer $\mathcal{L}_l$.

$\forall j\in\{1, \ldots, k\}, \mathbf{r}_i^{(l)} = r_j$ and $\mathbf{t}_i^{(l)} = t_j$, if and only if $\mathbf{c}_i^{(l)}=j$, if and only if $\mathbf{z}_i^{(l)} \in C_j$.

$\mathbf{r}^{(l)} = [\mathbf{r}_1^{(l)}, \ldots, \mathbf{r}_{n_l}^{(l)}]$, $\mathbf{t}^{(l)} = [\mathbf{t}_1^{(l)}, \ldots, \mathbf{t}_{n_l}^{(l)}]$.

}

\nop{ 
In this subsection, we explain the classification result of an instance $\mathbf{x}\in\mathcal{X}$ by deriving the closed form of $F$ that maps $\mathbf{x}\in\mathcal{X}$ to its classification result $F(\mathbf{x})$.

a linear function $g: \mathcal{X} \rightarrow \mathcal{Y}$ in closed form, such that $g$ is mathematically equivalent to $F$ with respect to instance $\mathbf{x}$.

$\forall l\in \{2, \ldots, L-1\}, \mathbf{a}^{(l)} = f(\mathbf{z}^{(l)}) = \mathbf{r}^{(l)} \circ \mathbf{z}^{(l)} + \mathbf{t}^{(l)}$.

\begin{equation}
\begin{aligned}
& \forall l \in \{2, \ldots, L-1\}, \mathbf{z}^{(l+1)} = W^{(l)} (\mathbf{r}^{(l)} \circ \mathbf{z}^{(l)} + \mathbf{t}^{(l)}) + \mathbf{b}^{(l)} \cr
& = (W^{(l)} \circ \mathbf{r}^{(l)}) \mathbf{z}^{(l)} + W^{(l)} \mathbf{t}^{(l)} + \mathbf{b}^{(l)}
\end{aligned}
\end{equation}

\begin{equation}
	\mathbf{z}^{(l+1)} = \prod_{h=2}^l A^{(h)} \mathbf{z}^{(2)} + W^{(l)}\mathbf{t}^{(l)} + \mathbf{b}^{(l)} + \sum_{h=2}^{l-1}\prod_{q=h+1}^l A^{(q)}(W^{(h)}\mathbf{t}^{(h)} + \mathbf{b}^{(h)})
\end{equation}
where $\tilde{\mathbf{b}}$ is the sum of all terms that are not related to $\mathbf{z}^{(2)}$.

$\forall l \in \{2, \ldots, L-1\}, \mathbf{z}^{(l+1)} = \prod_{h=2}^l (W^{(l)} \circ \mathbf{r}^{(l)}) \mathbf{z}^{(2)} + \mathbf{b}^{\{2,\ldots,l\}}$

$\forall l \in \{2, \ldots, L-1\}, \mathbf{z}^{(l+1)} = \prod_{h=2}^l (W^{(h)} \circ \mathbf{r}^{(h)}) W^{(1)} \mathbf{x} + \mathbf{b}^{\{1,\ldots,l\}}$

$\forall l \in \{2, \ldots, L-1\}, \mathbf{z}^{(l+1)} = W^{\{1,\ldots, l\}} \mathbf{x} + \mathbf{b}^{\{1, \ldots, l\}}$, where $W^{\{1, \ldots, l\}} = \prod_{h=2}^l (W^{(h)} \circ \mathbf{r}^{(h)}) W^{(1)}$.

, we can write $\mathbf{z}^{(l+1)}$ as the following linear function with respect to $\mathbf{x}$.
\begin{equation}
\label{eqn:simpleform}
	\mathbf{z}^{(l+1)} = \hat{W}^{(l)} \mathbf{x} + \hat{\mathbf{b}}^{(l)} 
\end{equation}

} 

\subsection{Exact Interpretation for the Classification Result of a Fixed Instance}
\label{sec:eecrfi}
Given a fixed PLNN $\mathcal{N}$, we interpret the classification result of a fixed instance $\mathbf{x}\in\mathcal{X}$ by deriving the closed form of $F(\mathbf{x})$ as follows.

Following Equations~\ref{eqn:nextoutput} and~\ref{eqn:plfsimple}, we have, for all $l\in\{2, \ldots, L-1\}$ 
\nop{[For all $l\in\{2, \ldots, L-1\}$, it follows Equation~\ref{eqn:nextoutput} and Equation~\ref{eqn:plfsimple} that]}
\begin{equation}
\nonumber
	\mathbf{a}^{(l)} = f(\mathbf{z}^{(l)}) = \mathbf{r}^{(l)} \circ \mathbf{z}^{(l)} + \mathbf{t}^{(l)}
\end{equation}

By plugging $\mathbf{a}^{(l)}$ into Equation~\ref{eqn:nextinput}, we rewrite $\mathbf{z}^{(l+1)}$ as
\begin{equation}
\label{eqn:genterm}
\begin{aligned}
	\mathbf{z}^{(l+1)} 
	\nop{& = W^{(l)} \mathbf{a}^{(l)} + \mathbf{b}^{(l)} \\}
	& = W^{(l)} (\mathbf{r}^{(l)} \circ \mathbf{z}^{(l)} + \mathbf{t}^{(l)}) + \mathbf{b}^{(l)} 
	\nop{& = (W^{(l)} \circ \mathbf{r}^{(l)}) \mathbf{z}^{(l)} + W^{(l)} \mathbf{t}^{(l)} + \mathbf{b}^{(l)} \\}
	 = \tilde{W}^{(l)} \mathbf{z}^{(l)} + \tilde{\mathbf{b}}^{(l)} \\
\end{aligned}
\end{equation}
where $\tilde{\mathbf{b}}^{(l)} = W^{(l)} \mathbf{t}^{(l)} + \mathbf{b}^{(l)}$, and $\tilde{W}^{(l)} = W^{(l)} \circ \mathbf{r}^{(l)}$ is an extended version of Hadamard product, such that the entry at the $i$-th row and $j$-th column of $\tilde{W}^{(l)}$ is $\tilde{W}^{(l)}_{ij} = W_{ij}^{(l)} \mathbf{r}_{j}^{(l)}$.

By iteratively plugging Equation~\ref{eqn:genterm} into itself, we can write $\mathbf{z}^{(l+1)}$ for all $l \in \{2, \ldots, L-1\}$ as
\begin{equation}
\nonumber
	\mathbf{z}^{(l+1)} = \prod_{h=0}^{l-2} \tilde{W}^{(l-h)} \mathbf{z}^{(2)} + \sum_{h=2}^{l}\prod_{q=0}^{l-h-1} \tilde{W}^{(l-q)}\tilde{\mathbf{b}}^{(h)}
\end{equation}

By plugging $\mathbf{z}^{(2)} = W^{(1)} \mathbf{a}^{(1)} + \mathbf{b}^{(1)}$ and $\mathbf{a}^{(1)} = \mathbf{x}$ into the above equation, we rewrite $\mathbf{z}^{(l+1)}$, for all $l \in \{2, \ldots, L-1\}$, as
\begin{equation}
\label{eqn:simpleform}
\begin{aligned}
	\mathbf{z}^{(l+1)} 
	& = \prod_{h=0}^{l-2} \tilde{W}^{(l-h)} W^{(1)} \mathbf{x} + \prod_{h=0}^{l-2} \tilde{W}^{(l-h)} \mathbf{b}^{(1)} + \sum_{h=2}^{l}\prod_{q=0}^{l-h-1} \tilde{W}^{(l-q)}\tilde{\mathbf{b}}^{(h)} \\
	& = \hat{W}^{(1:l)} \mathbf{x} + \hat{\mathbf{b}}^{(1:l)} \\
\end{aligned}
\end{equation}
where $\hat{W}^{(1:l)} = \prod_{h=0}^{l-2} \tilde{W}^{(l-h)} W^{(1)}$ is the coefficient matrix of $\mathbf{x}$, and $\hat{\mathbf{b}}^{(1:l)}$ is the sum of the remaining terms. 
The superscript ${(1:l)}$ indicates that $\hat{W}^{(1:l)} \mathbf{x} + \hat{\mathbf{b}}^{(1:l)}$ is equivalent to PLNN's forward propagation from layer $\mathcal{L}_1$ to layer $\mathcal{L}_l$.

\nop{
the PLNN's overall computation on $\mathbf{x}$ from layer $\mathcal{L}_1$ to layer $\mathcal{L}_l$. \todo{Here we have two variables, $\hat{W}^{(1:l)}$ and $\hat{\mathbf{b}}^{(1:l)}$. What is each meaning? What do you mean by ``overall computation''?}
}

Since the output of $\mathcal{N}$ on an input $\mathbf{x}\in\mathcal{X}$ is $F(\mathbf{x}) = \mathbf{a}^{(L)} = \text{\emph{softmax}}(\mathbf{z}^{(L)})$, the \textbf{closed form} of $F(\mathbf{x})$ is
\begin{equation}
\label{eqn:lineareqn}
	F(\mathbf{x}) = \text{\emph{softmax}}( \hat{W}^{(1:L-1)} \mathbf{x} + \hat{\mathbf{b}}^{(1:L-1)} )
\end{equation}

For a fixed PLNN $\mathcal{N}$ and a fixed instance $\mathbf{x}$, $\hat{W}^{(1:L-1)}$ and $\hat{\mathbf{b}}^{(1:L-1)}$ are constant parameters uniquely determined by the fixed configuration $\mathbf{C} = \text{\emph{conf}}(\mathbf{x})$.
Therefore, for a fixed input instance $\mathbf{x}$, $F(\mathbf{x})$ is a \textbf{linear classifier} whose decision boundary is explicitly defined by $\hat{W}^{(1:L-1)} \mathbf{x} + \hat{\mathbf{b}}^{(1:L-1)}$. 

Inspired by the interpretation method widely used by conventional linear classifiers, such as Logistic Regression and linear SVM~\cite{bishop2007pattern}, we interpret the prediction on a fixed instance $\mathbf{x}$ by the decision features of $F(\mathbf{x})$. Specifically, the entries of the $i$-th row of $\hat{W}^{(1:L-1)}$ are the \textbf{decision features} for the $i$-th class of instances.

\nop{[By regarding $F(\mathbf{x})$ as a linear classifier, we can explain the classification result of $\mathbf{x}$ by analyzing the important features that dominate the decision boundaries of $F(\mathbf{x})$. This is the same method widely used to explain the classification results of conventional linear classifiers, such as Logistic Regression and linear SVM~\cite{bishop2007pattern}.]}

\nop{
by analyzing the important features that dominate the decision boundaries of $F(\mathbf{x})$. 
This method is widely used to explain the classification results of conventional linear classifiers, such as Logistic Regression and linear SVM~\cite{bishop2007pattern}.
}

Equation~\ref{eqn:lineareqn} provides a straightforward way to interpret the classification result of a fixed instance.
However, individually interpreting the classification result of every single instance is far from the understanding of the overall behavior of a PLNN $\mathcal{N}$. 
\nop{
This is because, if we regard $\mathbf{x}$ as a variable that is not fixed, then $\hat{W}^{(1:L-1)}$ and $\hat{\mathbf{b}}^{(1:L-1)}$ are variables that depend on $\mathbf{x}$ in a non-linear manner; thus, $F(\mathbf{x})$ is, overall, a non-linear function that is hard to understand.
}
Next, we describe how to interpret the overall behavior of $\mathcal{N}$ by computing an interpretation model $\mathcal{M}$ that is mathematically equivalent to $\mathcal{N}$.

\subsection{Exact Interpretation of a PLNN}
\label{sec:eiap}
A fixed PLNN $\mathcal{N}$ with $N$ hidden neurons has at most $k^N$ configurations.
We represent the $h$-th configuration by $\mathbf{C}_h \in \mathcal{C}$, where $\mathcal{C}\subseteq \{1, \ldots, k\}^N$ is the set of all configurations of $\mathcal{N}$.

Recall that each instance $\mathbf{x}\in\mathcal{X}$ uniquely determines a configuration $\text{\emph{conf}}(\mathbf{x})\in \mathcal{C}$. 
Since the volume of $\mathcal{C}$, denoted by $|\mathcal{C}|$, is at most $k^N$, but the number of instances in $\mathcal{X}$ can be arbitrarily large, it is clear that at least one configuration in $\mathcal{C}$ should be shared by more than one instances in $\mathcal{X}$.

Denote by $P_h = \{\mathbf{x}\in \mathcal{X} \mid \text{\emph{conf}}(\mathbf{x}) = \mathbf{C}_h\}$ the set of instances that have the same configuration $\mathbf{C}_h$.
We prove in Theorem~\ref{thm:polytope} that for any configuration $\mathbf{C}_h \in \mathcal{C}$, $P_h$ is a convex polytope in $\mathcal{X}$.
\nop{[$P_h$ is a convex polytope in $\mathcal{X}$ for any configuration $\mathbf{c}(h) \in \mathcal{C}$.]}
\begin{theorem}
\label{thm:polytope}
Given a fixed PLNN $\mathcal{N}$ with $N$ hidden neurons, $\forall\mathbf{C}_h\in \mathcal{C}$, $P_h = \{\mathbf{x}\in \mathcal{X} \mid \text{\emph{conf}}(\mathbf{x}) = \mathbf{C}_h\}$ is a convex polytope in $\mathcal{X}$.
\end{theorem}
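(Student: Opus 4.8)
The plan is to show that $P_h$ is cut out of $\mathcal{X}$ by finitely many linear inequalities on $\mathbf{x}$, which is exactly the definition of a convex polytope. The starting observation is that the condition $\text{\emph{conf}}(\mathbf{x}) = \mathbf{C}_h$ is, by definition, the conjunction over all hidden neurons $\mathbf{u}_i^{(l)}$ of the interval-membership conditions $\mathbf{z}_i^{(l)} \in I_q$, where $q = (\mathbf{C}_h)_i^{(l)}$ is the state prescribed by $\mathbf{C}_h$. Since each $I_q$ is a real interval, it is the intersection of at most two half-lines, so if each $\mathbf{z}_i^{(l)}$ were an affine function of $\mathbf{x}$ on $P_h$, every such membership condition would become at most two linear inequalities in $\mathbf{x}$, and $P_h$ would be the intersection of at most $2N$ half-spaces.

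First I would make the affine dependence precise and, crucially, non-circular. For the fixed configuration $\mathbf{C}_h$ the slopes $\mathbf{r}^{(l)}$ and intercepts $\mathbf{t}^{(l)}$ are constants, so Equation~\ref{eqn:simpleform} defines affine maps $\zeta_i^{(l)}(\mathbf{x}) = \hat{W}^{(1:l-1)}_i \mathbf{x} + \hat{\mathbf{b}}^{(1:l-1)}_i$, namely the pre-activations one would obtain by forcing every neuron into its $\mathbf{C}_h$-state throughout forward propagation. The subtlety is that $\zeta_i^{(l)}(\mathbf{x})$ equals the true pre-activation $\mathbf{z}_i^{(l)}$ produced by $\mathcal{N}$ only when $\mathbf{x}$ actually induces $\mathbf{C}_h$; otherwise the true computation switches linear pieces somewhere and the two disagree.

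Thus the heart of the proof is to establish the equivalence $\mathbf{x}\in P_h \iff \zeta_i^{(l)}(\mathbf{x}) \in I_{(\mathbf{C}_h)_i^{(l)}}$ for every hidden neuron. The forward direction is immediate: if $\text{\emph{conf}}(\mathbf{x})=\mathbf{C}_h$ then the true computation uses precisely the $\mathbf{C}_h$ slopes and intercepts, so $\mathbf{z}_i^{(l)} = \zeta_i^{(l)}(\mathbf{x})$ and interval membership holds by definition of the state. The reverse direction I would prove by induction on the layer index $l$: the base case $\mathcal{L}_2$ holds because $\mathbf{z}^{(2)} = W^{(1)}\mathbf{x} + \mathbf{b}^{(1)}$ is configuration-independent, so membership of $\zeta^{(2)}$ in the prescribed intervals forces the true state at $\mathcal{L}_2$ to equal $\mathbf{C}_h$; in the inductive step, once the true states of layers $2,\ldots,l$ coincide with $\mathbf{C}_h$, the true activations $\mathbf{a}^{(l)}$ use the $\mathbf{C}_h$ slopes and intercepts, so $\mathbf{z}^{(l+1)}$ computed via Equation~\ref{eqn:nextinput} equals $\zeta^{(l+1)}(\mathbf{x})$, and the assumed interval membership then forces the true state at $\mathcal{L}_{l+1}$ to match as well. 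This closes the induction and yields $\text{\emph{conf}}(\mathbf{x})=\mathbf{C}_h$.

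With the equivalence in hand, $P_h = \bigcap_{l,i} \{\mathbf{x}\in\mathcal{X} : \zeta_i^{(l)}(\mathbf{x}) \in I_{(\mathbf{C}_h)_i^{(l)}}\}$. Each $\zeta_i^{(l)}$ is affine and each $I_q$ is an interval, so each set in the intersection is a slab bounded by at most two parallel hyperplanes; collecting these gives the finite linear system $Q_h$, whose common solution set is $P_h$, a convex polytope. I expect the main obstacle to be exactly the induction establishing the equivalence: one must resist treating $\mathbf{z}_i^{(l)}$ as ``obviously affine in $\mathbf{x}$,'' since that affine form is valid only inside $P_h$, and the logical content of the theorem is precisely that the region where it is valid is itself defined by linear inequalities. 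A minor point worth a remark is whether the interval endpoints are included, which determines whether $P_h$ is closed; since the $\{I_q\}$ partition $\mathbb{R}$, some faces may be open, but $P_h$ remains an intersection of half-spaces and hence convex.
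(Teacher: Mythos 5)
Your proposal follows essentially the same route as the paper: both reduce the condition $\text{\emph{conf}}(\mathbf{x}) = \mathbf{C}_h$ to the system $Q_h$ of $2N$ linear inequalities obtained by writing each pre-activation $\mathbf{z}_i^{(l)}$ as an affine function of $\mathbf{x}$ under the fixed configuration (Equation~\ref{eqn:simpleform}) and intersecting the resulting slabs $\mathbf{z}_i^{(l)} \in \psi(\mathbf{c}_i^{(l)})$. The one substantive difference is that the paper simply asserts the equivalence between $\text{\emph{conf}}(\mathbf{x}) = \mathbf{C}_h$ and $Q_h$, whereas you correctly flag that the affine form is a priori valid only on $P_h$ and supply the layer-wise induction proving the reverse implication, which closes a gap the paper's argument leaves implicit.
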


\begin{proof}
We prove by showing that $\text{\emph{conf}}(\mathbf{x}) = \mathbf{C}_h$ is equivalent to a finite set of linear inequalities with respect to $\mathbf{x}$.

When $l = 2$, we have $\mathbf{z}^{(2)} = W^{(1)}\mathbf{x} + \mathbf{b}^{(1)}$.
For $l\in\{3, \dots, L-1\}$, it follows Equation~\ref{eqn:simpleform} that $\mathbf{z}^{(l)} = \hat{W}^{(1:l-1)} \mathbf{x} + \hat{\mathbf{b}}^{(1:l-1)}$, which is a linear function of $\mathbf{x}$, because $\hat{W}^{(1:l)}$ and $\hat{\mathbf{b}}^{(1:l)}$ are constant parameters when $\mathbf{C}_h$ is fixed. 
In summary, given a fixed $\mathbf{C}_h$, $\mathbf{z}^{(l)}$ is a linear function of $\mathbf{x}$ for all $l\in\{2, \ldots, L-1\}$.

We show that $P_h$ is a convex polytope by showing that $\text{\emph{conf}}(\mathbf{x}) = \mathbf{C}_h$ is equivalent to a set of $2N$ linear inequalities with respect to $\mathbf{x}$.
Recall that $\mathbf{z}_i^{(l)}\in I_q$ if and only if $\mathbf{c}_i^{(l)} = q$ ($q\in\{1,\ldots,k\}$). 
Denote by $\psi: \{1, \ldots, k\} \rightarrow \{I_1, \ldots, I_k\}$ the bijective function that maps a configuration $\mathbf{c}_i^{(l)}$ to a real interval in $\{I_1, \ldots, I_k\}$, such that $\psi(\mathbf{c}_i^{(l)}) = I_q$ if and only if $\mathbf{c}_i^{(l)} = q$ ($q\in\{1,\ldots,k\}$).
Then, $\text{\emph{conf}}(\mathbf{x}) = \mathbf{C}_h$ is equivalent to a set of constraints, denoted by $Q_h = \{\mathbf{z}_i^{(l)} \in \psi(\mathbf{c}_i^{(l)}) \mid i\in\{1, \ldots, n_l\}, l\in\{2, \ldots, L-1\}\}$.
Since $\mathbf{z}_i^{(l)}$ is a linear function of $\mathbf{x}$ and $\psi(\mathbf{c}_i^{(l)})$ is a real interval, each constraint $\mathbf{z}_i^{(l)} \in \psi(\mathbf{c}_i^{(l)})$ in $Q_h$ is equivalent to two linear inequalities with respect to $\mathbf{x}$.
Therefore, $\text{\emph{conf}}(\mathbf{x}) = \mathbf{C}_h$ is equivalent to a set of $2N$ linear inequalities, which means $P_h$ is a convex polytope.
\end{proof}

According to Theorem~\ref{thm:polytope}, all instances sharing the same configuration $\mathbf{C}_h$ form a unique convex polytope $P_h$ that is explicitly defined by $2N$ linear inequalities in $Q_h$. 
Since $\mathbf{C}_h$ also determines the linear classifier for a fixed instance in Equation~\ref{eqn:lineareqn}, all instances in the same convex polytope $P_h$ share the same linear classifier determined by $\mathbf{C}_h$. 

\nop{
Denote by $F_h(\cdot)$ the linear classifier that is shared by all instances in $P_h$, 
we can rewrite $F(\cdot)$ in the following piecewise closed form.
\begin{equation}
\label{eqn:finalform}
F(\mathbf{x}) = 
    \left \{
        \begin{aligned}
            & F_1(\mathbf{x}), \quad\, \text{if } \mathbf{x}\in P_1 \cr
            & F_2(\mathbf{x}), \quad\, \text{if } \mathbf{x}\in P_2 \cr
            & \quad \quad \quad \vdots \cr
            & F_{|\mathcal{C}|}(\mathbf{x}), \quad \text{if } \mathbf{x}\in P_{|\mathcal{C}|} \cr
        \end{aligned}
    \right.
\end{equation}
where $P_1\cup \ldots \cup P_{|\mathcal{C}|} = \mathcal{X}$ and $\forall h\neq q, P_h \cap P_q = \emptyset$.

According to Equation~\ref{eqn:finalform}, }

\begin{algorithm}[t]
	\caption{$OpenBox(\mathcal{N}, D_{\text{train}})$}
	\label{alg:openbox}
	\KwIn{$\mathcal{N}\coloneqq$ a fixed PLNN, $D_{\text{train}}\subset \mathcal{X}$ the set of training instances used to train $\mathcal{N}$.}
	\KwOut{$\mathcal{M}\coloneqq$ a set of active LLCs}
	\BlankLine
	\begin{algorithmic}[1]
		\STATE Initialization: $\mathcal{M} = \emptyset$, $\mathcal{C} = \emptyset$.
		\FOR{each $\mathbf{x}\in D_{\text{train}}$}
			\STATE Compute the configuration by $\mathbf{C}_h \leftarrow \text{\emph{conf}}(\mathbf{x})$.
			\IF{$\mathbf{C}_h\not\in \mathcal{C}$}
				\STATE $\mathcal{C} \leftarrow \mathcal{C} \cup \mathbf{C}_h$ and $\mathcal{M} \leftarrow \mathcal{M} \cup (F_h(\mathbf{x}), P_h)$.
				\nop{\STATE Compute the closed form of $F_h(\mathbf{x})$ and $P_h$.}
				\nop{\STATE $\mathcal{M} \leftarrow \mathcal{M} \cup (F_h(\mathbf{x}), P_h)$.}
			\ENDIF
		\ENDFOR
		\RETURN $\mathcal{M}$.
	\end{algorithmic}
\end{algorithm}

Denote by $F_h(\cdot)$ the linear classifier that is shared by all instances in $P_h$, 
we can interpret $\mathcal{N}$ as a set of \textbf{local linear classifiers} (\textbf{LLCs}), each LLC being a linear classifier $F_h(\cdot)$ that applies to all instances in a convex polytope $P_h$.
Denote by a tuple $(F_h(\cdot), P_h)$ the $h$-th LLC, 
a fixed PLNN $\mathcal{N}$ is equivalent to a set of LLCs, denoted by $\mathcal{M} = \{(F_h(\cdot), P_h) \mid \mathbf{C}_h\in \mathcal{C}\}$.
We use $\mathcal{M}$ as our final interpretation model for $\mathcal{N}$.

For a fixed PLNN $\mathcal{N}$, if the states of the $N$ hidden neurons are independent, the PLNN $\mathcal{N}$ has $k^N$ configurations, which means $\mathcal{M}$ contains $k^N$ LLCs. 
However, due to the hierarchical structure of a PLNN, the states of a hidden neuron in $\mathcal{L}_l$ strongly correlate with the states of the neurons in the former layers $\mathcal{L}_q (q<l)$.
Therefore, the volume of $\mathcal{C}$ is much less than $k^N$, and the number of local linear classifiers in $\mathcal{M}$ is much less than $k^N$.
We discuss this phenomenon later in Table~\ref{Table:netstruct} and Section~\ref{sec:apleu}.

\nop{
Since a fixed PLNN $\mathcal{N}$ with $N$ hidden neurons can have at most $k^N$ configurations, $\mathcal{M}$ may contain at most $k^N$ LLCs. 
}

\nop{Thus, the number of local linear classifiers in $\mathcal{M}$ is much less than $k^N$. We discuss this phenomenon later in Section. xxx.}

\nop{In fact, the actual number of configurations in $\mathcal{C}$ is much less than $k^N$ due to the \todo{hierarchical} \mc{[deep]} structure of a PLNN. We will empirically discuss this phenomenon in Section. xxx.} \nop{[However, due to the deep structure of a PLNN, the states of the hidden neurons in $\mathcal{L}_l$ highly depends on the states of the neurons in the former layers of $\mathcal{L}_l$. Therefore, the actual number of configurations in $\mathcal{C}$ is much less than $k^N$. Thus, the number of local linear classifiers in $\mathcal{M}$ is much less than $k^N$. We discuss this phenomenon later in Section. xxx.]}

In practice, we do not need to compute the entire set of LLCs in $\mathcal{M}$ all at once.
Instead, we can first compute an active subset of $\mathcal{M}$, that is, the set of LLCs that are actually used to classify the available set of instances. Then, we can update $\mathcal{M}$ whenever a new LLC is used to classify a newly coming instance.

Algorithm~\ref{alg:openbox} summarizes the $OpenBox$ method, which computes $\mathcal{M}$ as the active set of LLCs that are actually used to classify the set of training instances, denoted by $D_{\text{train}}$.

\rev{
The time cost of Algorithm~\ref{alg:openbox} consists of the time $T_{conf}$ to compute $conf(\mathbf{x})$ in step 3 and the time $T_\text{LLC}$ to compute the LLC $(F_h(\mathbf{x}), P_h)$ in step 5.
Since $T_{conf}$ and $T_\text{LLC}$ are dominated by matrix (vector) multiplications, we evaluate the time cost of Algorithm~\ref{alg:openbox} by the number of scalar multiplications.
First, since we compute $conf(\mathbf{x})$ by forward propagating from layer $\mathcal{L}_1$ to layer $\mathcal{L}_{L-1}$, $T_{conf}=\sum_{l=2}^{L-1} n_l n_{l-1}$.
Second, since $(F_h(\mathbf{x}), P_h)$ is determined by the set of tuples $\mathcal{G}=\{(\hat{W}^{(1:l)}, \hat{\mathbf{b}}^{(1:l)}) \mid l\in\{1, \ldots, L-1\}\}$, $T_\text{LLC}$ is the time to compute $\mathcal{G}$.
Given $(\hat{W}^{(1:l-1)}, \hat{\mathbf{b}}^{(1:l-1)})$, we can compute $(\hat{W}^{(1:l)}, \hat{\mathbf{b}}^{(1:l)})$ by plugging $\mathbf{z}^{(l)} = \hat{W}^{(1:l-1)} \mathbf{x} + \hat{\mathbf{b}}^{(1:l-1)}$ (Equation~\ref{eqn:simpleform}) into Equation~\ref{eqn:genterm}, and the time cost is $n_{l+1}n_{l}(n_1+1)$. Since $\hat{W}^{(1:1)} = W^{(1)}$ and $\hat{\mathbf{b}}^{(1:1)}=\mathbf{b}^{(1)}$, we can iteratively compute $\mathcal{G}$. The overall time cost is $T_\text{LLC} = \sum_{l=2}^{L-1} n_{l+1}n_{l}(n_1+1)$.
}

\rev{
The worst case of Algorithm~\ref{alg:openbox} happens when every instance $\mathbf{x}\in D_\text{train}$ has a unique configuration $conf(\mathbf{x})$. 
Denote by $|D_\text{train}|$ the number of training instances, the time cost of Algorithm~\ref{alg:openbox} in the worst case is $|D_\text{train}| (T_{conf} + T_{LLC})$.
Since $n_l, l\in\{2, \ldots, L-1\}$ are constants and $n_1 = d$ is the size of the input $\mathbf{x}\in \mathbb{R}^d$, the time complexity of Algorithm~\ref{alg:openbox} is $O(|D_\text{train}| d)$.
}

\nop{
As discussed later in Table~\ref{Table:netstruct} and Section~\ref{sec:apleu}, the actual number of unique configurations is quite small. Therefore, Algorithm~\ref{alg:openbox} runs very fast in practice.
}

Now, we are ready to introduce how to interpret the classification result of an instance $\mathbf{x}\in P_h, h\in\{1, \ldots, |\mathcal{C}|\}$.
First, we interpret the classification result of $\mathbf{x}$ using the decision features of $F_h(\mathbf{x})$ (Section~\ref{sec:eecrfi}).
Second, we interpret why $\mathbf{x}$ is contained in $P_h$ using the \textbf{polytope boundary features} (\textbf{PBFs}), which are the decision features of the polytope boundaries. More specifically, a polytope boundary of $P_h$ is defined by a linear inequality $\mathbf{z}_i^{(l)} \in \psi(\mathbf{c}_i^{(l)})$ in $Q_h$. 
By Equation~\ref{eqn:simpleform}, $\mathbf{z}_i^{(l)}$ is a linear function with respect to $\mathbf{x}$. 
The PBFs are the coefficients of $\mathbf{x}$ in $\mathbf{z}_i^{(l)}$.

We also discover that some linear inequalities in $Q_h$ are redundant whose hyperplanes do not intersect with $P_h$.
To simplify our interpretation on the polytope boundaries, we remove such redundant inequalities by Caron's method~\cite{caron1989degenerate} and focus on studying the PBFs of the non-redundant ones.

\nop{
Besides, since $Q_h$ contains redundant linear inequalities that are not effective in defining $P_h$, we remove the redundant ones by Caron's method~\cite{caron1989degenerate}.
}

\nop{
By Equation~\ref{eqn:simpleform}, $\mathbf{z}_i^{(l)}$ is a linear function with respect to $\mathbf{x}$. 
The coefficient of $\mathbf{x}$ in $\mathbf{z}_i^{(l)}$ is the PBFs of the PB.
Since we adopt ReLU as the activation function, $\mathbf{z}_i^{(l)} \in \psi(\mathbf{c}_i^{(l)})$ is either $\mathbf{z}_i^{(l)} > 0$ or $\mathbf{z}_i^{(l)} \leq 0$.
For PLNN-NS, the PBFs are non-negative.
Therefore, if $\mathbf{z}_i^{(l)} > 0$ defines a PB of $P_h$, then $P_h$ contains the images that have the PBFs of $\mathbf{z}_i^{(l)}$; 
if $\mathbf{z}_i^{(l)} \leq 0$, then $P_h$ does not contain the images that have the PBFs.
}

\nop{
Since $Q_h$ contains redundant linear inequalities that are not effective in defining $P_h$, we remove the redundant ones by Caron's method~\cite{caron1989degenerate}.
}
\nop{[\textbf{First}, we explain the reason why $\mathbf{x}$ is contained in $P_h$ by extracting important discriminative features from the $2N$ linear inequalities that defines $P_h$. To simplify our interpretation for better understanding, we employ Caron's method~\cite{caron1989degenerate} to efficiently remove a large number of redundant linear inequalities that are not effective in defining $P_h$.]} 
\nop{
As demonstrated later in Section. xxx, the number of effective linear inequalities is much smaller than $2N$.
Although $P_h$ is defined by $2N$ linear inequalities, as demonstrated later in Section. xxx, the number of effective linear inequalities is much smaller than $2N$, which makes the interpretation easier to understand.
We employ [xxx] to efficiently compute the effective linear inequalities for each $P_h$.
}

\nop{[\textbf{Second}, we explain the classification result of $\mathbf{x}\in P_h$ by extracting important discriminative features that dominate the linear decision boundaries of $F_h(\mathbf{x})$. This is the same method that is widely used to interpret the conventional linear classifiers, such as Logistic Regression and Linear SVM~\cite{bishop2007pattern}.]}

\nop{As demonstrated in Section. xxx, the semantic meaning of our interpretation on $F_h(\mathbf{x})$ is dramatically improved by adding non-negative and sparse constraints on the edge weights of $\mathcal{N}$. This property of PLNN is highly consistent with the effect of non-negative and sparse constraints on conventional linear classifiers [xxx].}

The advantages of $OpenBox$ are three-fold as follows.
First, our interpretation is exact, because the set of LLCs in $\mathcal{M}$ are mathematically equivalent to the classification function $F(\cdot)$ of $\mathcal{N}$.
Second, our interpretation is group-wise consistent. It is due to the reason that all instances in the same convex polytope are classified by exactly the same LLC, and thus the interpretations are consistent with respect to a given convex polytope.
\rev{Last, our interpretation is easy to compute due to the low time complexity of Algorithm~\ref{alg:openbox}.}
\nop{since $OpenBox$ computes $\mathcal{M}$ by a one-time forward propagation through $\mathcal{N}$ for each instance in $D_{\text{train}}$.}

\nop{[\textbf{First}, our interpretation is exact. Because the set of local linear classifiers in $\mathcal{M}$ are mathematically equivalent to the overall classification function $F(\mathbf{x})$ of $\mathcal{N}$.]}

\nop{[\textbf{Second}, our interpretation is group-wise consistent. Because all instances in the same convex polytope are classified by exactly the same local linear classifier, thus the explanations for the classification results are consistent per convex polytope.]}

\nop{[\textbf{Third}, our interpretation is easy to compute, since $OpenBox$ computes $\mathcal{M}$ simply by a one-time forward propagation through $\mathcal{N}$ for each of the instances in $D_{\text{train}}$.]}

\nop{ 

\begin{equation}
\label{eqn:hadamard}
\mathbf{r}^{(l+1)} \circ W^{(l)} = 
\left [
\begin{aligned}
& \mathbf{r}_{1}^{(l+1)} W{11}^{(l)}, \ldots,  \mathbf{r}_{1}^{(l+1)} W_{1n_l}^{(l)} \cr
& \vdots \cr
& \mathbf{r}_{i}^{(l+1)} W{i1}^{(l)}, \ldots,  \mathbf{r}_{i}^{(l+1)} W_{in_l}^{(l)} \cr
& \vdots \cr
& \mathbf{r}_{n_{l+1}}^{(l+1)} W{n_{l+1}1}^{(l)}, \ldots,  \mathbf{r}_{n_{l+1}}^{(l+1)} W_{n_{l+1}n_l}^{(l)} \cr
\end{aligned}
\right ]
\end{equation}

Denote by $E^{(l)} = \mathbf{r}^{(l+1)} \circ W^{(l)}$.

\begin{equation}
\label{eqn:hadamard}
(\mathbf{r}^{(l+1)} \circ W^{(l)})_{ij} = \mathbf{r}_{i}^{(l+1)} W_{ij}^{(l)}
\end{equation}

$\mathbf{a}^{(l+1)} = (\mathbf{r}^{(l+1)} \circ W^{(l)}) \mathbf{a}^{(l)} + \mathbf{r}^{(l+1)} \circ \mathbf{b}^{(l)}$

We have $\mathbf{z}^{(L)} = W^{L-1} (\prod_{l=1}^{L-2} \mathbf{r}^{(l+1)} \circ W^{(l)}) \mathbf{a}^{(1)} + \mathbf{b}^* = W^* \mathbf{a}^{(1)} + \mathbf{b}^*$.

} 

\section{Experiments}
\label{sec:exp}
In this section, we evaluate the performance of $OpenBox$, and compare it with the state-of-the-art method LIME~\cite{ribeiro2016should}.
In particular, we address the following questions:
(1) What are the LLCs look like?
(2) Are the interpretations produced by LIME and $OpenBox$ exact and consistent?
(3) Are the decision features of LLCs easy to understand, and can we improve the interpretability of these features by non-negative and sparse constraints?
(4) How to interpret the PBFs of LLCs?
(5) How effective are the interpretations of $OpenBox$ in hacking and debugging a PLNN model?

Table~\ref{Table:models} shows the details of the six models we used.
For both PLNN and PLNN-NS, we use the same network structure described in Table~\ref{Table:netstruct}, and adopt the widely used activation function: ReLU~\cite{glorot2011deep}.
We apply the non-negative and sparse constraints proposed by Chorowski~\textit{et al.}~\cite{chorowski2015learning} to train PLNN-NS.
Since our goal is to comprehensively study the interpretation effectiveness of $OpenBox$ rather than achieving state-of-the-art classification performance, we use relatively simple network structures for PLNN and PLNN-NS, which are still powerful enough to achieve significantly better classification performance than Logistic Regression (LR).
The decision features of LR, LR-F, LR-NS and LR-NSF are used as baselines to compare with the decision features of LLCs.

\nop{
\newcommand{\parawidthExpFourIII}{9.5mm}
\begin{figure}[t]
\subfigure[\emph{Ankle Boot} of FMNIST-1]{
\includegraphics[width=\parawidthExpFourIII]{Figures/exp4/89/SampleImages/sampleImage_3_11}
\includegraphics[width=\parawidthExpFourIII]{Figures/exp4/89/SampleImages/sampleImage_3_12}
\includegraphics[width=\parawidthExpFourIII]{Figures/exp4/89/SampleImages/sampleImage_3_13}
\includegraphics[width=\parawidthExpFourIII]{Figures/exp4/89/SampleImages/sampleImage_3_14}
}
\subfigure[\emph{Bag} of FMNIST-1]{
\includegraphics[width=\parawidthExpFourIII]{Figures/exp4/89/SampleImages/sampleImage_3_1}
\includegraphics[width=\parawidthExpFourIII]{Figures/exp4/89/SampleImages/sampleImage_3_2}
\includegraphics[width=\parawidthExpFourIII]{Figures/exp4/89/SampleImages/sampleImage_3_3}
\includegraphics[width=\parawidthExpFourIII]{Figures/exp4/89/SampleImages/sampleImage_3_4}
}

\subfigure[\emph{Coat} of FMNIST-2]{
\includegraphics[width=\parawidthExpFourIII]{Figures/exp4/24/SampleImages/sampleImage_2_11}
\includegraphics[width=\parawidthExpFourIII]{Figures/exp4/24/SampleImages/sampleImage_2_12}
\includegraphics[width=\parawidthExpFourIII]{Figures/exp4/24/SampleImages/sampleImage_2_13}
\includegraphics[width=\parawidthExpFourIII]{Figures/exp4/24/SampleImages/sampleImage_2_14}
}
\subfigure[\emph{Pullover} of FMNIST-2]{
\includegraphics[width=\parawidthExpFourIII]{Figures/exp4/24/SampleImages/sampleImage_2_1}
\includegraphics[width=\parawidthExpFourIII]{Figures/exp4/24/SampleImages/sampleImage_2_2}
\includegraphics[width=\parawidthExpFourIII]{Figures/exp4/24/SampleImages/sampleImage_2_3}
\includegraphics[width=\parawidthExpFourIII]{Figures/exp4/24/SampleImages/sampleImage_2_4}
}
\caption{Exemplar images of FMNIST-1 and FMNIST-2.}
\label{Fig:fmnist}
\end{figure}
}

\nop{
On each of the datasets, we train six models as follows.
We first train a LR model and a PLNN model.
To investigate how non-negative and sparse constraints affect the interpretability of PLNN, we apply the non-negative and sparse constraints proposed by Chorowski~\textit{et al.}~\cite{chorowski2015learning} to train another PLNN model, denoted by PLNN-NS.
We also apply the same non-negative and sparse constraints to train a LR model, denoted by LR-NS. 
Since LR-NS only learns the decision features of the positive instances, we flip the labels of the positive and negative instances, and train a LR-NS-Flip (LR-NSF) model to learn the decision features of the negative instances. 
For comparison, we also train a LR-Flip (LR-F) model on the same data set as LR-NSF.
}

\nop{
we adopt the method of Chorowski~\textit{et al.}~\cite{chorowski2015learning} to train a LR-NS model and a PLNN-NS model by applying the non-negative and sparse constraints on LR and PLNN, respectively.
}

\nop{
The basic model to interpret is a PLNN that adopts the widely used activation function: ReLU~\cite{glorot2011deep}.
To investigate how non-negative and sparse constraints affect the interpretability of PLNN, we apply the non-negative and sparse constraints proposed by Chorowski~\textit{et al.}~\cite{chorowski2015learning} to train another PLNN model, denoted by PLNN-NS.
The network of PLNN-NS has exactly the same structure as PLNN.
Table~\ref{Table:netstruct} shows the details of the networks.
}

\nop{ it also shows the number of configurations $|\mathcal{C}|$ of PLNN and PLNN-NS. 
Clearly, $|\mathcal{C}|$ is much less than $k^N$, which demonstrates our claim in subsection~\ref{sec:eiap}.
}

\nop{
To demonstrate that the decision features computed by $OpenBox$ are as meaningful as the decision features of Logistic Regression (LR) models,
we train a LR model and a LR-NS model that is trained with the non-negative and sparse constraints.
To learn the decision features of the negative instances, we train a LR-NS-Flip (LR-NSF) model on the instances with flipped labels.
We also train a LR-Flip (LR-F) model on the same data set as LR-NSF.
}

\nop{
We also train the following Logistic Regression (LR) models to show that the decision features computed by $OpenBox$ are as easy to understand as the decision features of LR.

We also apply the same non-negative and sparse constraints to train a LR model, denoted by LR-NS. 
Since LR-NS only learns the decision features of the positive instances, we flip the labels of the positive and negative instances, and train a LR-NS-Flip (LR-NSF) model to learn the decision features of the negative instances. 
For comparison, we also train a LR-Flip (LR-F) model on the same data set as LR-NSF.
}

\nop{
The interpretation effectiveness is comprehensively evaluated from the following perspectives.

First, we demonstrate our claim in Theorem~\ref{thm:polytope} by showing the LLCs of the PLNN trained on SYN. 

Second, we compare the 

To demonstrate the understandability of $OpenBox$, 
we train a Logistic Regression (LR) model~\cite{bishop2007pattern}, and show that the decision features computed by $OpenBox$ are as easy to understand as the decision features of LR.

To investigate how non-negative and sparse constraints affect the interpretability of PLNN, we apply the non-negative and sparse constraints proposed by Chorowski~\textit{et al.}~\cite{chorowski2015learning} to train another PLNN model, denoted by PLNN-NS.
For comparison, we apply the same non-negative and sparse constraints to train a LR model, denoted by LR-NS. 
Since LR-NS only learns the decision features of the positive instances, we flip the labels of the positive and negative instances, and train a LR-NS-Flip (LR-NSF) model to learn the decision features of the negative instances. 
We also train a LR-Flip (LR-F) model on the same data set as LR-NSF.

We train the following six models for interpretation. Details of these models are shown in Table~\ref{Table:models}.
}
\nop{
We first train a LR model and a PLNN model.
To investigate how non-negative and sparse constraints affect the interpretability of PLNN, we apply the non-negative and sparse constraints proposed by Chorowski~\textit{et al.}~\cite{chorowski2015learning} to train another PLNN model, denoted by PLNN-NS.
We also apply the same non-negative and sparse constraints to train a LR model, denoted by LR-NS. 
Since LR-NS only learns the decision features of the positive instances, we flip the labels of the positive and negative instances, and train a LR-NS-Flip (LR-NSF) model to learn the decision features of the negative instances. 
For comparison, we also train a LR-Flip (LR-F) model on the same data set as LR-NSF.

Our target model to interpret is a PLNN that adopts the widely used activation function: ReLU~\cite{glorot2011deep}.
\nop{Since $OpenBox$ interprets a PLNN by the decision features of the LLCs, }
To demonstrate the understandability of $OpenBox$, 
we also train a Logistic Regression (LR) model~\cite{bishop2007pattern}, and show that the decision features computed by $OpenBox$ are as easy to understand as the decision features of LR.
}

\nop{
The target model to interpret is a PLNN that adopts the widely used activation function: ReLU~\cite{glorot2011deep}.
The networks of PLNN and PLNN-NS have exactly the same number of layers and the same number of neurons in each layer.
For both networks, the neurons in successive layers are initialized to be fully connected. 
Table~\ref{Table:netstruct} shows the number of neurons in each layer of the neural networks, it also shows the number of configurations $|\mathcal{C}|$ of PLNN and PLNN-NS. 
Clearly, $|\mathcal{C}|$ is much less than $k^N$, which demonstrates our claim in subsection~\ref{sec:eiap}.
}

The Python code of LIME is published by its authors\footnote{\url{https://github.com/marcotcr/lime}}.
The other methods and models are implemented in Matlab. PLNN and PLNN-NS are trained using the DeepLearnToolBox~\cite{IMM2012-06284}.
All experiments are conducted on a PC with a Core-i7-3370 CPU (3.40 GHz), 16GB main memory, and a 5,400 rpm hard drive running Windows 7 OS.

We use the following data sets. Detailed information of the data sets is shown in Table~\ref{Table:datasets}.

\textbf{Synthetic (SYN) Data Set.}
As shown in Figure~\ref{Fig:exp1}(a), this data set contains 20,000 instances uniformly sampled from a quadrangle in 2-dimensional Euclidean space.
The red and blue points are positive and negative instances, respectively.
\rev{Since we only use SYN to visualize the LLCs of a PLNN and we do not perform testing on SYN, we use all instances in SYN as the training data.}

\nop{We use all instances in SYN as training data to visualize the LLCs of a PLNN.}

\begin{table}[t]
\centering\small
\caption{The models to interpret. LR is Logistic Regression. NS means non-negative and sparse constraints. Flip means the model is trained on the instances with flipped labels.}
\label{Table:models}
\vspace{-2mm}
\begin{tabular}{|p{15mm}<\centering|c|c|c|c|c|c|}
\hline

Models			&	PLNN	& 	PLNN-NS		&	LR		&	LR-F			&	LR-NS		&	LR-NSF	\\ \hline
NS				&	$\times$	&	$\checkmark$	&	$\times$	&	$\times$		&	$\checkmark$	&	$\checkmark$	\\ \hline
Flip 		 		&	$\times$	&	$\times$		&	$\times$	&	$\checkmark$	&	$\times$		&	$\checkmark$	\\ \hline

\end{tabular}
\end{table}

\begin{table}[t]
\centering\small
\caption{The network structures $(n_1, n_2, \ldots, n_L)$ and the number of configurations $|\mathcal{C}|$ of PLNN and PLNN-NS.
The neurons in successive layers are initialized to be fully connected. 
$k=2$ is the number of linear functions of ReLU, $N$ is the number of hidden neurons.}
\label{Table:netstruct}
\vspace{-2mm}
\begin{tabular}{|m{12mm}<\centering|m{20mm}<\centering|c|c|c|c|}
\hline

\multirow{2}{*}{Data Sets}	&	\multirow{2}{20mm}{\centering{\# Neurons $(n_1, n_2, \ldots, n_L)$}}	&	\multicolumn{2}{c|}{PLNN}	&	\multicolumn{2}{c|}{PLNN-NS} 	\\ \cline{3-6}	

			&			&	$|\mathcal{C}|$		&	$k^N$	&	$|\mathcal{C}|$		&	$k^N$		\\ \hline

SYN			&	$(2, 4, 16, 2, 2)$	&	$266$	&	$2^{22}$	&	$41$		&	$2^{22}$		 		\\ \hline

FMNIST-1		&	$(784, 8, 2, 2)$		&	$78$		&	$2^{10}$	&	$3$		&	$2^{10}$				\\ \hline

FMNIST-2		&	$(784, 8, 2, 2)$		&	$23$		&	$2^{10}$	&	$18$		&	$2^{10}$				\\ \hline

\end{tabular}
\end{table}

\begin{table}[t]
\centering\small
\caption{Detailed description of data sets.}
\label{Table:datasets}
\vspace{-2mm}
\begin{tabular}{|c|p{13mm}<\centering|p{13mm}<\centering|p{13mm}<\centering|p{13mm}<\centering|}
\hline
\multirow{2}{*}{Data Sets}  & \multicolumn{2}{c|}{Training Data} & \multicolumn{2}{c|}{Testing Data} 		\\ \cline{2-5}

		     	& 	\# Positive		& 	\# Negative	&	\# Positive		&	\# Negative		\\ \hline

SYN			&	6,961		&	13,039		&	N/A			&	N/A				\\ \hline

FMNIST-1		&	4,000		&	4,000		& 	3,000		&	3,000			\\ \hline

FMNIST-2		&	4,000		&	4,000		& 	3,000		&	3,000			\\ \hline

\nop{LMR			& 	12,500		& 	12,500		& 	12,500		& 	12,500			\\ \hline}

\end{tabular}
\end{table}

\newcommand{\parawidthExpOne}{42mm}
\begin{figure}[t]
\centering
\subfigure[training data of SYN]{\includegraphics[width=\parawidthExpOne]{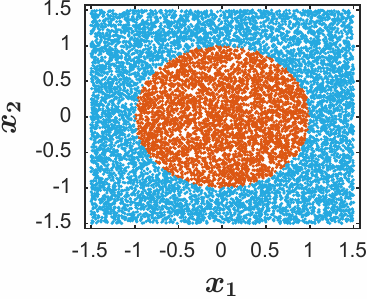}}
\subfigure[prediction results of PLNN]{\includegraphics[width=\parawidthExpOne]{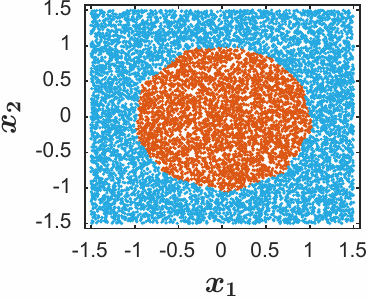}}

\subfigure[convex polytopes]{\includegraphics[width=\parawidthExpOne]{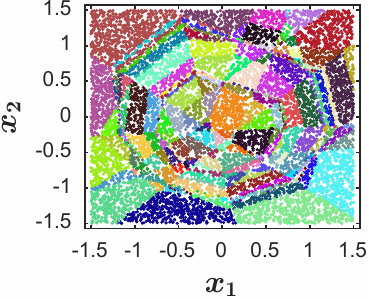}}
\subfigure[LLCs]{\includegraphics[width=\parawidthExpOne]{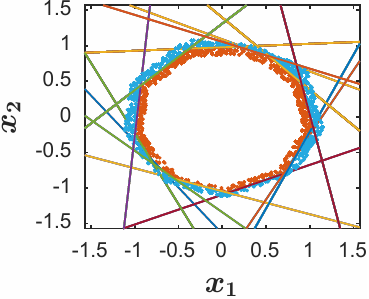}}

\caption{The LLCs of the PLNN trained on SYN.}
\label{Fig:exp1}
\end{figure}

\textbf{FMNIST-1 and FMNIST-2 Data Sets}.
Each of these data sets contains two classes of images in the Fashion MNIST data set~\cite{xiao2017/online}.
FMNIST-1 consists of the images of \textit{Ankle Boot} and \textit{Bag}.
FMNIST-2 consists of the images of \textit{Coat} and \textit{Pullover}.
All images in FMNIST-1 and FMNIST-2 are 28-by-28 grayscale images.
We represent an image by cascading the 784 pixel values into a 784-dimensional feature vector.
The Fashion MNIST data set is available online\footnote{\url{https://github.com/zalandoresearch/fashion-mnist}}.

\nop{
\textbf{Large Movie Review (LMR) data set~\cite{maas2011learning}.}
This is a benchmark text dataset for binary sentiment classification of movie reviews. 
It includes 25,000 highly polar movie reviews for training, 25,000 for testing, and a vocabulary that contains 89,527 words.
We use the vocabulary to parse each movie review into a 89,527-dimensional bag-of-word vector, where the $i$-th dimension is the tf-idf of the $i$-th word in the vocabulary.
}
\nop{This data set is available at ``\url{http://ai.stanford.edu/~amaas/data/sentiment/}''.}

\nop{
For the purpose of generality, we use DNN [xxx] as our target model to interpret, and adopt the famous piecewise linear function ReLU~\cite{glorot2011deep} as the activation function of hidden neurons.
}

\nop{
This data set consists of Zalando's article images. 
It contains a training set of 60,000 grayscale images and a testing set of 10,000 grayscale images. 
The pixel resolution of every image is 28x28. Each image is associated with a label from ten classes.
We build two data sets for binary classification by pairing up four classes of the FMNIST data set.
}

\subsection{What Are the LLCs Look Like?}
We demonstrate our claim in Theorem~\ref{thm:polytope} by visualizing the LLCs of the PLNN trained on SYN. 

Figures~\ref{Fig:exp1}(a)-(b) show the training instances of SYN and the prediction results of PLNN on the training instances, respectively. 
Since all instances are used for training, the prediction accuracy is 99.9\%.

In Figure~\ref{Fig:exp1}(c), we plot all instances with the same configuration in the same colour. Clearly, all instances with the same configuration are contained in the same convex polytope.
This demonstrates our claim in Theorem~\ref{thm:polytope}.

Figure~\ref{Fig:exp1}(d) shows the LLCs whose convex polytopes cover the decision boundary of PLNN and contain both positive and negative instances.
As it is shown, the solid lines show the decision boundaries of the LLCs, which capture the difference between positive and negative instances, and form the overall decision boundary of PLNN.
A convex polytope that does not cover the boundary of PLNN contains a single class of instances. 
The LLCs of these convex polytopes capture the common features of the corresponding class of instances.
As to be analyzed in the following subsections, the set of LLCs produce exactly the same prediction as PLNN, and also capture meaningful decision features that are easy to understand.

\nop{In summary, This demonstrates our claim in Theorem~\ref{thm:polytope}.}

\nop{The decision features of these LLCs capture the difference between the positive and negative instances.}

\nop{
We can observe from Figure~\ref{Fig:exp1}(a)-(c) that the convex polytopes on the decision boundary of PLNN contain both positive and negative instances. 

The LLCs of these convex polytopes are shown in Figure~\ref{Fig:exp1}(d), where each line is a linear classifier that classifies the instances within a convex polytope.
}

\nop{
As illustrated later in \todo{Section~xxx and Section~xxx}, the LLCs produce exactly the same prediction results as PLNN, and the boundaries of the convex polytopes capture interpretable common features of all instances it contains.
}

\nop{
Figure~\ref{Fig:exp1} (d) shows the LLCs of the convex polytopes on the decision boundary, each line (d) is a linear classifier that classifies the instances within a convex polytope.

that form the overall decision boundary of PLNN. 
Each line in (d) is a linear classifier that classifies the instances within a convex polytope.

As shown in Figure~\ref{Fig:exp1} (d), the LLCs of the convex polytopes on the decision boundary distinguish the positive and negative classes and form the overall decision boundary of PLNN.
}

\subsection{Are the Interpretations Exact and Consistent?}
\label{sec:exact_consist}
Exact and consistent interpretations are naturally favored by human minds.
In this subsection, we systematically study the exactness and consistency of the interpretations of LIME and $OpenBox$ on FMNIST-1 and FMNIST-2.
Since LIME is too slow to process all instances in 24 hours,
for each of FMNIST-1 and FMNIST-2, we uniformly sample 600 instances from the testing set, and conduct the following experiments on the sampled instances.

\nop{
we uniformly sampled two sets instances, each from the testing sets of FMNIST-1 and FMNIST-2, respectively.

for each of FMNIST-1 and FMNIST-2, we conduct the following experiments on a subset of 600 instances uniformly sampled from the testing set.
}

We first analyze the \textbf{exactness of interpretation} by comparing the predictions computed by the local interpretable model of LIME, the LLCs of $OpenBox$ and  PLNN, respectively.
The prediction of an instance is the probability of classifying it as a positive instance.

\newcommand{\parawidthExpTwo}{42mm}
\begin{figure}[t]
\centering
\subfigure[FMNIST-1]{\includegraphics[width=\parawidthExpTwo]{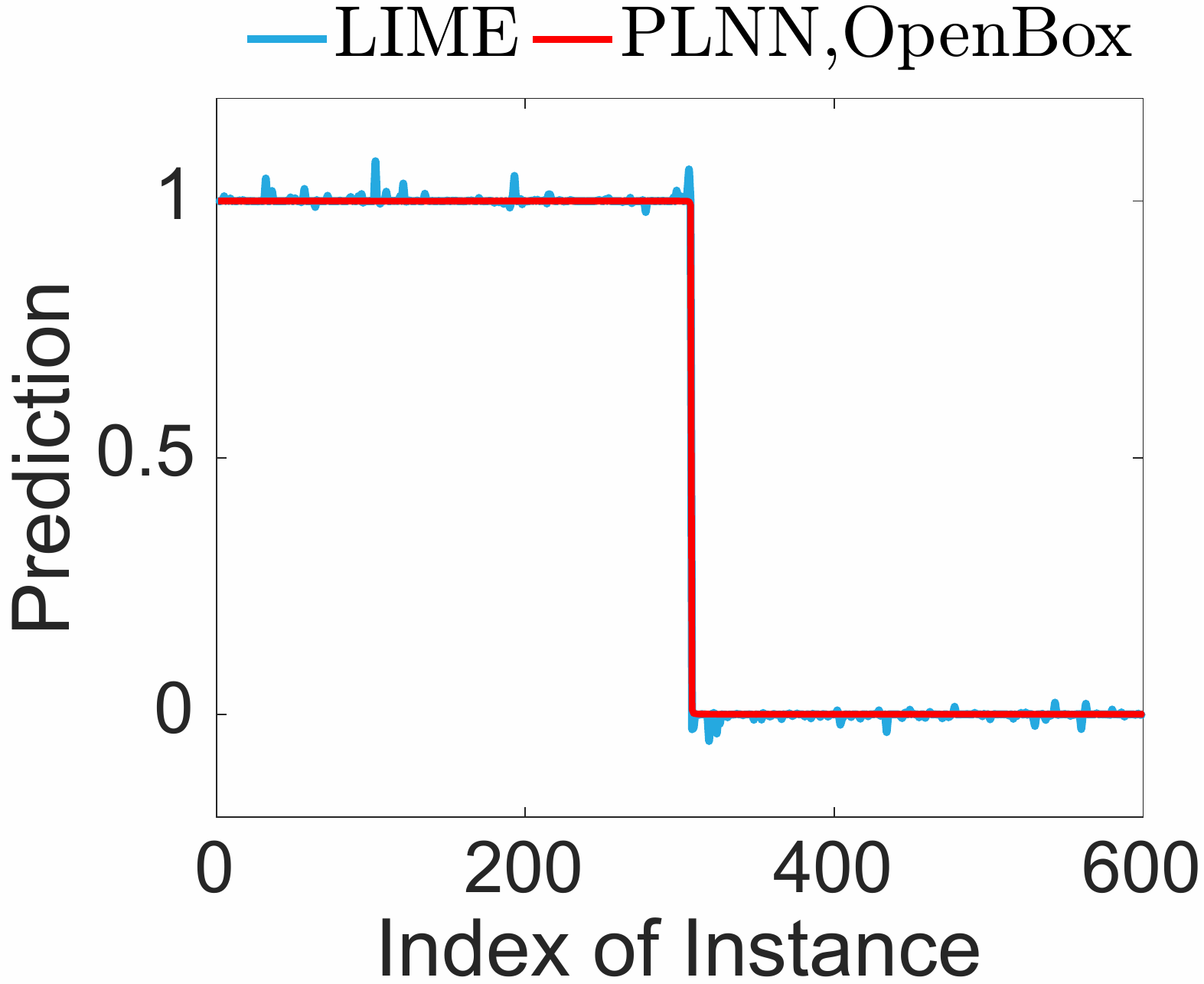}}
\subfigure[FMNIST-2]{\includegraphics[width=\parawidthExpTwo]{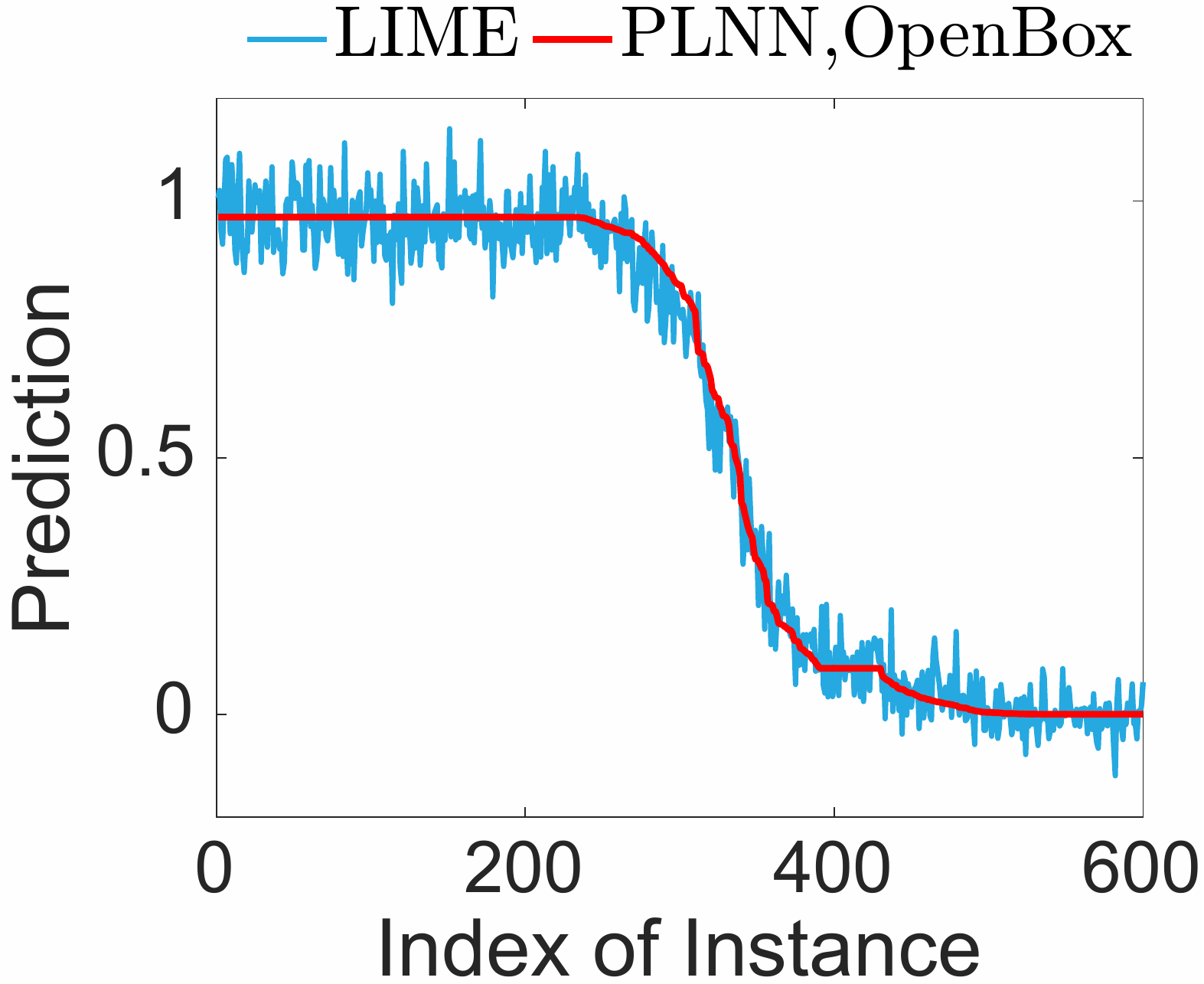}}
\caption{The predictions of LIME, OpenBox and PLNN. \rev{The predictions of all methods are computed individually and independently.} We sort the results by PLNN's predictions in descending order.
}
\label{Fig:exp2}
\end{figure}

\newcommand{\parawidthExpThree}{42mm}
\begin{figure}[t]
\centering
\subfigure[FMNIST-1]{\includegraphics[width=\parawidthExpThree]{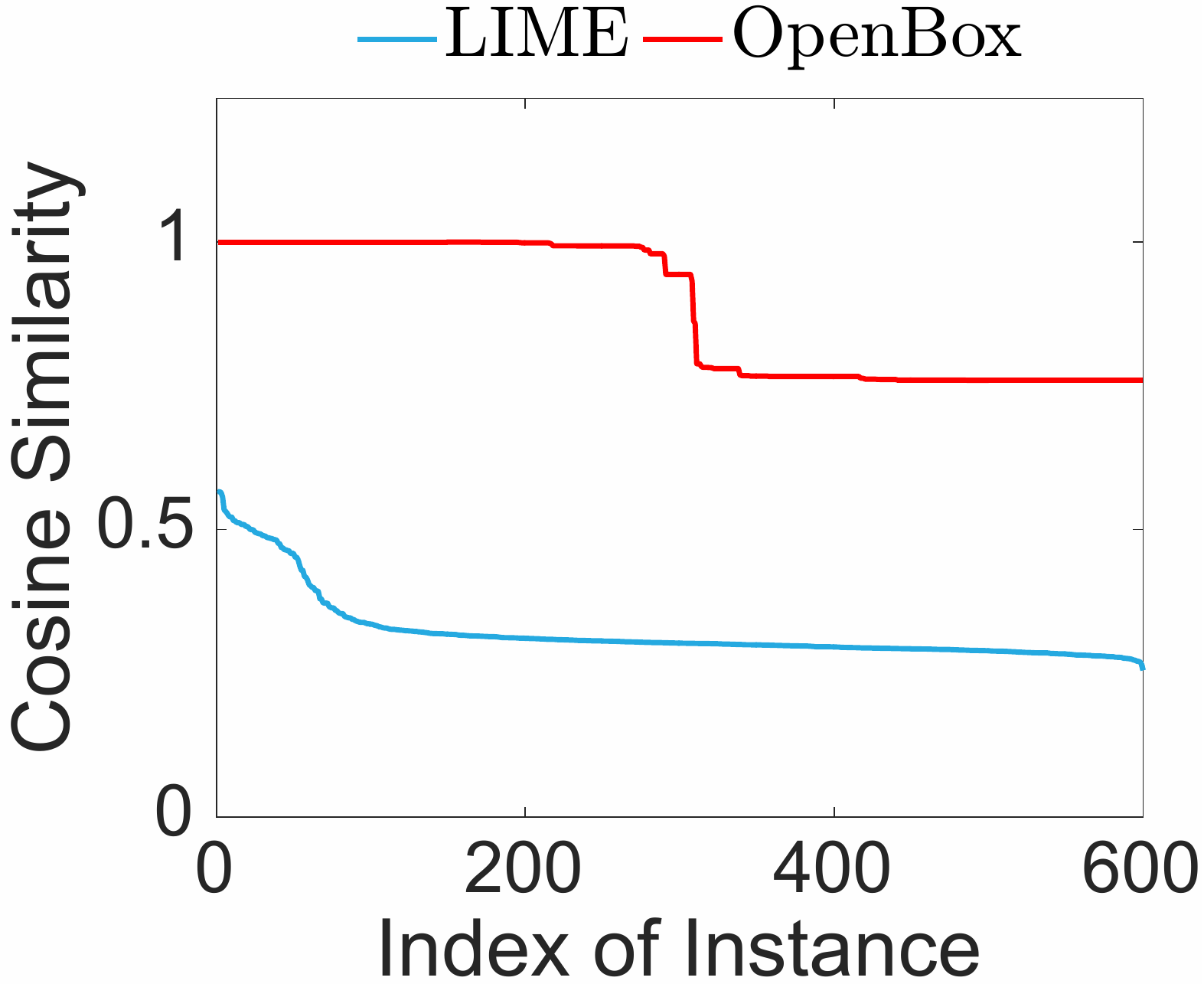}}
\subfigure[FMNIST-2]{\includegraphics[width=\parawidthExpThree]{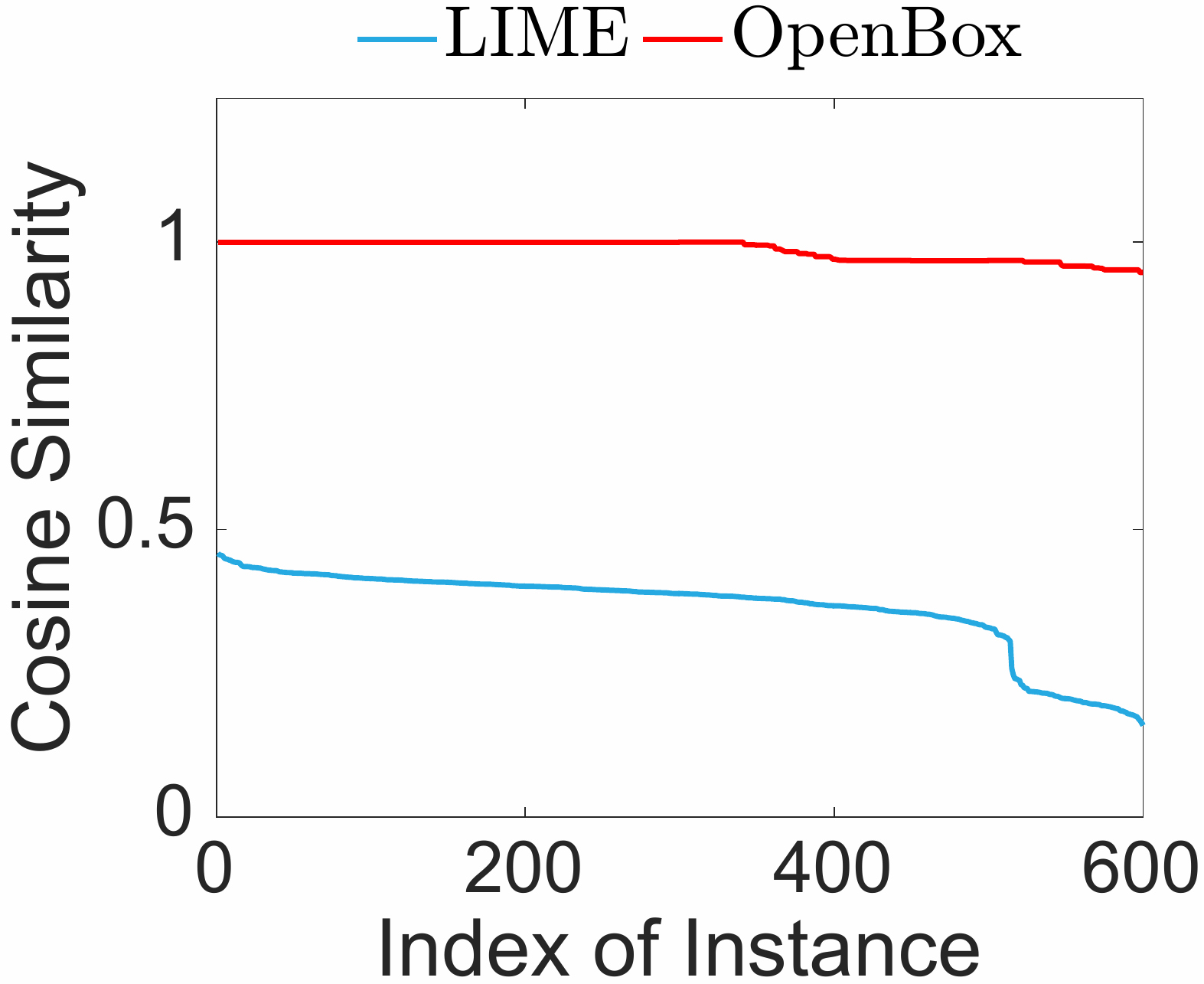}}
\caption{The cosine similarity between the decision features of each instance and its nearest neighbour. The results of LIME and $OpenBox$ are separately sorted by cosine similarity in descending order.}
\label{Fig:exp3}
\end{figure}

In Figure~\ref{Fig:exp2}, since LIME does not guarantee zero approximation error on the local predictions of PLNN, 
the predictions of LIME are not exactly the same as PLNN on FMNIST-1, and are dramatically different from PLNN on FMNIST-2.
The difference of predictions is more significant on FMNIST-2, because the images in FMNIST-2 are more difficult to distinguish, which makes the decision boundary of PLNN more complicated and harder to approximate.
We can also see that the predictions of LIME exceed $[0, 1]$. This is because the output of the interpretable model of LIME is not a probability at all.
As a result, it is arguable that the interpretations computed by LIME may not truthfully describe the exact behavior of PLNN.
In contrast, since the set of LLCs computed by $OpenBox$ is mathematically equivalent to $F(\cdot)$ of PLNN, the predictions of $OpenBox$ are exactly the same as PLNN on all instances.
Therefore, the decision features of LLCs exactly describe the overall behavior of PLNN.

\nop{
there is no guarantee that the interpretations computed by LIME exactly describe the behavior of PLNN.
}

\nop{Since the images in FMNIST-2 are more difficult to separate than the images in FMNIST-1, the difference between the predictions of LIME and PLNN is more significant on FMNIST-2.}

Next, we study the \textbf{interpretation consistency} of LIME and $OpenBox$ by analyzing the similarity between the interpretations of similar instances.

In general, a consistent interpretation method should provide similar interpretations for similar instances.
For an instance $\mathbf{x}$, denote by $\mathbf{x}'$ the nearest neighbor of $\mathbf{x}$ by Euclidean distance, by $\gamma, \gamma' \in\mathbb{R}^d$ the decision features for the classification of $\mathbf{x}$ and $\mathbf{x}'$, respectively.
We measure the consistency of interpretation by the cosine similarity between $\gamma$ and $\gamma'$, where a larger cosine similarity indicates a better interpretation consistency.

\nop{
For LIME and $OpenBox$, the vectors of feature weights are the coefficients that define the decision boundaries of the local interpretable model~\cite{ribeiro2016should} and the set of LLCs, respectively.
}

As shown in Figure~\ref{Fig:exp3}, the cosine similarity of $OpenBox$ is equal to 1 on about 50\% of the instances, because $OpenBox$ consistently gives the same interpretation for all instances in the same convex polytope.
Since the nearest neighbours $\mathbf{x}$ and $\mathbf{x}'$ may not belong to the same convex polytope, the cosine similarity of $OpenBox$ is not always equal to 1 on all instances.
In constrast, since LIME computes individual interpretation based on the unique local perturbations of every single instance, the cosine similarity of LIME is significantly lower than $OpenBox$ on all instances. This demonstrates the superior interpretation consistency of $OpenBox$.

In summary, the interpretations of $OpenBox$ are exact, and are much more consistent than the interpretations of LIME.

\nop{This is because $OpenBox$ consistently gives the same interpretation for all instances within the same convex polytope, }


\begin{table}[t]
\centering
\caption{The training and testing accuracy of all models.}
\label{Table:exp4_acu}
\vspace{-2mm}\small
\begin{tabular}{|c|p{11mm}<\centering|p{11mm}<\centering|p{11mm}<\centering|p{11mm}<\centering|}
    \hline
    Data Set      			& \multicolumn{2}{c|}{FMNIST-1} & \multicolumn{2}{c|}{FMNIST-2} \\ \hline
    Accuracy       		& 	Train 		& 	Test	 		& 	Train	   		& 	Test 				\\ \hline
    LR  				& 	0.998  		& 	0.997 		& 	0.847        	& 	0.839	        		\\ \hline
    LR-F  				& 	0.998  		& 	0.997 		& 	0.847        	& 	0.839	        		\\ \hline
    PLNN	     			& 	\textbf{1.000}  	& 	\textbf{0.999}	& 	\textbf{0.907}    & 	\textbf{0.868} 		 \\ \hline \hline
    
    LR-NS 				& 	0.772		& 	0.776		& 	0.711        		& 	0.698 	   		\\ \hline
    LR-NSF		  	& 	0.989  		& 	0.989		& 	0.782	 	& 	0.791        		\\ \hline
    PLNN-NS   			& 	\textbf{1.000}  	& 	\textbf{0.999}	& 	\textbf{0.894}    &	\textbf{0.867}    		\\ \hline
\end{tabular}
\end{table}

\nop{
    LR   		& 	1.00  	& 	1.00		& 	0.85          & 	0.84       	 \\ \hline
    PLNN	     	& 	1.00 		& 	1.00		& 	\textbf{0.91}  	&	\textbf{0.87}	 	 \\ \hline \hline
    
    LR-NS	 			& 	0.77  	& 	0.78		& 	0.71        	& 	0.70 	  	 \\ \hline
    LR-NS (reversed)  	& 	0.99  	& 	0.99		& 	0.78 		& 	0.79       	 \\ \hline
    PLNN-NS   			& 	\textbf{1.00}  	& 	\textbf{1.00}	& 	\textbf{0.89}          & 	\textbf{0.87}     	\\ \hline
}

\subsection{Decision Features of LLCs and the Effect of Non-negative and Sparse Constraints}
Besides exactness and consistency, a good interpretation should also have a strong semantical meaning, such that the ``thoughts'' of an intelligent machine can be easily understood by a human brain.
In this subsection, we first show the meaning of the decision features of LLCs, then study the effect of the non-negative and sparse constraints in improving the interpretability of the decision features.
The decision features of PLNN and PLNN-NS are computed by $OpenBox$.
The decision features of LR, LR-F, LR-NS and LR-NSF are used as baselines.
Table~\ref{Table:exp4_acu} shows the accuracy of all models.

\nop{
To our surprise, the decision features of the LLCs have a strong semantical meaning that is as easy to understand as the decision features of the LR models, such as LR, LR-F, LR-NS and LR-NSF.
}
\nop{
We use the decision features of LR, LR-F, LR-NS and LR-NSF as baselines, and found that the 
}

\nop{
 to interpret the PLNN and PLNN-NS trained on FMNIST-1 and FMNIST-2.
}

\nop{
interpreting the PLNN and PLNN-NS using the decision features computed by $OpenBox$.

Table~\ref{Table:exp4_acu} shows the training and testing accuracy of all models on FMNIST-1 and FMNIST-2.
Clearly, both PLNN and PLNN-NS achieve the best accuracy on both data ses.
As shown in Table~\ref{Table:exp4_acu}, both PLNN and PLNN-NS achieve the best accuracy on FMNIST-1 and FMNIST-2.

What are the reasons for PLNN and PLNN-NS to achieve the best accuracy on both data sets?
Next, we apply $OpenBox$ study this problem.
}

Figure~\ref{Fig:decision_boundary_89} shows the decision features of all models on FMNIST-1.
Interestingly, the decision features of PLNN are as easy to understand as the decision features of LR and LR-F. All these features clearly highlight meaningful image parts, such as the ankle and heel of \emph{Ankle Boot}, and the upper left corner of \emph{Bag}.
A closer look at the the average images suggests that these decision features describe the difference between \emph{Ankle Boot} and \emph{Bag}.

The decision features of PLNN capture more detailed difference between \emph{Ankle Boot} and \emph{Bag} than the decision features of LR and LR-F. 
This is because the LLCs of PLNN only capture the difference between a subset of instances within a convex polytope, however, LR and LR-F capture the overall difference between all instances of \emph{Ankle Boot} and \emph{Bag}.
The accuracies of PLNN, LR and LR-F are comparable because the instances of \emph{Ankle Boot} and \emph{Bag} are easy to distinguish.
However, as to be shown in Figure~\ref{Fig:decision_boundary_24}, when the instances are hard to distinguish, PLNN captures much more detailed features than LR and LR-F, and achieves a significantly better accuracy.

\nop{
In Figures~\ref{Fig:decision_boundary_89}(d) and \ref{Fig:decision_boundary_89}(i), LR and LR-F learn the difference between the average images of \emph{Ankle Boot} and \emph{Bag}, such as the ankle and heel of \emph{Ankle Boot}, and the upper left corner of \emph{Bag}. The reason is that both LR and LR-F are linear classifiers that capture the overall difference between all instances of \emph{Ankle Boot} and \emph{Bag}.

In Figures~\ref{Fig:decision_boundary_89}(f) and \ref{Fig:decision_boundary_89}(k), the decision features of the PLNN are similar with that of LR and LR-F.

capture more detailed difference between \emph{Ankle Boot} and \emph{Bag} than the decision features of LR and LR-F. 
This is because the LLC of the PLNN only captures the difference between a subset of instances of \emph{Ankle Boot} and \emph{Bag} within a convex polytope.
}

\nop{
In Figures~\ref{Fig:decision_boundary_89}(e), \ref{Fig:decision_boundary_89}(g), \ref{Fig:decision_boundary_89}(j) and \ref{Fig:decision_boundary_89}(l), the non-negative and sparse constraints force LR-NS, LR-NSF and PLNN-NS to learn meaningful decision features, which highlight the ankle and heel of \emph{Ankle Boot}, as well as the upper left corner and the middle bottom of \emph{Bag}.
}

\nop{
for PLNN and PLNN-NS, the decision features computed by $OpenBox$ are as meaningful as the decision features of the LR models.
}

\nop{
This demonstrates the effectiveness of $OpenBox$ in extracting human understandable decision features of PLNN models.
}
\nop{
that are positively related to \emph{Ankle Boot} and \emph{Bag}, respectively.
The decision features of these models

The non-negative and sparse constraints force LR-NS and LR-NSF to learn sparse decision features that are positively related to \emph{Ankle Boot} and \emph{Bag}, respectively.
\nop{As shown in Figures~\ref{Fig:decision_boundary_89}(e) and \ref{Fig:decision_boundary_89}(j), $\emph{Bag}$ has stronger positively related features than \emph{Ankle Boot}, thus LR-NSF achieves a better accuracy than LR-NS in Table~\ref{Table:exp4_acu}.}

Interestingly, the non-negative and sparse constraints has a similar effect on PLNN-NS.
As shown in Figures~\ref{Fig:decision_boundary_89}(g) and \ref{Fig:decision_boundary_89}(l), PLNN-NS 
also learns sparse decision features that are positively related to \emph{Ankle Boot} and \emph{Bag}, respectively. 
}

\nop{
Again, since the LLC of PLNN-NS captures the decision features for only a subset of instances within a convex polytope, the decision features of PLNN-NS contain more details than LR-NS and LR-NSF.
}

\nop{
As shown in Table~\ref{Table:exp4_acu}, the accuracy of LR models and PLNN models are comparable on FMNIST-1 because \emph{Ankle Boot} and \emph{Bag} are easy to separate.
Next, we show our experiment results on FMNIST-2, which provide comprehensive insights into the superior accuracy performance of PLNN and PLNN-NS.
}

\nop{, in which the images of \emph{Coat} and \emph{Pullover} are much harder to separate than the images of \emph{Ankle Boot} and \emph{Bag} in FMNIST-1.}

\nop{
Comparing to FMNIST-1, the \emph{Coat} and \emph{Pullover} in FMNIST-2 are much harder to separate.
}

\nop{
thus better features are required to achieve a good classification accuracy.
}

Figure~\ref{Fig:decision_boundary_24} shows the decision features of all models on FMNIST-2.
As it is shown, LR and LR-F capture decision features with a strong semantical meaning, such as the collar and breast of \emph{Coat}, and the shoulder of \emph{Pullover}.
However, these features are too general to accurately distinguish between \emph{Coat} and \emph{Pullover}.
Therefore, LR and LR-F do not achieve a high accuracy.
Interestingly, the decision features of PLNN capture much more details than LR and LR-F, which leads to the superior accuracy of PLNN.

The superior accuracy of PLNN comes at the cost of cluttered decision features that may be hard to understand.
Fortunately, applying non-negative and sparse constraints on PLNN effectively improves the interpretability of the decision features without affecting the classification accuracy.

\nop{
According to the above results, the decision features of PLNN, LR and LR-F highlight similar semantical parts of the images.
This demonstrates the strong semantical meaning of the decision features computed by $OpenBox$.
}
\nop{
highlight similar parts of \emph{Coat} and \emph{Pullover} as the collar, breast and shoulder that are highlighted by the LR models.
}

\nop{
have a similar semantical meaning as the decision features of LR models, however, the decision features of PLNN captures more details. 
}
\nop{
much more detail than LR and LR-F, thus PLNN achieves the best accuracy.
}
\nop{
Since a PLNN is equivalent to a set of LLCs, each of which learns detailed differences between a small subset of instances, 
the overall description strength of a PLNN is much stronger than the single linear classifier of a LR model.
}

\newcommand{\parawidthExpFour}{15mm}
\newcommand{\parawidthExpFourII}{9.5mm}
\newcommand{\paraheightLegend}{15mm}
\begin{figure}[t]

\begin{minipage}[t]{1\linewidth}
\centering
\subfigure[Avg. Image]{\includegraphics[width=\parawidthExpFour]{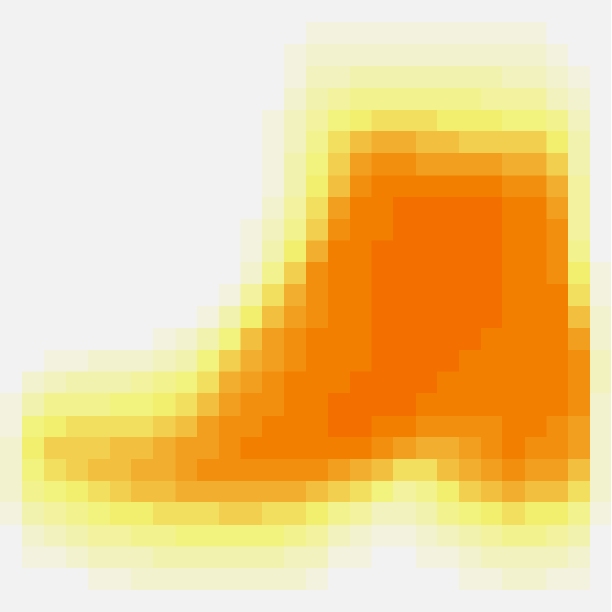}}
\subfigure[LR]{\includegraphics[width=\parawidthExpFour]{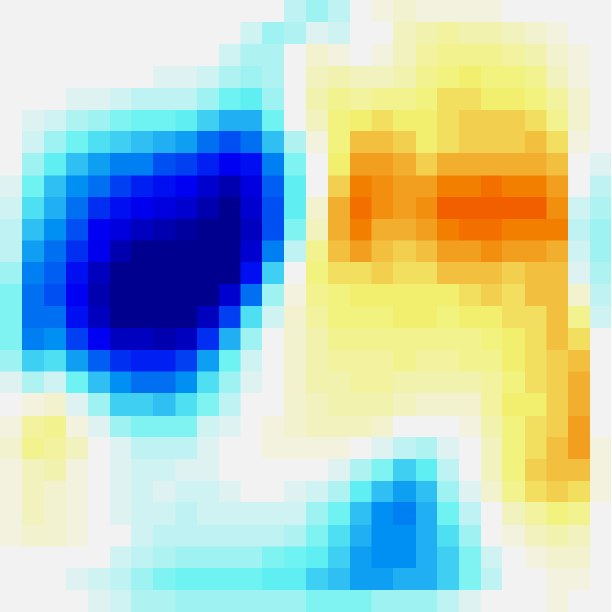}}
\subfigure[LR-NS]{\includegraphics[width=\parawidthExpFour]{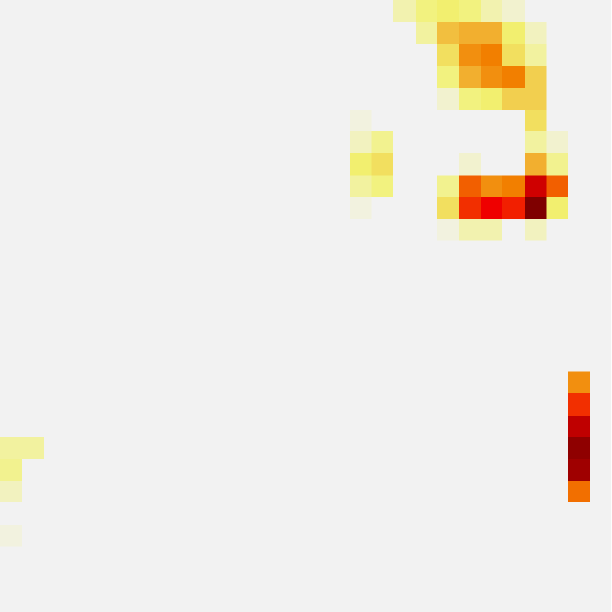}}
\subfigure[PLNN]{\includegraphics[width=\parawidthExpFour]{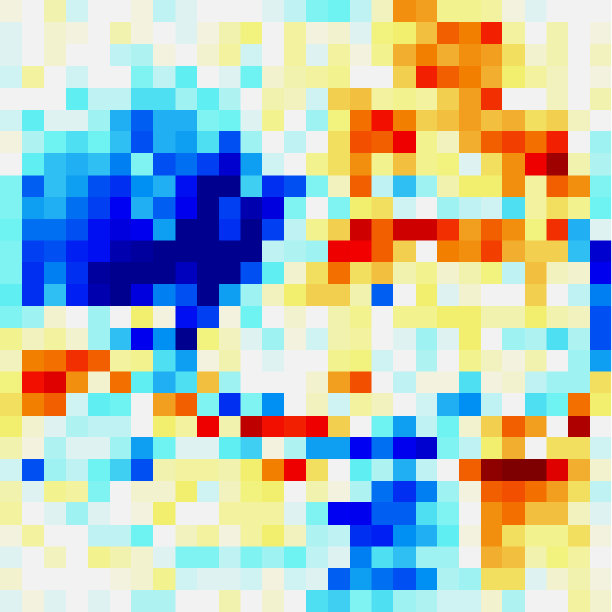}}
\subfigure[PLNN-NS]{\includegraphics[width=\parawidthExpFour]{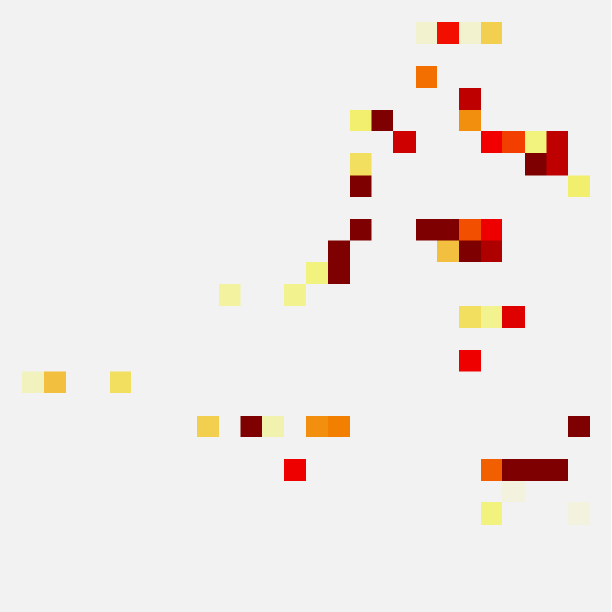}}
\includegraphics[height=\paraheightLegend]{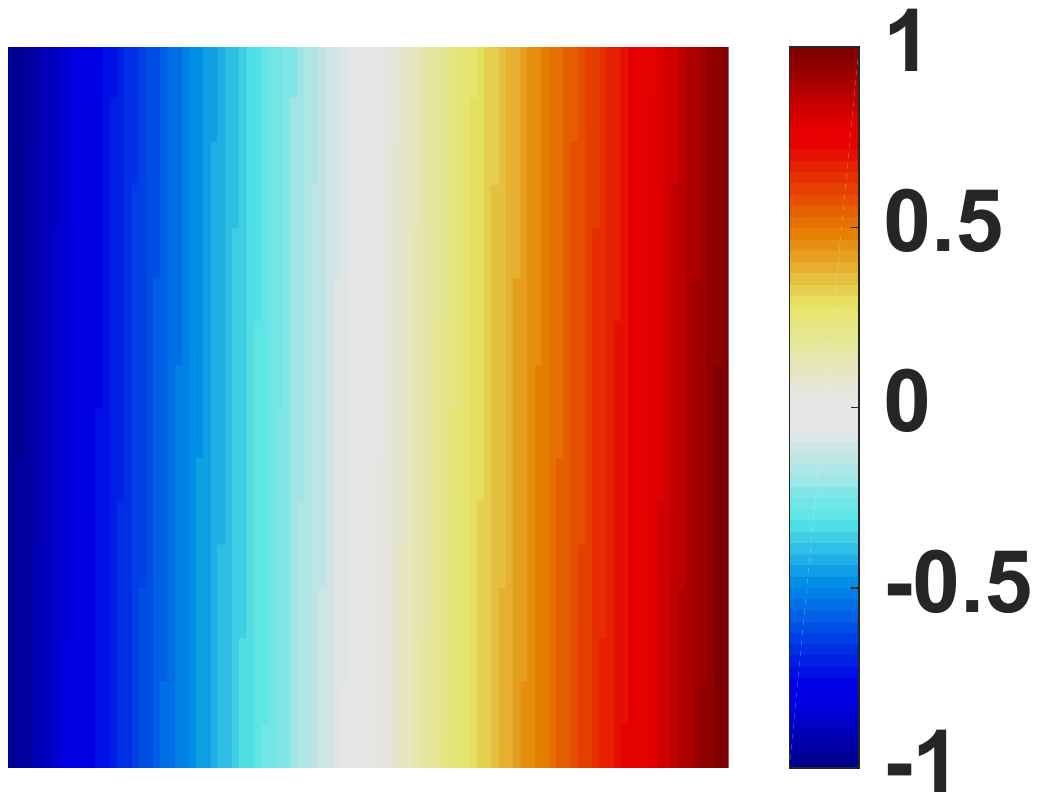}

\subfigure[Avg. Image]{\includegraphics[width=\parawidthExpFour]{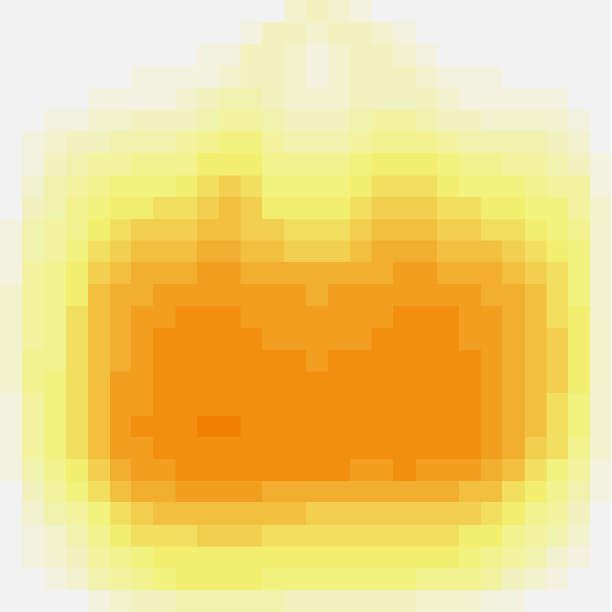}}
\subfigure[LR-F]{\includegraphics[width=\parawidthExpFour]{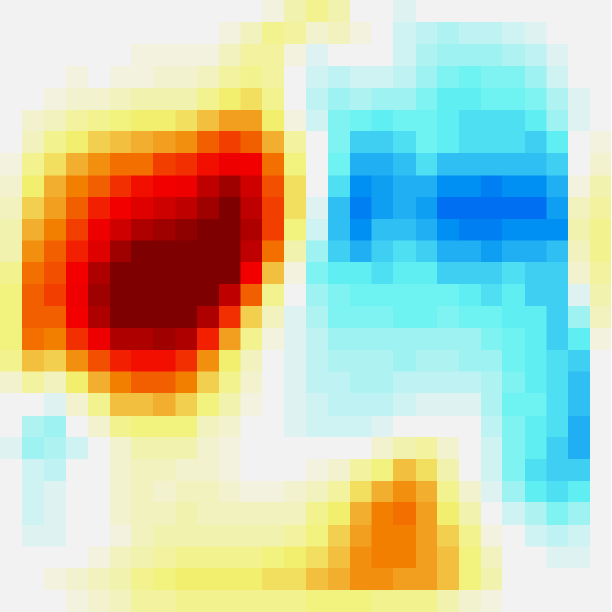}}
\subfigure[LR-NSF]{\includegraphics[width=\parawidthExpFour]{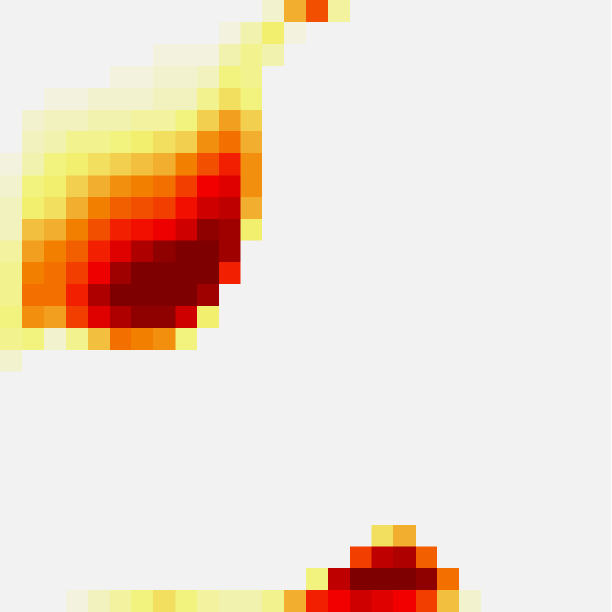}}
\subfigure[PLNN]{\includegraphics[width=\parawidthExpFour]{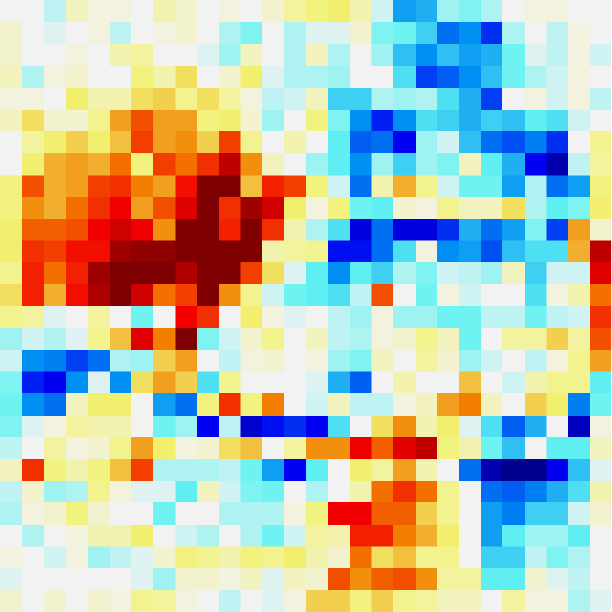}}
\subfigure[PLNN-NS]{\includegraphics[width=\parawidthExpFour]{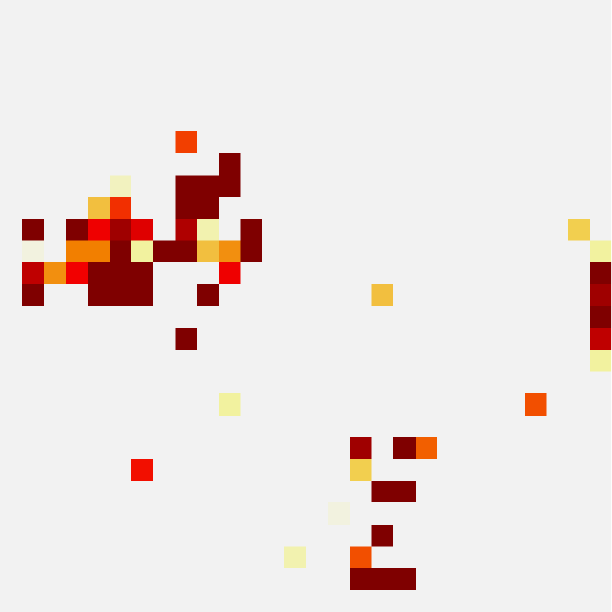}}
\includegraphics[height=\paraheightLegend]{Figures/legend}
\end{minipage}
\caption{The decision features of all models on FMNIST-1.
(a)-(e) and (f)-(j) show the average image and the decision features of all models for \emph{Ankle Boot} and \emph{Bag}, respectively.
For PLNN and PLNN-NS, we show the decision features of the LLC whose convex polytope contains the most instances.
}
\label{Fig:decision_boundary_89}
\end{figure}

\begin{figure}[t]
\centering
\begin{minipage}[t]{1\linewidth}
\centering
\subfigure[Avg. Image]{\includegraphics[width=\parawidthExpFour]{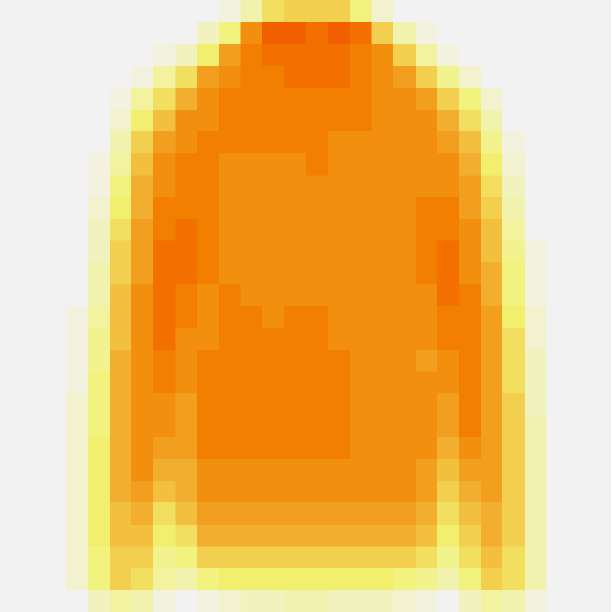}}
\subfigure[LR]{\includegraphics[width=\parawidthExpFour]{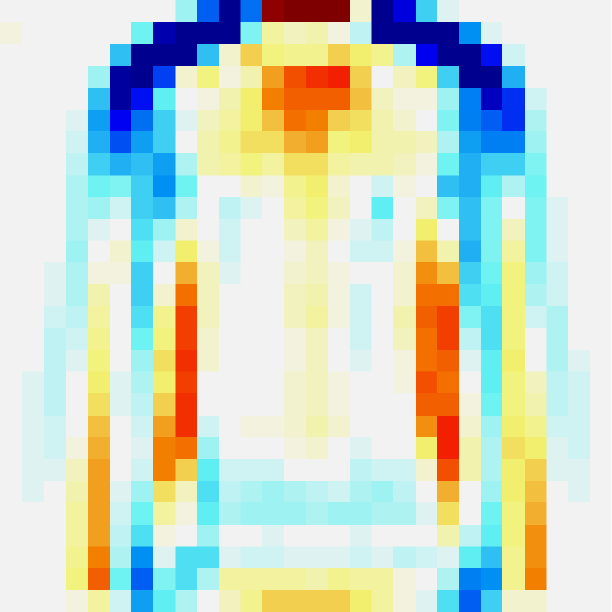}}
\subfigure[LR-NS]{\includegraphics[width=\parawidthExpFour]{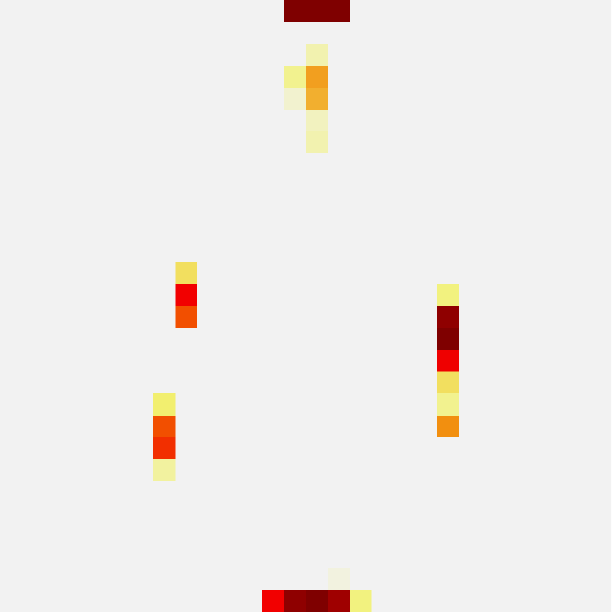}}
\subfigure[PLNN]{\includegraphics[width=\parawidthExpFour]{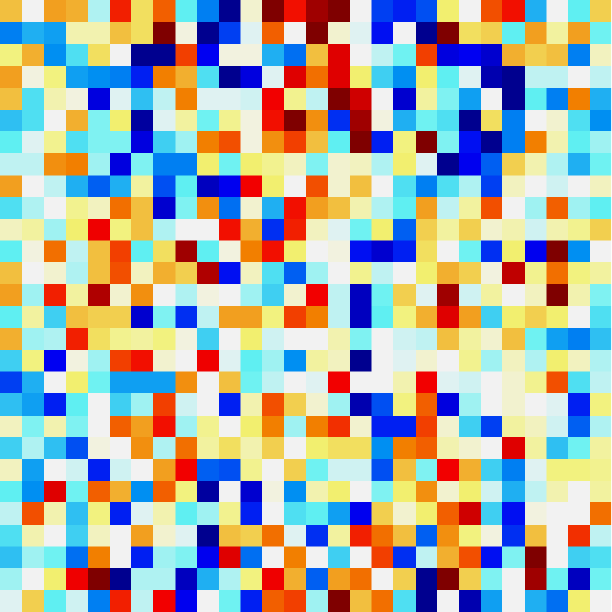}}
\subfigure[PLNN-NS]{\includegraphics[width=\parawidthExpFour]{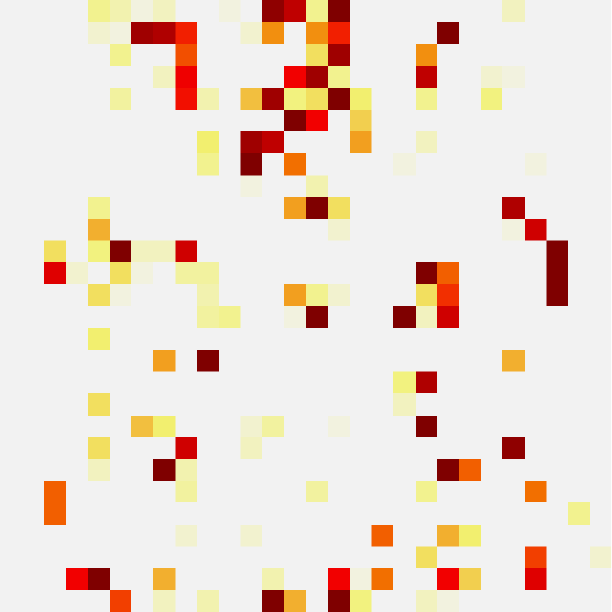}}
\includegraphics[height=\paraheightLegend]{Figures/legend}

\subfigure[Avg. Image]{\includegraphics[width=\parawidthExpFour]{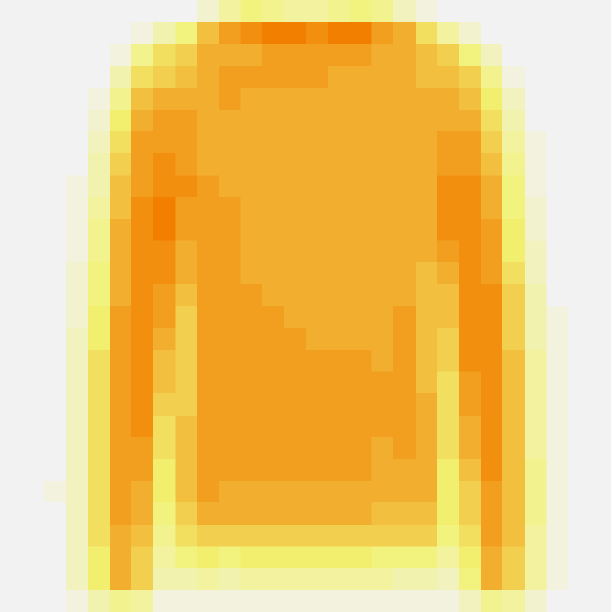}}
\subfigure[LR-F]{\includegraphics[width=\parawidthExpFour]{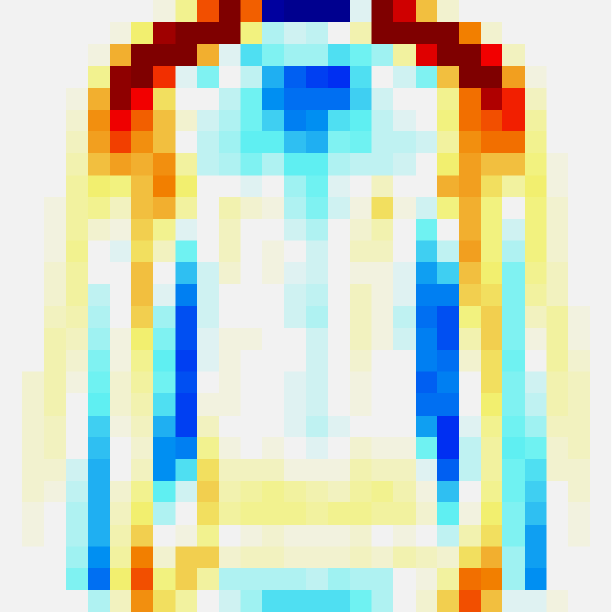}}
\subfigure[LR-NSF]{\includegraphics[width=\parawidthExpFour]{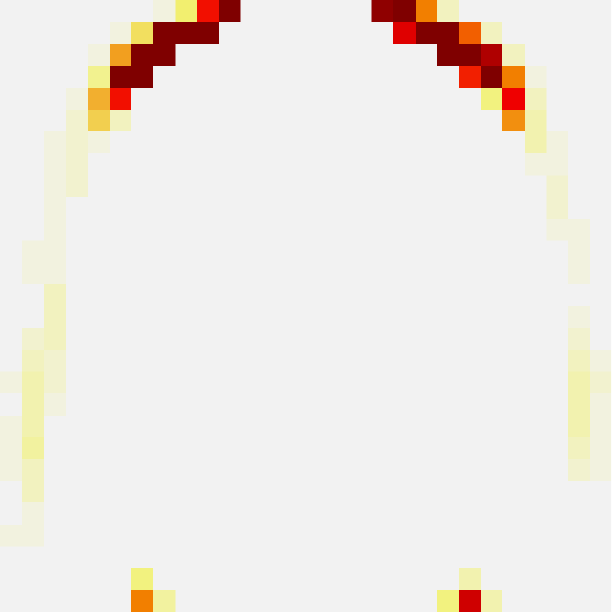}}
\subfigure[PLNN]{\includegraphics[width=\parawidthExpFour]{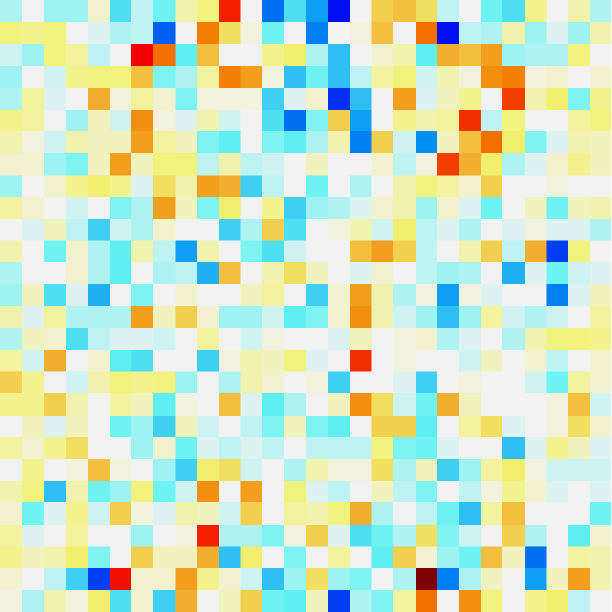}}
\subfigure[PLNN-NS]{\includegraphics[width=\parawidthExpFour]{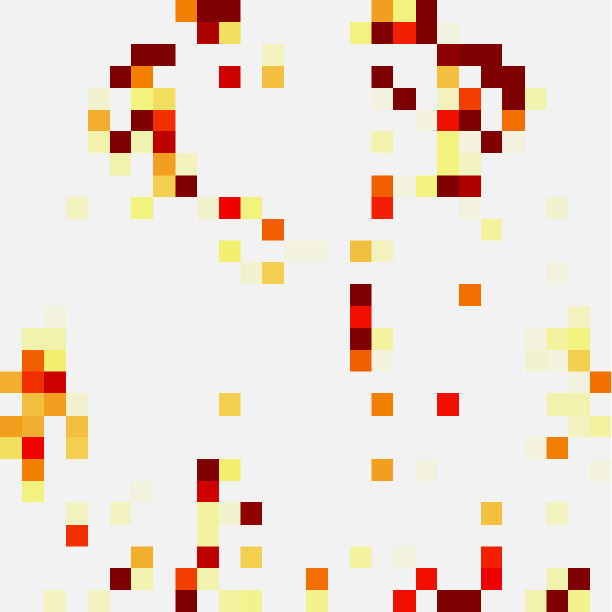}}
\includegraphics[height=\paraheightLegend]{Figures/legend}
\end{minipage}
\caption{The decision features of all models on FMNIST-2.
(a)-(e) and (f)-(j) show the average image and the decision features of all models for \emph{Coat} and \emph{Pullover}, respectively.
For PLNN and PLNN-NS, we show the decision features of the LLC whose convex polytope contains the most instances.
}
\label{Fig:decision_boundary_24}
\end{figure}

In Figures~\ref{Fig:decision_boundary_89} and~\ref{Fig:decision_boundary_24}, the decision features of PLNN-NS highlight similar image parts as LR-NS and LR-NSF, and are much easier to understand than the decision features of PLNN.
In particular, in Figure~\ref{Fig:decision_boundary_24}, the decision features of PLNN-NS clearly highlight the collar and breast of \emph{Coat}, and the shoulder of \emph{Pullover}, which are much easier to understand than the cluttered features of PLNN.
These results demonstrate the effectiveness of non-negative and sparse constraints in selecting meaningful features.
Moreover, the decision features of PLNN-NS capture more details than LR-NS and LR-NSF, thus PLNN-NS achieves a comparable accuracy with PLNN, and significantly outperforms LR-NS and LR-NSF on FMNIST-2.

In summary, the decision features of LLCs are easy to understand, and the non-negative and sparse constraints are highly effective in improving the interpretability of the decision features of LLCs.

\nop{
significantly outperforms the LR-NS and LR-NSF, and is comparable with PLNN.
}
\nop{Particularly, in Figures~\ref{Fig:decision_boundary_24}, the decision features of PLNN-NS highlight similar parts of \emph{Coat} and \emph{Pullover} as the collar, breast and shoulder that are highlighted by LR-NS and LR-NSF.}

\nop{
 the non-negative and sparse constraints force PLNN-NS to select meaningful non-negative and sparse decision features that are similar as the decision features of LR-NS and LR-NSF in Figures~\ref{Fig:decision_boundary_89}(e) and \ref{Fig:decision_boundary_89}(j), respectively.

As shown later in Figure~\ref{Fig:decision_boundary_24}, the non-negative and sparse constraints effectively improve the semantical meaning of the decision features of PLNN-NS.
This confirms the effectiveness of non-negative and sparse constraints in selecting meaningful features,

Since the decision features of PLNN are too cluttered to understand, we apply the non-negative and sparse constraints to force PLNN-NS to select meaningful decision features.
As shown in Figures~\ref{Fig:decision_boundary_24}(g) and \ref{Fig:decision_boundary_24}(l), the decision features of PLNN-NS highlight similar parts of \emph{Coat} and \emph{Pullover} as the collar, breast and shoulder highlighted by the LR models. 

This confirms the effectiveness of non-negative and sparse constraints in selecting meaningful features, and also demonstrates the usefulness of $OpenBox$ in computing valid and meaningful interpretations for PLNN models.
More over, since the decision features of PLNN-NS capture more details than LR models, the accuracy of PLNN-NS significantly outperforms the LR models, and is comparable with PLNN.
}
\nop{
In Figures~\ref{Fig:decision_boundary_24}(f)-(g) and \ref{Fig:decision_boundary_24}(k)-(l) that the decision features of PLNN-NS are much easier to understand than that of PLNN.
This demonstrates that PLNN-NS is similar to conventional classification models in the sense that applying non-negative and sparse constraints also force PLNN-NS to select meaningful features.
}

\nop{Table~\ref{Table:exp4_acu} shows the training and testing accuracy of all models on FMNIST-1 and FMNIST-2.
People may wonder why PLNN and PLNN-NS achieve the best accuracy on both data sets.
Next, we study this problem by interpreting the PLNN and PLNN-NS using the decision features computed by $OpenBox$.
Since a PLNN is equivalent to a set of LLCs, each of which learns detailed differences between a small subset of instances, 
the overall description strength of a PLNN is much stronger than the single linear classifier of a LR model.
As a result }

\nop{
Since PLNN is equivalent to a set of LLCs and each LLC learns detailed decision features for a subset of instances in a convex polytope, the decision features of PLNN are much more descriptive than the decision features of LR and LR-F, thus PLNN achieves a significantly better accuracy in Table~\ref{Table:exp4_acu}.
}

\nop{
enhances the interpretability of selected decision features.
}
\nop{
 that non-negative and sparse constraints are also effective in forcing the PLNN-NS to select meaningful decision features. 
}

\nop{As shown in Figures~\ref{Fig:decision_boundary_24}(d)-(e) and \ref{Fig:decision_boundary_24}(i)-(j), the decision features of LR and LR-F still reflect the difference between the average images of \emph{Coat} and \emph{Pullover}; }

\newcommand{\parawidthExpFive}{15mm}
\newcommand{\paraheightLegendII}{15mm}
\begin{figure}[t]
\centering
\subfigure[$\mathbf{z}_6^{(2)}$]{\includegraphics[width=\parawidthExpFive]{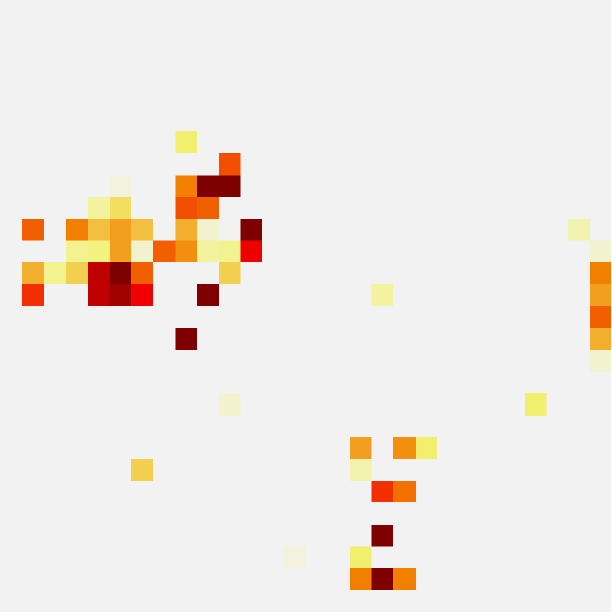}}
\subfigure[$\mathbf{z}_{11}^{(2)}$]{\includegraphics[width=\parawidthExpFive]{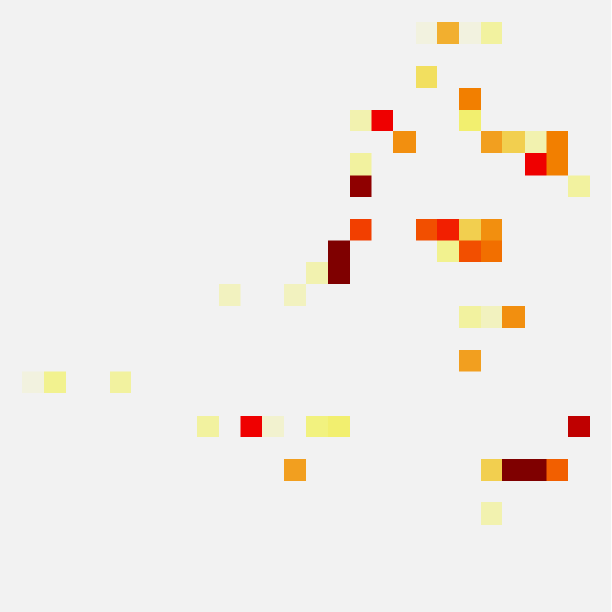}}
\subfigure[$\mathbf{z}_2^{(3)}$]{\includegraphics[width=\parawidthExpFive]{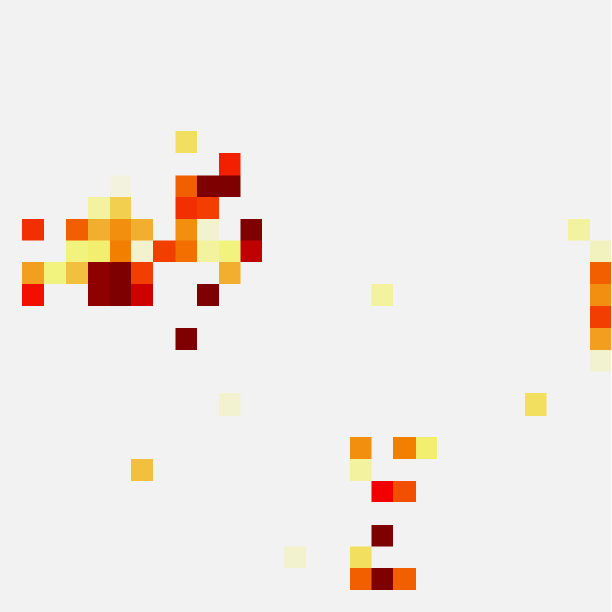}}
\subfigure[$\mathbf{z}_4^{(3)}$]{\includegraphics[width=\parawidthExpFive]{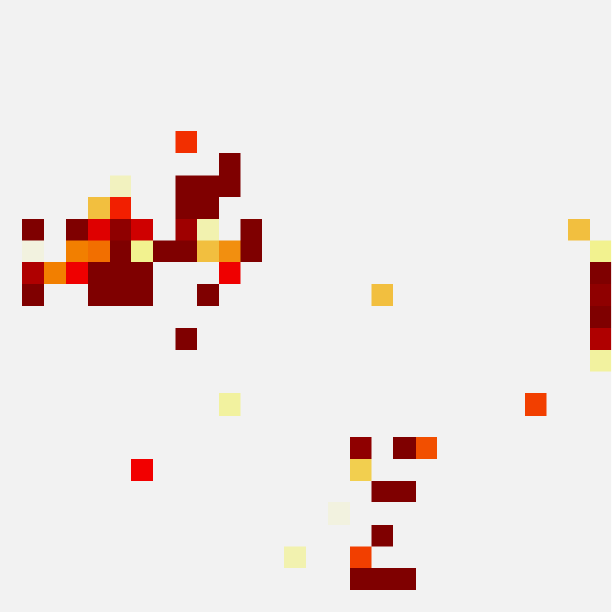}}
\includegraphics[height=\paraheightLegendII]{Figures/legend}

\subfigure[$\mathbf{z}_4^{(2)}$]{\includegraphics[width=\parawidthExpFive]{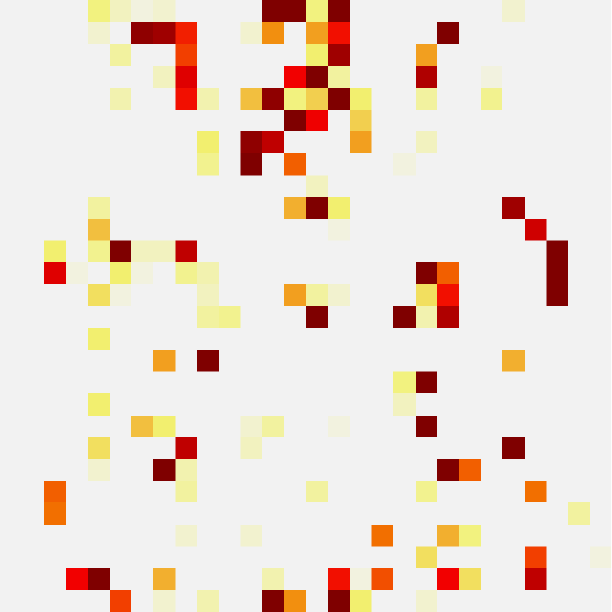}}
\subfigure[$\mathbf{z}_5^{(2)}$]{\includegraphics[width=\parawidthExpFive]{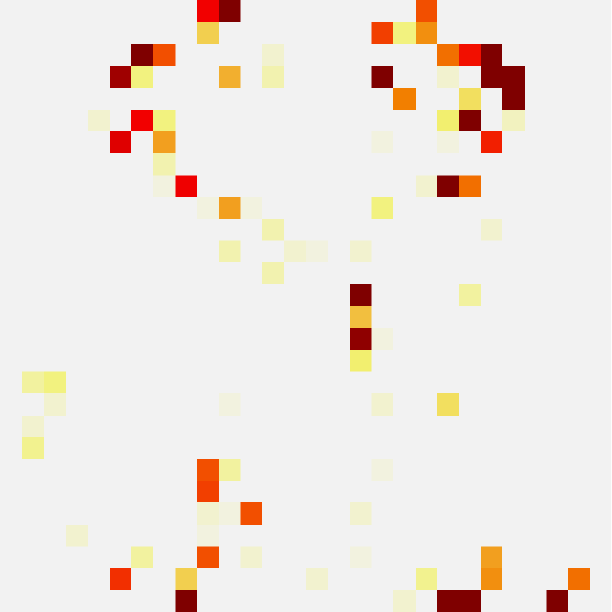}}
\subfigure[$\mathbf{z}_8^{(2)}$]{\includegraphics[width=\parawidthExpFive]{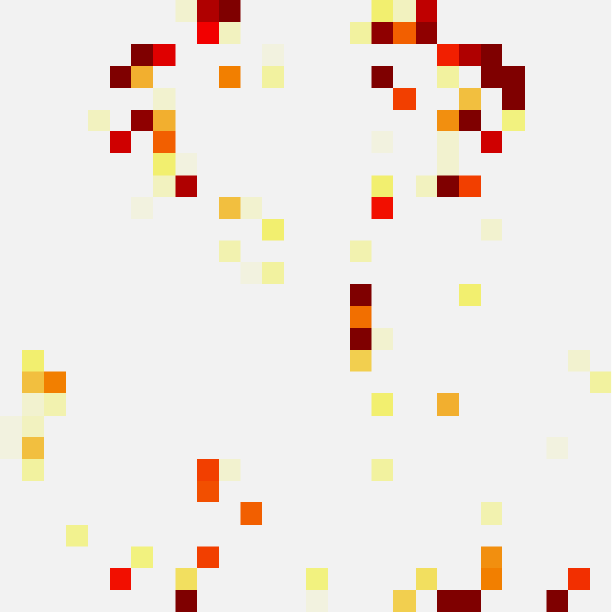}}
\subfigure[$\mathbf{z}_2^{(3)}$]{\includegraphics[width=\parawidthExpFive]{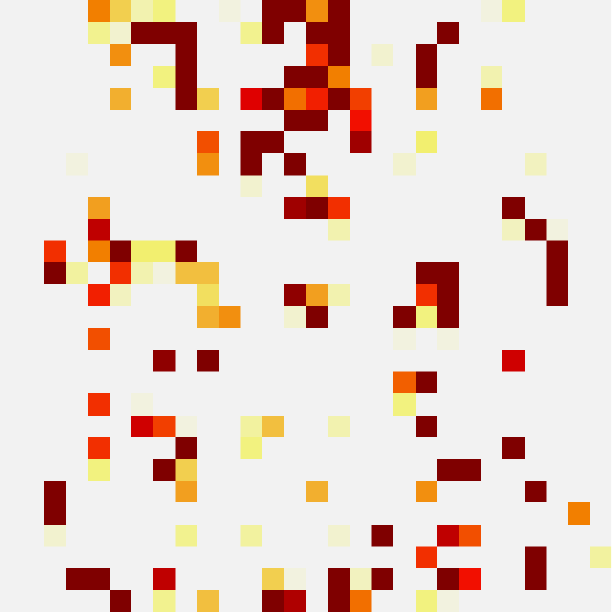}}
\includegraphics[height=\paraheightLegendII]{Figures/legend}
\caption{(a)-(d) show the PBFs of the PLNN-NS on FMNIST-1. (e)-(h) show the PBFs of the PLNN-NS on FMNIST-2.}
\label{Fig:polytope_bound}
\end{figure}

\begin{table}[t]
\centering
\caption{The PBs of the top-3 convex polytopes (CP) containing the most instances in FMNIST-1. ``/'' indicates a redundant linear inequality. Accuracy is the training accuracy of LLC on each CP.}
\label{Table:exp5_conf1}
\vspace{-2mm}\small
\begin{tabular}{|c|c|c|c|c|c|c|c|}
\hline
CP	& 	$\mathbf{z}_6^{(2)}$		&	$\mathbf{z}_{11}^{(2)}$		&	$\mathbf{z}_2^{(3)}$		&	$\mathbf{z}_4^{(3)}$  	&	\#\emph{Ankle Boot}		&	\#\emph{Bag}	&	Accuracy 	\\ \hline
	1		&	/					&	$>0$						&	$>0$					&	/					&	3,991				&	3,997	&	0.999	\\ \hline
	2		&	$\leq 0$				&	$>0$						&	/					&	$\leq 0$				&	9					&	0		&	1.000	\\ \hline
	3		&	/					&	$\leq 0$					&	/					&	$>0$					& 	0					&	3		&	1.000	\\ \hline
\end{tabular}
\end{table}

\begin{table}[t]
\centering
\caption{The PBs of the top-3 convex polytopes (CP) containing the most instances in FMNIST-2. Accuracy is the training accuracy of LLC on each CP.}
\label{Table:exp5_conf2}
\vspace{-2mm}\small
\begin{tabular}{|c|c|c|c|c|c|c|c|}
\hline
CP 	&	$\mathbf{z}_4^{(2)}$	&	$\mathbf{z}_5^{(2)}$  &	$\mathbf{z}_8^{(2)}$	&	$\mathbf{z}_2^{(3)}$  &	\#\emph{Coat}	&\#\emph{Pullover}	&	Accuracy 	\\ \hline
	
	1		&	$>0$		&	$>0$		&	$>0$		&	$>0$ 	&	3,932		&	3,942	&	0.894	\\ \hline
	
	2		&	$>0$		&	$\leq 0$	&	$>0$		&	$>0$		&	32			&	10		&	0.905	\\ \hline
	
	3		&	$>0$		&	$\leq 0$	& 	$\leq 0$	&	$>0$		& 	18			&	0		&	0.944	\\ \hline
\end{tabular}
\end{table}

\subsection{Are PBFs of LLCs Easy to Understand?}
\label{sec:apleu}
\nop{The polytope boundaries defines the domain of each LLC.}
The \textbf{polytope boundary features} (\textbf{PBFs}) of \textbf{polytope boundaries} (\textbf{PBs}) interpret why an instance is contained in the convex polytope of a LLC.
In this subsection, we study the semantical meaning of PBFs.
Limited by space, we only use the PLNN-NS models trained on FMNIST-1 and FMINST-2 as the target model to interpret.
The LLCs of PLNN-NS are computed by $OpenBox$.

\nop{
We study the polytope boundaries of the LLCs, and found that these polytope boundaries also have strong semantical meanings, which are useful in explaining the classification result of an instance.
}

Recall that a PB is defined by a linear inequality $\mathbf{z}_i^{(l)} \in \psi(\mathbf{c}_i^{(l)})$, where the PBFs are the coefficients of $\mathbf{x}$ in $\mathbf{z}_i^{(l)}$.
Since the activation function is ReLU, $\mathbf{z}_i^{(l)} \in \psi(\mathbf{c}_i^{(l)})$ is either $\mathbf{z}_i^{(l)} > 0$ or $\mathbf{z}_i^{(l)} \leq 0$.
Since the values of PBFs are non-negative for PLNN-NS, for a convex polytope $P_h$, if $\mathbf{z}_i^{(l)} > 0$, then the images in $P_h$ strongly correlate with the PBFs of $\mathbf{z}_i^{(l)}$; 
if $\mathbf{z}_i^{(l)} \leq 0$, then the images in $P_h$ are not strongly correlated with the PBFs of $\mathbf{z}_i^{(l)}$.

\nop{
The PBFs are the set of decision features of a PB.
According to the proof of Theorem~\ref{thm:polytope}, a PB of a convex polytope $P_h$ is defined by a linear inequality $\mathbf{z}_i^{(l)} \in \psi(\mathbf{c}_i^{(l)})$ in $Q_h$. 
By Equation~\ref{eqn:simpleform}, $\mathbf{z}_i^{(l)}$ is a linear function with respect to $\mathbf{x}$. 
The coefficient of $\mathbf{x}$ in $\mathbf{z}_i^{(l)}$ is the PBFs of the PB.
Since we adopt ReLU as the activation function, $\mathbf{z}_i^{(l)} \in \psi(\mathbf{c}_i^{(l)})$ is either $\mathbf{z}_i^{(l)} > 0$ or $\mathbf{z}_i^{(l)} \leq 0$.
For PLNN-NS, the PBFs are non-negative.
Therefore, if $\mathbf{z}_i^{(l)} > 0$ defines a PB of $P_h$, then $P_h$ contains the images that have the PBFs of $\mathbf{z}_i^{(l)}$; 
if $\mathbf{z}_i^{(l)} \leq 0$, then $P_h$ does not contain the images that have the PBFs.
}

\nop{
Here, $\mathbf{z}_i^{(l)}$ is a linear function with respect to $\mathbf{x}$, and the coefficient of $\mathbf{x}$ in $\mathbf{z}_i^{(l)}$ is the vector of importance weights of the PBFs.
}

\nop{
Figure~\ref{Fig:polytope_bound} shows the PBFs of the PLNN-NS models.
Take Figure~\ref{Fig:polytope_bound}(a) as an example, 
if $\mathbf{z}_6^{(2)} > 0$ is the PB of a convex polytope $P_h$, then $P_h$ contains the images that have the PBFs in Figure~\ref{Fig:polytope_bound}(a); 
if $\mathbf{z}_6^{(2)} \leq 0$, then $P_h$ does not contain the images that have the PBFs.
}

The above analysis of PBs and PBFs is demonstrated by the results in Tables~\ref{Table:exp5_conf1} and~\ref{Table:exp5_conf2}, and Figure~\ref{Fig:polytope_bound}.
Take the first convex polytope in Table~\ref{Table:exp5_conf1} as an example, the PBs are $\mathbf{z}_{11}^{(2)} > 0$ and $\mathbf{z}_{2}^{(3)} > 0$, whose PBFs in Figures~\ref{Fig:polytope_bound}(b)-(c) show the features of \emph{Ankle Boot} and \emph{Bag}, respectively.
Therefore, the convex polytope contains images of both \emph{Ankle Boot} and \emph{Bag}.
A careful study of the other results suggests that the PBFs of the convex polytopes are easy to understand and accurately describe the images in each convex polytope.

\nop{
This provide comprehensive insights into the behavior of a PLNN-NS.

In sum, the PBFs of the convex polytopes computed by $OpenBox$ are easy to understand and provide comprehensive insights into the behavior of a PLNN-NS.
}

\nop{
A careful study of the PBs of the other convex polytopes in Table~\ref{Table:exp5_conf1} and Table~\ref{Table:exp5_conf2} finds out that the PBFs in Figure~\ref{Fig:polytope_bound} accurately identifies the features of the images in each convex polytope.
}

\nop{
Similar results on FMNIST-2 can be observed from the PBFs in Figures~\ref{Fig:polytope_bound}(e)-(h) and the results in Table~\ref{Table:exp5_conf2}.
}

\nop{
As shown in Figures~\ref{Fig:polytope_bound}(b)-(c), the PBFs of $\mathbf{z}_{11}^{(2)} > 0$ and $\mathbf{z}_{2}^{(3)} > 0$ correspond to the features of \emph{Ankle Boot} and \emph{Bag}, respectively. 
}

\nop{
Therefore, the first convex polytope in Table~\ref{Table:exp5_conf1} contains images of both \emph{Ankle Boot} and \emph{Bag}.
Similar results on FMNIST-2 can be observed from the PBFs in Figures~\ref{Fig:polytope_bound}(e)-(h) and the results in Table~\ref{Table:exp5_conf2}.
}

We can also see that the PBFs in Figure~\ref{Fig:polytope_bound} look similar to the decision features of PLNN-NS in Figures~\ref{Fig:decision_boundary_89} and~\ref{Fig:decision_boundary_24}.
This shows the strong correlation between the features learned by different neurons of PLNN-NS, which is probably caused by the hierarchy network structure.
Due to the strong correlation between neurons, the number of configurations in $\mathcal{C}$ is much less than $k^N$, as shown in Table~\ref{Table:netstruct}.

Surprisingly, as shown in Table~\ref{Table:exp5_conf2}, the top-1 convex polytope on FMNIST-2 contains more than 98\% of the training instances. 
On these instances, the training accuracy of LLC is much higher than the training accuracies of LR-NS and LR-NSF.
This means that the training instances in the top-1 convex polytope are much easier to be linearly separated than all training instances in FMNIST-2.
From this perspective, the behavior of PLNN-NS is like a ``divide and conquer'' strategy, which set aside a small proportion of instances that hinder the classification accuracy such that the majority of the instances can be better separated by a LLC.
As shown by the top-2 and top-3 convex polytopes in Table~\ref{Table:exp5_conf2}, the set aside instances are grouped in their own convex polytopes, where the corresponding LLCs also achieve a very high accuracy.
Table~\ref{Table:exp5_conf1} shows similar phenomenon on FMNIST-1. However, since the instances in FMNIST-1 are easy to be linearly separated, the training accuracy of PLNN-NS marginally outperforms LR-NS and LR-NSF.

\nop{
For FMNIST-1, this is because the instances of \emph{Ankle Boot} and \emph{Bag} can be easily separated by a linear classifier. Therefore, the sparse constraint forces PLNN-NS to

 the top-1 convex polytopes on FMNIST-1 contains more than 90\% training instances.

Surprisingly, the top-1 convex polytopes on FMNIST-1 contains more than 90\% training instances.
This is because, the instances of \emph{Ankle Boot} and \emph{Bag} can be easily separated by a linear classifier, such that even LR achieves a

and FMNIST-2 both contain more than 90\% of the training instances.
For FMNIST-1, this is because the instances of \emph{Ankle Boot} and \emph{Bag} can be easily separated by a linear classifier, 

}

\nop{
For example, Figure~\ref{Fig:polytope_bound}(a) shows the PBFs of the polytope boundary $\mathbf{z}_6^{(2)} > 0$.

$\mathbf{z}_6^{(2)} = 0$ defines a hyperplane that cut the feature space $\mathcal{X}$ into two half spaces.

the images that have the PBFs in Figure~\ref{Fig:polytope_bound}(a) are contained in the half space defined by $\mathbf{z}_6^{(2)} > 0$; the other images that do not have the PBFs are contained in the half space of $\mathbf{z}_6^{(2)} \leq 0$.
}

\nop{Obviously, the PBFs in Figure~\ref{Fig:polytope_bound}(a), \ref{Fig:polytope_bound}(c) and \ref{Fig:polytope_bound}(d) are the features of \emph{Bag}, the PBFs in Figure~\ref{Fig:polytope_bound}(b) are the features of \emph{Ankle Boot}.}

\nop{
For example, Figure~\ref{Fig:polytope_bound}(a) shows the features of any instance $\mathbf{x}$ such that $\mathbf{z}_6^{(2)} > 0$.
 for shows the PBFs of $\mathbf{z}_6^{(2)} > 0$, 
}

\subsection{Can We Hack a Model Using OpenBox?}
\nop{\todo{Explain what you mean by ``hack''.}}

Knowing what an intelligent machine ``thinks'' provides us the privilege to ``hack'' it. 
Here, to hack a target model is to significantly change its prediction on an instance $\mathbf{x}\in\mathcal{X}$ by modifying as few features of $\mathbf{x}$ as possible.
In general, the biggest change of prediction is achieved by modifying the most important decision features.
A more precise interpretation on the target model reveals the important decision features more accurately, thus 
requires to modify less features to achieve a bigger change of prediction.
Following this idea, we apply LIME and $OpenBox$ to hack PLNN-NS, and compare the quality of their interpretations by comparing the change of PLNN-NS's prediction when modifying the same number of decision features.

\nop{
Precisely finding the decision features that dominates the classification of $\mathbf{x}$ is the key to achieve a high hacking performance.

To achieve a good hacking performance, we have to find

In general, modifying a small number of important decision features makes a bigger change of prediction than modifying a large number of unimportant features that are less relevant to the classification of $\mathbf{x}$.
This requires to find the important decision features as precise as possible.

In general, a more precise interpretation on the target model finds out more important decision features, thus requires to modify less features to achieve a significant change of prediction.
Following this idea, we apply LIME and $OpenBox$ to hack PLNN-NS, and compare the quality of their interpretations by comparing their hacking performance in a quantitative manner.
}

For an instance $\mathbf{x}\in\mathcal{X}$, denote by $\gamma\in\mathbb{R}^d$ the decision features for the classification of $\mathbf{x}$. We hack PLNN-NS by setting the values of a few top-weighted decision features in $\mathbf{x}$ to zero, such that the prediction of PLNN-NS on $\mathbf{x}$ changes significantly. 
The change of prediction is evaluated by two measures as follows.
First, the \textbf{change of prediction probability} (\textbf{CPP}) is the absolute change of the probability of classifying $\mathbf{x}$ as a positive instance.
Second, the \textbf{number of label-changed instance} (\textbf{NLCI}) is the number of instances whose predicted label changes after being hacked.
Again, due to the inefficiency of LIME, we use the sampled data sets in Section~\ref{sec:exact_consist} for evaluation.

In Figure~\ref{Fig:hack}, the average CPP and NLCI of $OpenBox$ are always higher than LIME on both data sets. 
This demonstrates that the interpretations computed by $OpenBox$ are more effective than LIME when they are applied to hack the target model.

Interestingly, the advantage of $OpenBox$ is more significant on FMNIST-1 than on FMNIST-2. This is because, as shown in Figure~\ref{Fig:exp2}(a), the prediction probabilities of most instances in FMNIST-1 are either 1.0 or 0.0, which provides little gradient information for LIME to accurately approximate the classification function of the PLNN-NS. 
In this case, the decision features computed by LIME cannot describe the exact behavior of the target model.

In summary, 
since $OpenBox$ produces the exact and consistent interpretations for a target model, it achieves an advanced hacking performance over LIME.

\nop{
as demonstrated by the advanced hacking performance $OpenBox$, has a great potential in discovering the weakness and testing the robustness of a PLNN model before its deployment.
}

\nop{
$OpenBox$ is highly effective in hacking the PLNN-NS, thus it has a great potential in discovering the weakness and testing the robustness of a PLNN model before deployment.
}

\newcommand{\parawidthExpHack}{42mm}
\begin{figure}[t]
\centering
\subfigure[FMNIST-1]{\includegraphics[width=\parawidthExpHack]{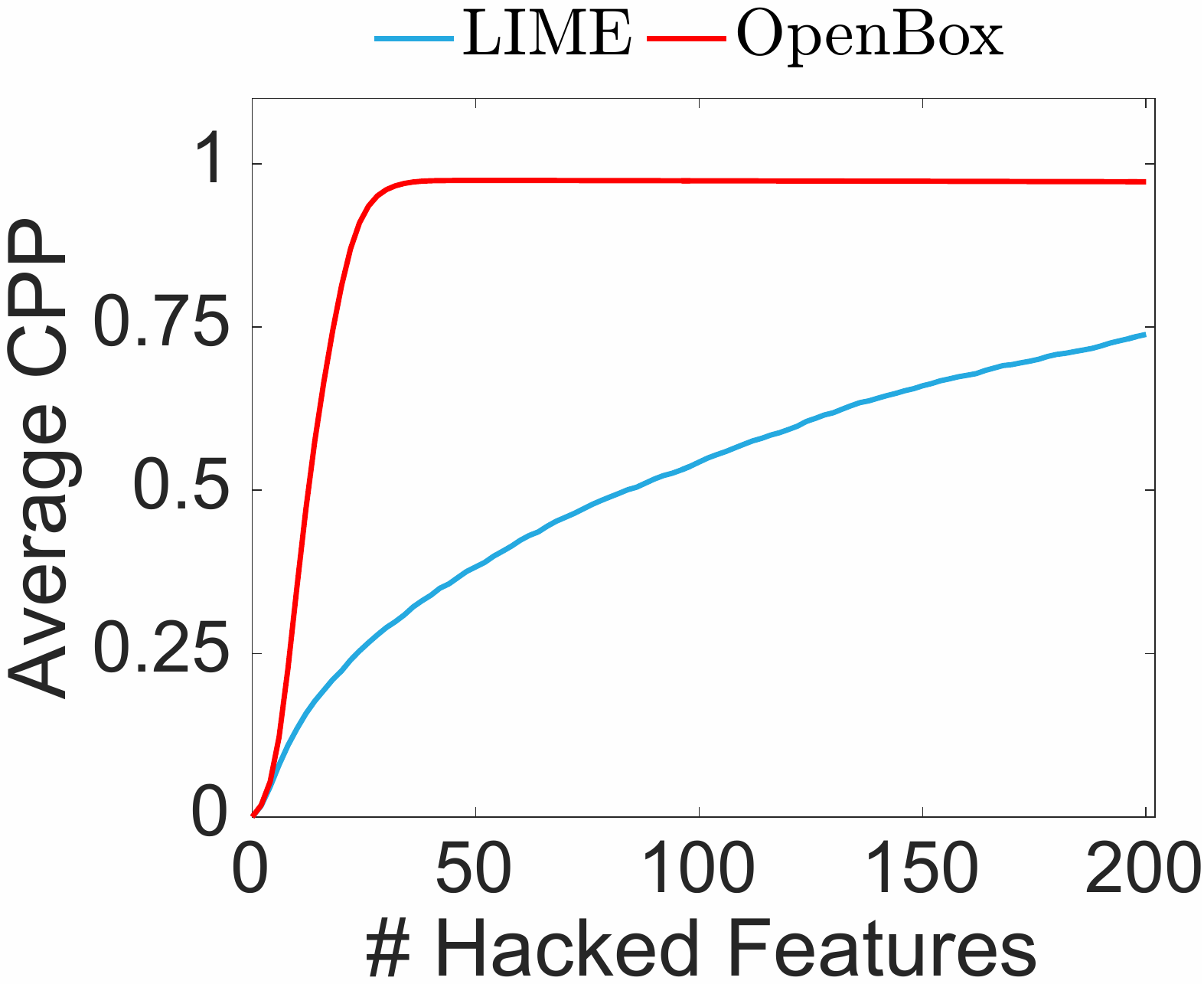}}
\subfigure[FMNIST-2]{\includegraphics[width=\parawidthExpHack]{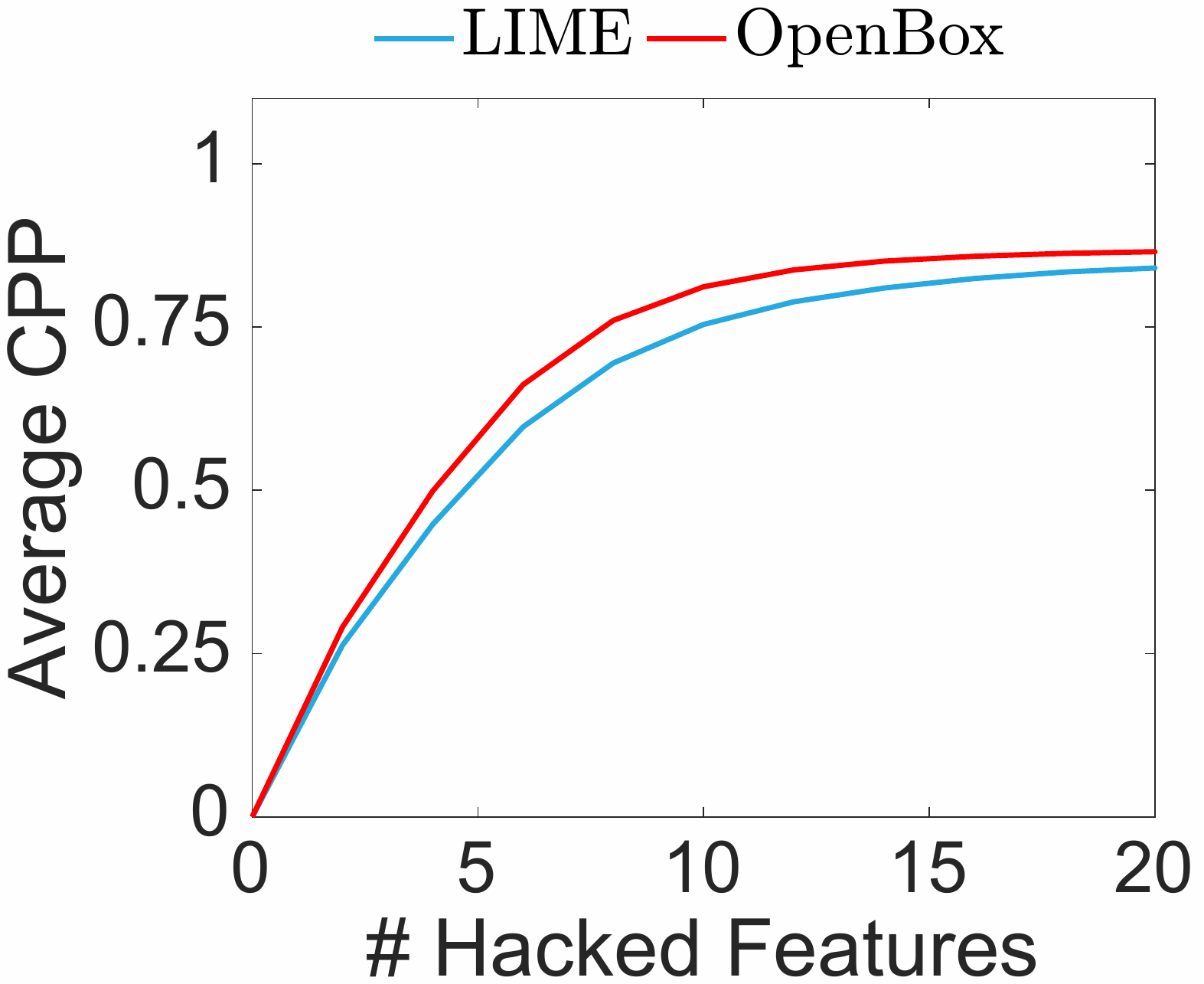}}
\subfigure[FMNIST-1]{\includegraphics[width=\parawidthExpHack]{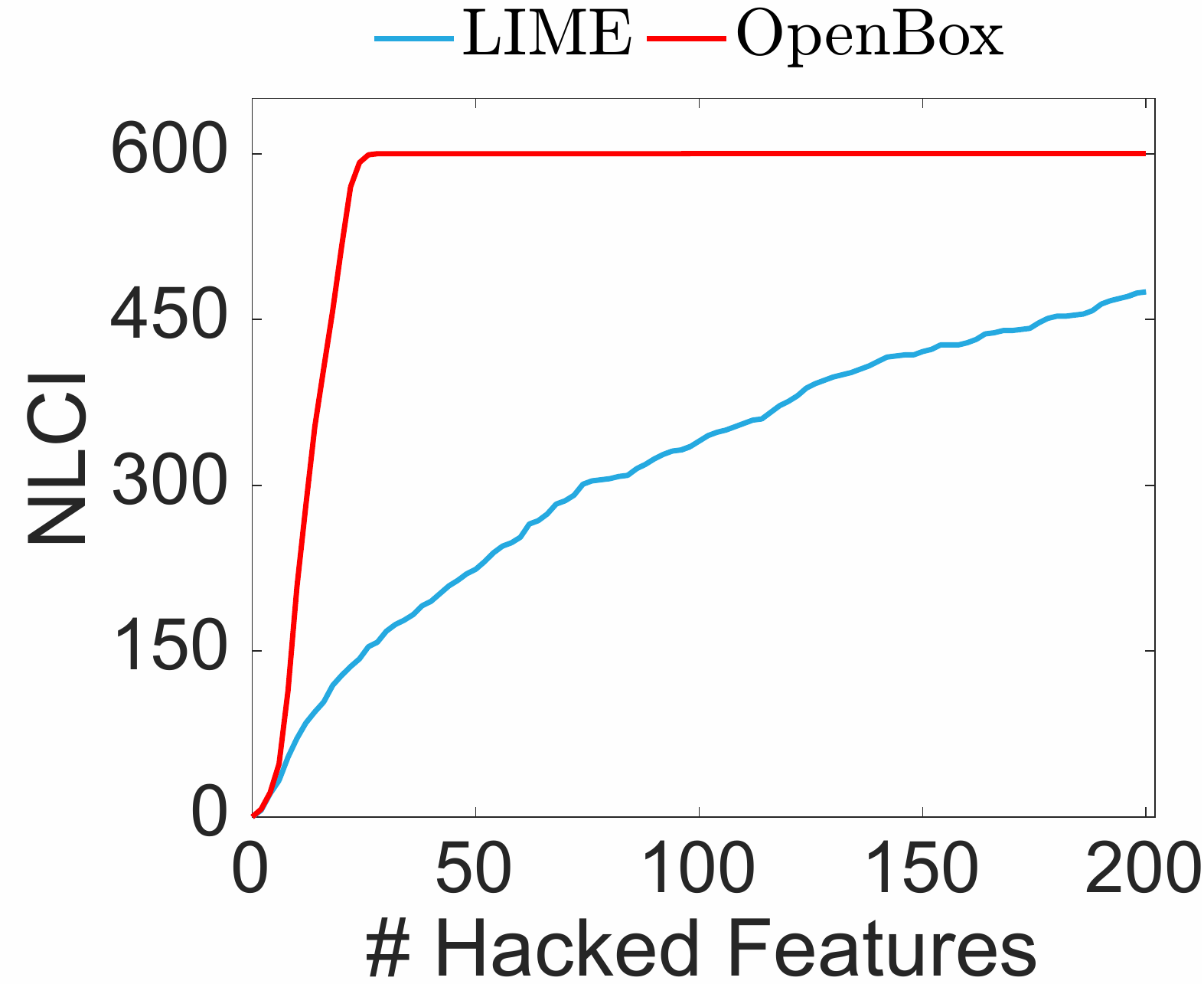}}
\subfigure[FMNIST-2]{\includegraphics[width=\parawidthExpHack]{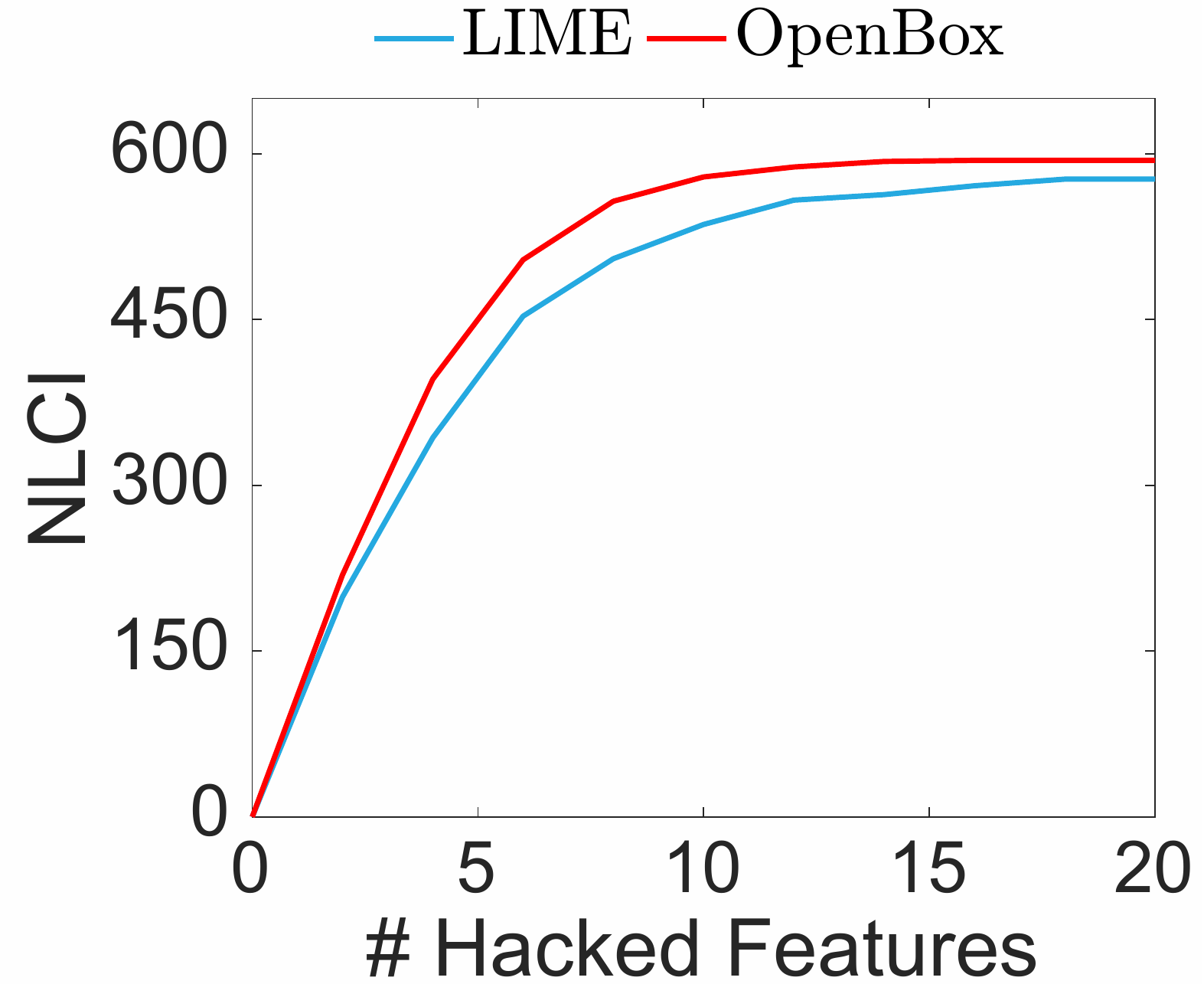}}
\caption{The hacking performance of LIME and $OpenBox$. (a)-(b) show the Average CPP. (c)-(d) show the NLCI.}
\label{Fig:hack}
\end{figure}

\subsection{Can We Debug a Model Using OpenBox?}
Intelligent machines are not perfect and predictions fail occasionally. 
When such failure occurs, we can apply $OpenBox$ to interpret why an instance is mis-classified.

Figure~\ref{Fig:debug_fmnist} shows some images that are mis-classified by PLNN-NS with a high probability.
In Figures~\ref{Fig:debug_fmnist}(a)-(c), the original image is a \emph{Coat}, however, since the scattered mosaic pattern on the cloth hits more features of \emph{Pullover} than \emph{Coat}, the original image is classified as a \emph{Pullover} with a high probability.
In Figures~\ref{Fig:debug_fmnist}(d)-(f), the original image is a \emph{Pullover}, however, it is mis-classified as a \emph{Coat} because the white collar and breast hit the typical features of \emph{Coat}, and the dark shoulder and sleeves miss the most significant features of \emph{Pullover}.
Similarly, the \emph{Ankle Boot} in Figure~\ref{Fig:debug_fmnist}(g) highlights more features on the upper left corner, thus it is mis-classified as a \emph{Bag}.
The \emph{Bag} in Figure~\ref{Fig:debug_fmnist}(j) is mis-classified as an \emph{Ankle Boot} because it hits the features of ankle and heel of \emph{Ankle Boot}, however, misses the typical features of \emph{Bag} on the upper left corner.

In conclusion, as demonstrated by Figure~\ref{Fig:debug_fmnist}, $OpenBox$ accurately interprets the mis-classifications, which is potentially useful in debugging abnormal behaviors of the interpreted model.


\newcommand{\parawidthExpDebugI}{13mm}
\begin{figure}[t]
\centering

\subfigure[CO]{\includegraphics[width=\parawidthExpDebugI]{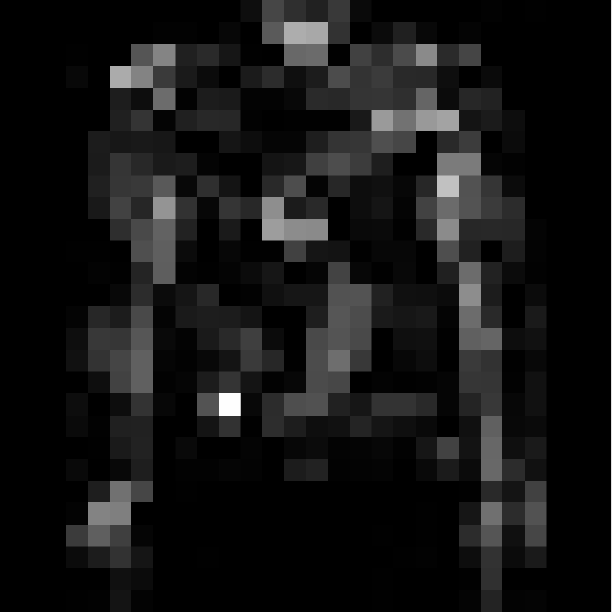}}
\subfigure[CO: 0.04]{\includegraphics[width=\parawidthExpDebugI]{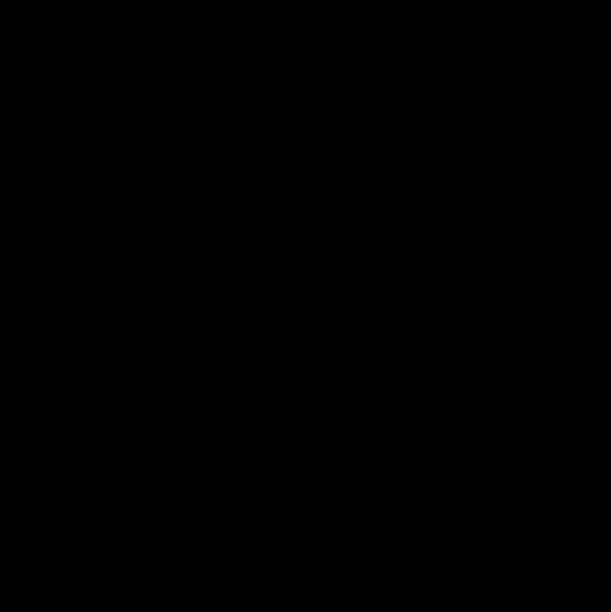}}
\subfigure[PU: 0.96]{\includegraphics[width=\parawidthExpDebugI]{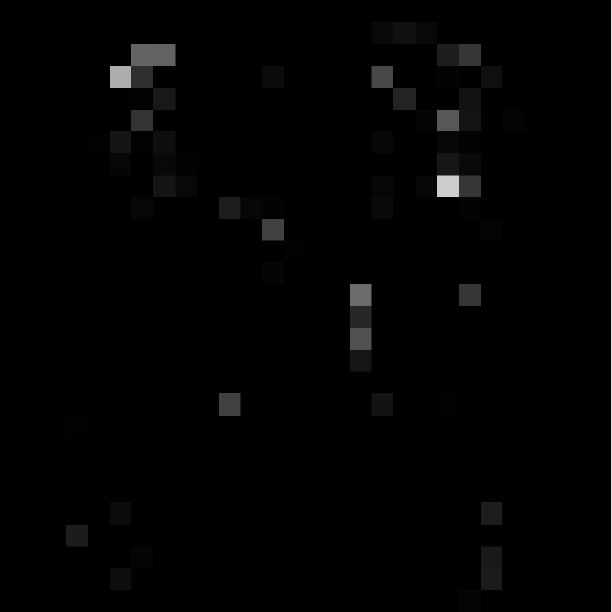}}
\hspace{2mm}
\subfigure[PU]{\includegraphics[width=\parawidthExpDebugI]{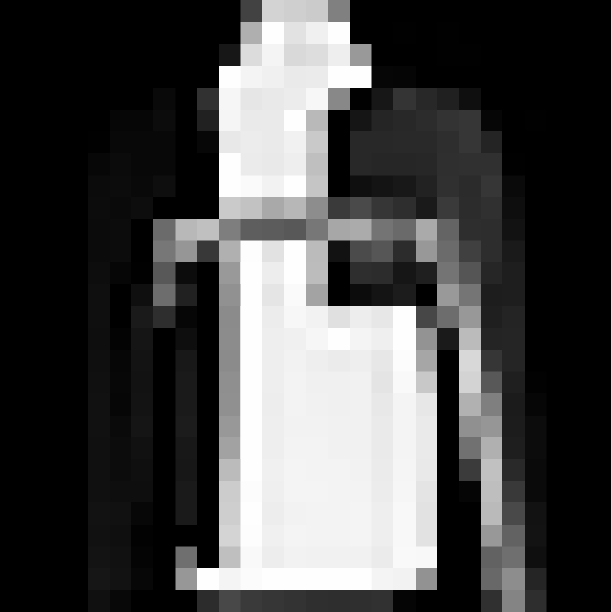}}
\subfigure[CO: 1.00]{\includegraphics[width=\parawidthExpDebugI]{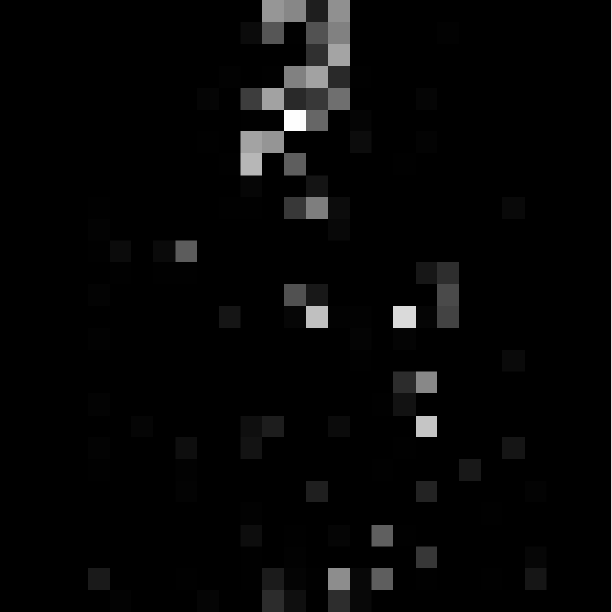}}
\subfigure[PU: 0.00]{\includegraphics[width=\parawidthExpDebugI]{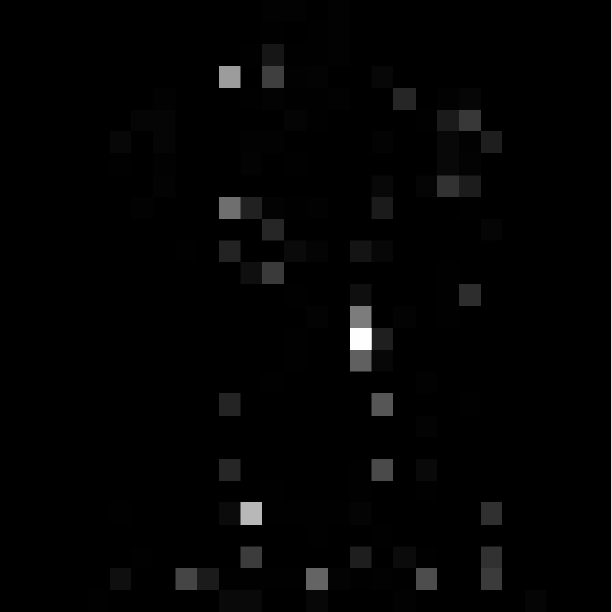}}
\subfigure[AB]{\includegraphics[width=\parawidthExpDebugI]{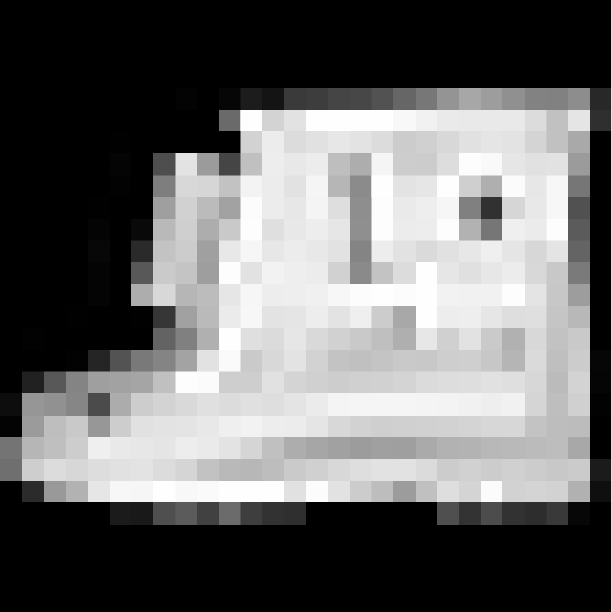}}
\subfigure[AB: 0.16]{\includegraphics[width=\parawidthExpDebugI]{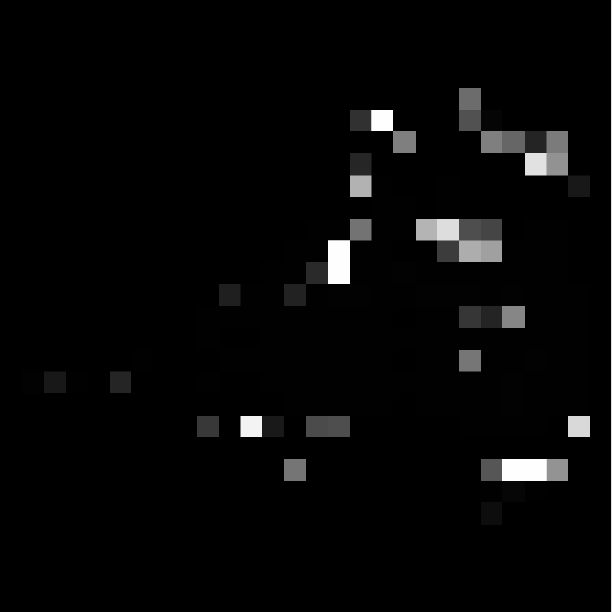}}
\subfigure[BG: 0.84]{\includegraphics[width=\parawidthExpDebugI]{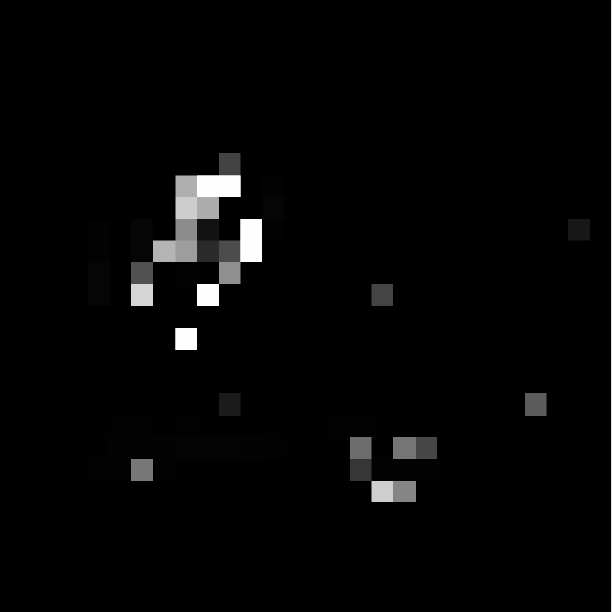}}
\hspace{2mm}
\subfigure[BG]{\includegraphics[width=\parawidthExpDebugI]{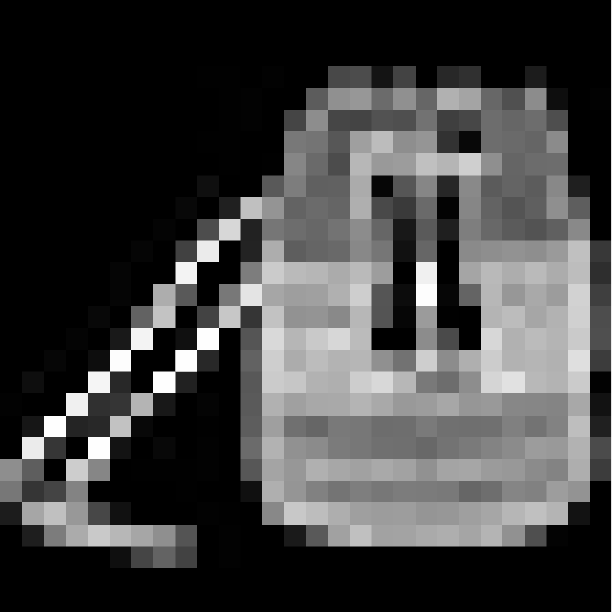}}
\subfigure[AB: 1.00]{\includegraphics[width=\parawidthExpDebugI]{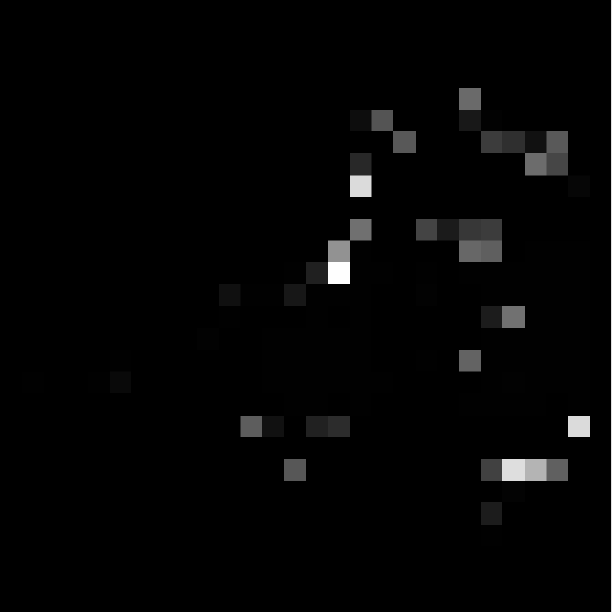}}
\subfigure[BG: 0.00]{\includegraphics[width=\parawidthExpDebugI]{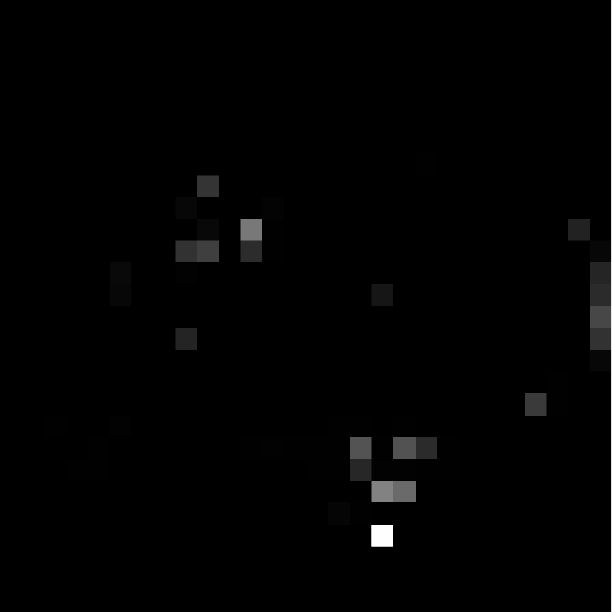}}
\caption{The mis-classified images of (a) \emph{Coat} (\textbf{CO}), (d) \emph{Pullover} (\textbf{PU}), (g) \emph{Ankle Boot} (\textbf{AB}), and (j) \emph{Bag} (\textbf{BG}).
(a), (d), (g) and (j) show the original images.
For the rest subfigures, the caption shows the prediction probability of the corresponding class; the image shows the decision features supporting the prediction of the corresponding class.
}
\label{Fig:debug_fmnist}
\end{figure}

\section{Conclusions and Future Work}
\label{sec:con}
In this paper, we tackle the challenging problem of interpreting PLNNs.
By studying the states of hidden neurons and the configuration of a PLNN, we prove that a PLNN is mathematically equivalent to a set of LLCs, which can be efficiently computed by the proposed $OpenBox$ method.
Extensive experiments show that the decision features and the polytope boundary features of LLCs provide exact and consistent interpretations on the overall behavior of a PLNN.
Such interpretations are highly effective in hacking and debugging PLNN models.
As future work, we will extend our work to interpret more general neural networks that adopt smooth activation functions, such as sigmoid and $\tanh$.

\nop{
propose $OpenBox$ to compute a set of local linear classifiers that is mathematically equivalent to the target PLNN to be interpreted.

First, we prove that a PLNN is mathematically equivalent to a set of local linear classifiers, each of which is a linear classifier that classifies a group of instances within a convex polytope in the input space.

Second, We propose a method named $OpenBox$ to provide an exact interpretation of a PLNN by computing its equivalent set of local linear classifiers in closed form.

Third, we explain the classification result of each instance by the decision features of its local linear classifier. Since all instances in the same convex polytope share exactly the same local linear classifier, our explanations are consistent per convex polytope.

Fourth, we also apply $OpenBox$ to study the effect of non-negative and sparse constraints on the interpretability of PLNNs. We find that a PLNN trained with these constraints selects meaningful features that dramatically improve the interpretability.

Last, we conduct extensive experiments on both synthetic and real-world data sets to demonstrate the effectiveness of our method.
}

\bibliographystyle{ACM-Reference-Format}
\bibliography{Reference}

\end{sloppy}
\end{document}